\title{Fast greedy algorithms for dictionary selection\\with generalized sparsity constraints}
\author{Kaito Fujii \\ 
Graduate School of Information Sciences and Technology\\ 
The University of Tokyo\\
\texttt{kaito\_fujii@mist.i.u-tokyo.ac.jp}
\And 
Tasuku Soma\\ 
Graduate School of Information Sciences and Technology\\ 
The University of Tokyo\\
\texttt{tasuku\_soma@mist.i.u-tokyo.ac.jp}
}
\newtheorem{theorem}{Theorem}[section]
\newtheorem{lemma}[theorem]{Lemma}
\newtheorem{proposition}[theorem]{Proposition}
\newtheorem{corollary}[theorem]{Corollary}
\theoremstyle{definition}
\newtheorem{example}[theorem]{Example}
\newtheorem{definition}[theorem]{Definition}
\newtheorem{remark}[theorem]{Remark}
\newcommand{\R}{\mathbb{R}}
\newcommand{\Z}{\mathbb{Z}}
\newcommand{\caA}{\mathcal{A}}
\newcommand{\caF}{\mathcal{F}}
\newcommand{\caI}{\mathcal{I}}
\newcommand{\rme}{\mathrm{e}}
\newcommand{\ba}{\mathbf{a}}
\newcommand{\bw}{\mathbf{w}}
\newcommand{\bx}{\mathbf{x}}
\newcommand{\by}{\mathbf{y}}
\newcommand{\bz}{\mathbf{z}}
\newcommand{\bA}{\mathbf{A}}
\newcommand{\bX}{\mathbf{X}}
\newcommand{\bY}{\mathbf{Y}}
\newcommand{\bchi}{\mathbf{e}}
\newcommand{\bfzero}{\mathbf{0}}
\newcommand{\bfone}{\mathbf{1}}
\newcommand{\supp}{\mathrm{supp}}
\DeclarePairedDelimiter{\abs}{\lvert}{\rvert}
\DeclarePairedDelimiter{\norm}{\lVert}{\rVert}
\DeclarePairedDelimiter{\inprod}{\langle}{\rangle}
\DeclareMathOperator{\regret}{regret}
\DeclareMathOperator*{\E}{\mathbf{E}}
\DeclareMathOperator*{\argmax}{argmax}
\newcommand{\sigmamax}{\sigma_{\mathrm{max}}}
\newcommand{\sigmamin}{\sigma_{\mathrm{min}}}
\newcommand{\OPT}{v^*}
\renewcommand{\emptyset}{\varnothing}
\newcommand{\ReplacementOMP}{\textsf{Replacement OMP}}
\newcommand{\ReplacementGreedy}{\textsf{Replacement Greedy}}
\newcommand{\OMPEvaluation}{$\textsf{SDS}_\textsf{OMP}$}
\newcommand{\ModularApproximation}{$\textsf{SDS}_\textsf{MA}$}
\begin{document}

\maketitle

\begin{abstract}
    In dictionary selection, several atoms are selected from finite candidates that successfully approximate given data points in the sparse representation. 
    We propose a novel efficient greedy algorithm for dictionary selection.
    Not only does our algorithm work much faster than the known methods, but it can also handle more complex sparsity constraints, such as average sparsity.
    Using numerical experiments, we show that our algorithm outperforms the known methods for dictionary selection, achieving competitive performances with dictionary learning algorithms in a smaller running time.
\end{abstract}

\allowdisplaybreaks
\section{Introduction}
Learning sparse representations of data and signals has been extensively studied for the past decades in machine learning and signal processing~\citep{Foucart2013}.
In these methods, a specific set of basis signals (atoms), called a \emph{dictionary}, is required and used to approximate a given signal in a sparse representation.
The design of a dictionary is highly nontrivial, and many studies have been devoted to the construction of a good dictionary for each signal domain, such as natural images and sounds.
Recently, approaches to construct a dictionary from data have shown the state-of-the-art results in various domains.
The standard approach is called \emph{dictionary learning}~\citep{Arora2014,Zhou2009,Agarwal2016}.
Although many studies have been devoted to dictionary learning, it is usually difficult to solve, requiring a non-convex optimization problem that often suffers from local minima. 
Also, standard dictionary learning methods (e.g., MOD \citep{Engan1999} or $k$-SVD \citep{Aharon2006}) require a heavy time complexity.

\citet{Krause2010} proposed a combinatorial analogue of dictionary learning, called \emph{dictionary selection}.
In dictionary selection, given a finite set of candidate atoms, a dictionary is constructed by selecting a few atoms from the set. 
Dictionary selection could be faster than dictionary learning due to its discrete nature.
Another advantage of dictionary selection is that the approximation guarantees hold even in agnostic settings, i.e., we do not need stochastic generating models of the data.
Furthermore, dictionary selection algorithms can be used for \emph{media summarization}, in which the atoms must be selected from given data points~\cite{Cong2012,Cong2017}.

The basic dictionary selection is formalized as follows. 
Let $V$ be a finite set of candidate atoms and $n = \abs{V}$. 
Throughout the paper, we assume that the atoms are unit vectors in $\R^d$ without loss of generality.
We represent the candidate atoms as a matrix $\bA \in \R^{d\times n}$ whose columns are the atoms in $V$.
Let $\by_t \in \R^d$ ($t \in [T]$) be data points, where $[T] = \{1, \dots, T\}$, and $k$ and $s$ be positive integers with $k \geq s$.
We assume that a utility function $u : \R^d  \times \R^d \to \R_+$ exists, which measures the similarity of the input vectors.
For example, one can use the $\ell^2$-utility function $u(\by, \bx) = \norm{\by}_2^2 - \norm{\by - \bx}_2^2$ as in \citet{Krause2010}.
Then, the dictionary selection finds a set $X \subseteq V$ of size $k$ that maximizes 
\begin{align}
    h(X) = \sum_{t = 1}^T \max_{\bw \in \R^k \colon \norm{\bw}_0 \leq s} u(\by_t, \bA_X \bw),
\end{align}
where $\norm{\bw}_0$ is the number of nonzero entries in $\bw$ and $\bA_X$ is the column submatrix of $\bA$ with respect to $X$.
That is, we approximate a data point $\by_t$ with a sparse representation in atoms in $X$, where the approximation quality is measured by $u$.
Letting $f_t(Z_t) := \max_{\bw} u(\by_t, \bA_{Z_t}\bw)$ ($t \in [T]$), we can rewrite this as the following \emph{two-stage optimization}:
$h(X) = \sum_{t = 1}^T \max_{Z_t \subseteq X \colon \abs{Z_t} \leq s} f_t(Z_t)$.
Here $Z_t$ is the set of atoms used in a sparse representation of data point $\by_t$. 
The main challenges in dictionary selection are that the evaluation of $h$ is NP-hard in general~\citep{Natarajan1995}, and the objective function $h$ is not submodular~\citep{Fujishige2005} and therefore the well-known greedy algorithm~\citep{Nemhauser1978a} cannot be applied. 
The previous approaches construct a good proxy of dictionary selection that can be easily solved, and analyze the approximation ratio.





\subsection{Our contribution}
Our main contribution is a novel and efficient algorithm called the \emph{replacement orthogonal matching pursuit (\ReplacementOMP{})}  for dictionary selection.
This algorithm is based on a previous approach called \ReplacementGreedy{}~\citep{Stan2017} for \emph{two-stage submodular maximization}, a similar problem to dictionary selection.
However, the algorithm was not analyzed for dictionary selection.
We extend their approach to dictionary selection in the present work, with an additional improvement that exploits techniques in orthogonal matching pursuit.
We compare our method with the previous methods in \Cref{table:comparison}.
\ReplacementOMP{} has a smaller running time than \OMPEvaluation~\citep{Das2011} and \ReplacementGreedy. 
The only exception is \ModularApproximation{}~\citep{Das2011}, which intuitively ignores any correlation of the atoms. 
In our experiment, we demonstrate that \ReplacementOMP{} outperforms \ModularApproximation{} in terms of test residual variance.
We note that the constant approximation ratios of \ModularApproximation{}, \ReplacementGreedy, and \ReplacementOMP{} are incomparable in general.
In addition, we demonstrate that \ReplacementOMP{} achieves a competitive performance with dictionary learning algorithms in a smaller running time, in numerical experiments.

\paragraph{Generalized sparsity constraint}
Incorporating further prior knowledge on the data domain often improves the quality of dictionaries~\citep{Rubinstein2010,Rusu2014,Dumitrescu2018dictionary}.
A typical example is a combinatorial constraint independently imposed on each support $Z_t$. This can be regarded as a natural extension of the \emph{structured sparsity}~\cite{Huang2009} in sparse regression, which requires the support to satisfy some combinatorial constraint, rather than a cardinality constraint.
A \emph{global structure} of supports is also useful prior information.
\citet{Cevher2011} proposed a global sparsity constraint called the \emph{average sparsity}, in which they add a global constraint $\sum_{t=1}^T \abs{Z_t} \leq s'$.
Intuitively, the average sparsity constraint requires that the most data points can be represented by a small number of atoms. 
If the data points are patches of a natural image, most patches are a simple background, and therefore the number of the total size of the supports must be small.
The average sparsity has been also intensively studied in dictionary learning~\cite{Dumitrescu2018dictionary}.
To deal with these generalized sparsities in a unified manner, we propose a novel class of sparsity constraints, namely \textit{$p$-replacement sparsity families}. 
We prove that \ReplacementOMP{} can be applied for the generalized sparsity constraint with a slightly worse approximation ratio.
We emphasize that the OMP approach is essential for \emph{efficiency};
in contrast, \ReplacementGreedy{} cannot be extended to the average sparsity setting because it can only handle local constraints on $Z_t$, and yields an exponential running time.

{
\begin{table}[t]
\rowcolors{1}{gray!15}{white}
    \centering
    \caption{Comparison of known methods with \ReplacementOMP. The constants $m_s$, $M_s$, and $M_{s,2}$ are the restricted concavity and smoothness constants of $u(\by_t, \cdot)$ ($t \in [T]$); see \Cref{sec:pre}. The running time is from the $\ell^2$-utility function $u$ and the individual sparsity constraint.}\label{table:comparison}
\begin{tabular}{cccp{1.5cm}}
        \hline
        Method & Approximation ratio & Running time & Generalized sparsity \\\hline 
        \ModularApproximation~\cite{Krause2010} & $\frac{m_1 m_s}{M_1 M_s}  (1-1/\mathrm{e})$~\cite{Das2011} & $\mathrm{O}((k+d)nT)$ & No \\
        \OMPEvaluation~\cite{Krause2010} & $\mathrm{O}(1/k)$~\cite{Das2011} & $\mathrm{O}((s+k)sdknT)$ & No \\
        \ReplacementGreedy~\cite{Stan2017} & $\left(\frac{m_{2s}}{M_{s,2}}\right)^2\left(1-\exp\left(-\frac{M_{s,2}}{m_{2s}}\right)\right)$ & $\mathrm{O}(s^2dknT)$ & No \\
        \bf this paper & $\left(\frac{m_{2s}}{M_{s,2}}\right)^2\left(1-\exp\left(-\frac{M_{s,2}}{m_{2s}}\right)\right)$ & \bf $\mathrm{O}((n+ds)kT)$ & Yes
    \end{tabular}
\end{table}
}

\paragraph{Online extension}
In some practical situations, it is not always feasible to store all data points $\by_t$, but these data points arrive in an online fashion.
We show that \ReplacementOMP{} can be extended to the online setting, with a sublinear approximate regret. 
The details are given in \Cref{sec:extension}.

\subsection{Related work}
\citet{Krause2010} first introduced dictionary selection as a combinatorial analogue of dictionary learning. 
They proposed \ModularApproximation{} and \OMPEvaluation{}, and analyzed the approximation ratio using the \emph{coherence} of the matrix $\bA$.
\citet{Das2011} introduced the concept of the \emph{submodularity ratio} and refined the analysis via the \emph{restricted isometry property}~\citep{Candes2005}. 
A connection to the restricted concavity and submodularity ratio has been investigated by \citet{Elenberg2016,Khanna2017} for sparse regression and matrix completion.
\citet{Balkanski2016} studied two-stage submodular maximization as a submodular proxy of dictionary selection, devising various algorithms.
\citet{Stan2017} proposed \ReplacementGreedy{} for two-stage submodular maximization.
It is unclear that these methods provide an approximation guarantee for the original dictionary selection.

To the best of our knowledge, there is no existing research in the literature that addresses online dictionary selection.
For a related problem in sparse optimization, namely \emph{online linear regression}, \citet{Kale2017} proposed an algorithm based on \emph{supermodular minimization}~\citep{Liberty2017} with a sublinear approximate regret guarantee.
\citet{Elenberg2017} devised a streaming algorithm for weak submodular function maximization. 
\citet{Chen2018} dealt with online maximization of weakly DR-submodular functions.

\paragraph{Organization}
The rest of this paper is organized as follows.
\Cref{sec:pre} provides the basic concepts and definitions.
\Cref{sec:general} formally defines dictionary selection with generalized sparsity constraints.
\Cref{sec:alg} presents our algorithm, \ReplacementOMP{}. 
\Cref{sec:extension} sketches the extension to the online setting.
The experimental results are presented in \Cref{sec:exp}.

\section{Preliminaries}\label{sec:pre}

\paragraph{Notation}
For a positive integer $n$, $[n]$ denotes the set $\{1,2,\dots, n\}$.
The sets of reals and nonnegative reals are denoted by $\R$ and $\R_{\ge 0}$, respectively.
We similarly define $\Z$ and $\Z_{\ge 0}$.
Vectors and matrices are denoted by lower and upper case letters in boldface, respectively: $\ba, \bx, \by$ for vectors and $\bA, \bX, \bY$ for matrices.
The $i$th standard unit vector is denoted by $\bchi_i$; 
that is, $\bchi_i$ is the vector such that its $i$th entry is equal to one and all other entries are zero.
For a matrix $\bA \in \R^{d \times n}$ and $X \subseteq [n]$, $\bA_X$ denotes the column submatrix of $\bA$ with respect to $X$.
The maximum and minimum singular values of a matrix $\bA$ are denoted by $\sigmamax(\bA)$ and $\sigmamin(\bA)$, respectively.
For a positive integer $k$, we define $\sigmamax(\bA, k) \coloneqq \max_{X \subseteq [n]\colon \abs{X} \leq k} \sigmamax(\bA_X)$. 
We define $\sigmamin(\bA, k)$ in a similar way.
For $t \in [T]$, let $u_t(\bw):= u(\by_t, \bA \bw)$.
Let $\bw^{(Z_t)}_t$ denote the maximizer of $u_t(\bw)$ subject to $\supp(\bw) \subseteq Z_t$. 
Throughout the paper, $V$ denotes the fixed finite ground set.
For $X \subseteq V$ and $a \in V \setminus X$, we define $X + a \coloneqq X \cup \{a\}$.
Similarly, for $a \in V \setminus X$ and $b \in X$, we define $X - b + a \coloneqq (X \setminus \{b\})\cup\{a\}$.

\subsection{Restricted concavity and smoothness}
The following concept of restricted strong concavity and smoothness is crucial in our analysis.
    \begin{definition}[{Restricted strong concavity and restricted smoothness~\citep{negahban2012}}]
	Let $\Omega$ be a subset of $\R^d \times \R^d$ and $u \colon \R^d \to \R$ be a continuously differentiable function.
	We say that $u$ is \emph{restricted strongly concave} with parameter $m_\Omega$ and \emph{restricted smooth} with parameter $M_\Omega$ if,
	\begin{align*}
			- \frac{m_\Omega}{2} \norm{\by - \bx}_2^2
			\ge u(\by) - u(\bx) - \inprod{\nabla u (\bx), \by - \bx} 
			\ge - \frac{M_\Omega}{2} \norm{\by - \bx}^2_2
	\end{align*}
    for all $(\bx, \by) \in \Omega$.
\end{definition}

We define $\Omega_{s, p} \coloneqq \{ (\bx, \by) \in \R^d \times \R^d \colon \norm{\bx}_0, \norm{\by}_0 \leq s, \norm{\bx - \by}_0 \leq p \}$ and $\Omega_s \coloneqq \Omega_{s,s}$ for positive integers $s$ and $p$.
We often abbreviate $M_{\Omega_s}$, $M_{\Omega_{s,p}}$, and $m_{\Omega_s}$ as $M_{s}$, $M_{s,p}$, and $m_{s}$, respectively.

\section{Dictionary selection with generalized sparsity constraints}\label{sec:general}
In this section, we formalize our problem, \emph{dictionary selection with generalized sparsity constraints}. 
In this setting, the supports $Z_t$ for each $t \in [T]$ cannot be independently selected, but we impose a global constraint on them.
We formally write such constraints as a down-closed~\footnote{A set family $\caI$ is said to be down-closed if $X \in \caI$ and $Y \subseteq X$ then $Y \in \caI$.} family $\mathcal{I} \subseteq \prod_{t = 1}^T 2^V$.
Therefore, we aim to find $X \subseteq V$ with $\abs{X} \leq k$ maximizing 
\begin{align}
    h(X) = \max_{Z_1, \dots, Z_t \subseteq X \mid (Z_1, \dots, Z_t) \in \caI} \sum_{t=1}^T f_t(Z_t) 
\end{align}

Since a general down-closed family is too abstract, we focus on the following class.
First, we define the set of \emph{feasible replacements} for the current support $Z_1,\cdots,Z_T$ and an atom $a$ as
\begin{equation}\label{eq:feasible-replacement}
    \mathcal{F}_{a}(Z_1,\cdots,Z_T)  = \left\{(Z'_1, \cdots, Z'_T) \in \mathcal{I} \mid Z'_t \subseteq Z_T + a, \, |Z_t \setminus Z'_t| \le 1 ~ (\forall t \in [T]) \right\}.
\end{equation}
That is, the set of members in $\caI$ obtained by adding $a$ and removing at most one element from each $Z_t$.
Let $\mathcal{F}(Z_1,\cdots,Z_T) = \bigcup_{a \in V} \mathcal{F}_{a}(Z_1,\cdots,Z_T)$.
If $Z_1, \dots, Z_T$ are clear from the context, we simply write it as $\caF_{a}$.

\begin{definition}[{$p$-replacement sparsity}]
    A sparsity constraint $\mathcal{I} \subseteq \prod_{t = 1}^T 2^V$ is \emph{$p$-replacement sparse} if for any $(Z_1,\dots,Z_T), (Z_1^*, \dots, Z_T^*) \in \mathcal{I}$, there is a sequence of $p$ feasible replacements $(Z^{p'}_1,\dots,Z^{p'}_T) \in \caF(Z_1,\dots,Z_T)$ ($p' \in [p]$) such that each element in $Z^*_t \setminus Z_t$ appears at least once in the sequence $(Z^{p'}_t \setminus Z_t)_{p'=1}^p$ and each element in $Z_t \setminus Z_t^*$ appears at most once in the sequence $(Z_t \setminus Z^{p'}_t)_{p'=1}^{p}$.
\end{definition}

The following sparsity constraints are all $p$-replacement sparsity families.
See \Cref{sec:general-appendix} for proof.

\begin{example}[individual sparsity]
The sparsity constraint for the standard dictionary selection can be written as $\mathcal{I} = \{(Z_1,\cdots,Z_T) \mid |Z_t| \le s ~ (\forall t \in [T])\}$. 
We call it \textit{the individual sparsity constraint}. 
This constraint is a special case of an individual matroid constraint, described below.
\end{example}

\begin{example}[individual matroids]
This was proposed by \citep{Stan2017} as a sparsity constraint for two-stage submodular maximization. 
An \emph{individual matroid constraint} can be written as $\mathcal{I} = \{(Z_1,\cdots,Z_T) \mid Z_t \in \mathcal{I}_t ~ (\forall t \in [T])\}$ where $(V, \mathcal{I}_t)$ is a matroid\footnote{A \textit{matroid} is a pair of a finite ground set $V$ and a non-empty down-closed family $\mathcal{I} \subseteq 2^V$ that satisfy that for all $Z, Z' \in \mathcal{I}$ with $|Z| < |Z'|$, there is an element $a \in Z' \setminus Z$ such that $Z \cup \{a\} \in \mathcal{I}$} for each $t \in [T]$. 
An individual sparsity constraint is a special case of an individual matroid constraint where $(V, \mathcal{I}_t)$ is the uniform matroid for all $t$.
\end{example}

\begin{example}[block sparsity]
Block sparsity was proposed by \citet{Krause2010}. 
This sparsity requires that the support must be sparse within each prespecified block. 
That is, disjoint blocks $B_1,\cdots,B_b \subseteq [T]$ of data points are given in advance, and an only small subset of atoms can be used in each block. 
    Formally, $\mathcal{I} = \{(Z_1,\cdots,Z_T) \mid | \bigcup_{t \in B_{b'}} Z_t | \le s_{b'}~(\forall b' \in [b]) \}$ where $s_{b'} \in \Z_+$ for each $b' \in [b]$ are sparsity parameters.
\end{example}

\begin{example}[{average sparsity~\citep{Cevher2011}}] 
This sparsity imposes a constraint on the average number of used atoms among all data points. The number of atoms used for each data point is also restricted. 
Formally, $\mathcal{I} = \{(Z_1,\cdots,Z_T) \mid |Z_t| \le s_t, \sum_{t = 1}^T |Z_t| \le s' \}$ where $s_t \in \Z_{+}$ for each $t \in [T]$ and $s' \in \Z_{+}$ are sparsity parameters.
\end{example}

\begin{proposition}
    The replacement sparsity parameters of individual matroids, block sparsity, and average sparsity are upper-bounded by $k$, $k$, and $3k - 1$, respectively. 
\end{proposition}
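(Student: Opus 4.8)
The plan is to prove all three bounds uniformly by exhibiting, for an arbitrary pair $(Z_1,\dots,Z_T),(Z_1^*,\dots,Z_T^*)\in\caI$, an explicit sequence of feasible replacements meeting the coverage and multiplicity conditions of the definition. In every case I would group the replacements \emph{by the atom $a$ being inserted}: every atom that must be added lies in $\bigcup_t(Z_t^*\setminus Z_t)\subseteq\bigcup_t Z_t^*$, which sits inside a dictionary of at most $k$ atoms, so there are at most $k$ distinct atoms to insert, and a single feasible replacement $\caF_a$ can add $a$ to \emph{all} supports that need it while discarding at most one element from each. The whole task then reduces to deciding, for each support and each inserted atom, which element to discard so that the resulting tuple stays in $\caI$ and each element of $Z_t\setminus Z_t^*$ is discarded at most once over the whole sequence.

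For individual matroids I would fix a support $t$ and apply the matroid exchange property to the independent sets $Z_t,Z_t^*\in\caI_t$: for each $e\in Z_t^*\setminus Z_t$ either $Z_t+e\in\caI_t$, or the fundamental circuit of $e$ over $Z_t$ meets $Z_t\setminus Z_t^*$ (it cannot lie inside the independent set $Z_t^*$). Hall's theorem applied to these candidate-discard sets, justified by submodularity of the rank function, yields an injection $\phi_t\colon Z_t^*\setminus Z_t\to(Z_t\setminus Z_t^*)\cup\{\star\}$ with $Z_t-\phi_t(e)+e\in\caI_t$ and distinct non-$\star$ values. Using one replacement per inserted atom—adding $a$ to every support needing it and discarding $\phi_t(a)$—produces at most $k$ replacements satisfying both conditions. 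Block sparsity is handled the same way, with the uniform matroid on each block union $U_{b'}:=\bigcup_{t\in B_{b'}}Z_t$ in place of $\caI_t$: inserting $a$ raises a block union by at most one, so at most one discard per block per move is needed, and the counting inequality $\abs{\bigcup_{t\in B_{b'}}Z_t^*}\le s_{b'}$ guarantees enough distinct removable atoms. One move per atom again gives the bound $k$.

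The average sparsity case is the main obstacle and the reason the bound degrades to $3k-1$. Here $\caI$ is the intersection of the per-support caps $\abs{Z_t}\le s_t$ with the single global constraint $\sum_t\abs{Z_t}\le s'$, and the essential difficulty is that every feasible replacement is measured against the \emph{original} tuple, so one cannot ``first delete, then insert'': each insertion that raises the global count must be compensated by a deletion \emph{within the same move}, while a move deletes at most one element per support. I would split the additions into those that can be paired with a deletion from their own support (handled, as above, by one move per atom, costing at most $k$ moves) and the residual additions to supports with no spare element, which must borrow a deletion from a surplus support. Modelling the second group as an assignment between the at most $k$ inserted atoms and the surplus supports, and using $\sum_t\abs{Z_t^*}\le\sum_t\abs{Z_t}+(s'-\sum_t\abs{Z_t})$ to guarantee enough surplus deletions, I would bound the number of extra moves.

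The hard part is precisely bounding this second phase: since a move fixes one inserted atom and allows only one deletion per support, surplus concentrated in few supports cannot be drained quickly, and the assignment must be scheduled into moves that are disjoint ``stars'' at a single atom with distinct surplus supports. I expect the clean way to close this is a charging argument attributing each move either to one of the $\le k$ inserted atoms or to one of the $\le k$ removable atoms (bounded since $\abs{\bigcup_t Z_t}\le k$), with an off-by-one from the slack term, giving $3k-1$. Verifying that such a schedule always exists—that the one-deletion-per-support limit never forces more than $3k-1$ moves in total—is the step I would spend the most care on.
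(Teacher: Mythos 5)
Your arguments for individual matroids and block sparsity are correct and essentially the paper's own: one replacement per atom of $X^*$, with discards chosen by a per-support exchange injection (the paper pads $Z_t, Z_t^*$ to bases and uses the bijective basis-exchange property where you invoke Hall's theorem; both are standard). The genuine gap is in the average sparsity case, and it is exactly the step you yourself flagged as unverified: you never construct the schedule for the second phase, and without it there is nothing to charge. This is not a routine verification, it is the heart of the proposition. Note also that your phase split (pair an addition with a deletion from its own support vs.\ borrow from a surplus support) is not the split that works: the paper first spends $k$ replacements to resolve precisely the supports whose \emph{individual cap is tight}, $|Z_t| = s_t$ (for such $t$, any needed addition forces $Z_t \setminus Z_t^* \neq \emptyset$, since $|Z_t^*| \le s_t$, so a same-support discard always exists and both the cap and the global budget stay satisfied), and after that phase only the global constraint $\sum_t |Z_t| \le s'$ is relevant.

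For that residual problem the paper proves a separate lemma ($2k-1$ replacements suffice when only the global constraint is present) via an explicit two-pointer merge, which is the construction your sketch is missing. Order the atoms of $X = \bigcup_t Z_t$ and $X^* = \bigcup_t Z_t^*$ arbitrarily; while examining the current pair $(a, a^*)$, compare $\tau = |\{t : a \in Z_t \setminus Z_t^*\}|$ with $\tau^* = |\{t : a^* \in Z_t^* \setminus Z_t\}|$, and issue one replacement that adds $a^*$ to $\min\{\tau, \tau^*\}$ supports needing it and removes $a$ from $\min\{\tau, \tau^*\}$ supports containing it. Each move is exactly budget-balanced (additions equal removals, so feasibility against the \emph{original} tuple holds), respects the one-removal-per-support rule, and retires at least one of the two atoms, so the merge takes at most $|X| + |X^*| - 1 \le 2k - 1$ moves; with the first phase this gives $3k-1$. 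The crucial feature is that a single inserted atom is deliberately spread across many replacements: the paper's tightness example ($Z_t = \{v_1, \dots, v_k\}$ for $t \le T/k$ and $Z_t^* = \{v_{k+1}\}$ for most $t$) shows one atom can genuinely require about $k$ moves, because each move can delete at most one element per support and the removable mass sits in only $T/k$ supports. Your charging invariant (each move retires an inserted or a removable atom) is morally the right one, but stated as an assignment of inserted atoms to "disjoint stars with distinct surplus supports" it does not by itself guarantee budget balance within each move, nor does your accounting ($k$ moves in phase one plus up to $2k$ charges in phase two, minus an unexplained "off-by-one") actually deliver $3k - 1$ without the balanced-merge construction above.
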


\section{Algortihms}\label{sec:alg}
In this section, we present \ReplacementGreedy~\citep{Stan2017} and \ReplacementOMP{} for dictionary selection with generalized sparsity constraints. 

\subsection{Replacement Greedy}
\ReplacementGreedy{} was first proposed as an algorithm for a different problem, \emph{two-stage submodular maximization}~\citep{Balkanski2016}.
In two-stage submodular maximization, the goal is to maximize 
\begin{align}
    h(X) = \sum_{t = 1}^T \max_{Z_t \subseteq X \colon Z_t \in \caI_t} f_t(Z_t),
\end{align}
where $f_t$ is a nonnegative monotone submodular function ($t \in [T]$) and $\caI_t$ is a matroid.
Despite the similarity of the formulation, in dictionary selection, the functions $f_t$ are not necessarily submodular, but come from the continuous function $u_t$.
Furthermore, in two-stage submodular maximization, the constraints on $Z_t$ are individual for each $t \in [T]$, while we pose a global constraint $\caI$. 
In the following, we present an adaptation of \ReplacementGreedy{} to dictionary selection with generalized sparsity constraints.

\ReplacementGreedy{} stores the current dictionary $X$ and supports $Z_t \subseteq X$ such that $(Z_1, \dots, Z_T) \in \caI$, which are initialized as $X = \emptyset$ and $Z_t = \emptyset$ ($t\in [T]$). 
At each step, the algorithm considers the gain of adding an element $a \in V$ to $X$ with respect to each function $f_t$,
i.e., the algorithm selects $a$ that maximizes $\max_{(Z_1', \dots, Z_T') \in \caF_a} \sum_{t=1}^T \{f_t(Z_t') - f(Z_t) \}$.
See \Cref{alg:replacement-greedy} for a pseudocode description.
Note that for the individual matroid constraint $\caI$, the algorithm coincides with the original \ReplacementGreedy~\citep{Stan2017}.

\begin{algorithm}
    \caption{\ReplacementGreedy{} \& \ReplacementOMP}\label{alg:replacement-greedy}
\begin{algorithmic}[1]
	\STATE Initialize $X \gets \emptyset$ and $Z_t \gets \emptyset$ for $t = 1, \dots, T$.
    \FOR{$i = 1, \dots, k$}
        \STATE Pick $a^* \in V$ that maximizes\\
            $
            \begin{cases}
                \max_{(Z'_1,\cdots,Z'_T) \in \mathcal{F}_{a^*}} \sum_{t = 1}^T \left\{ f_t(Z'_t) - f_t(Z_t) \right\} \quad \text{(\ReplacementGreedy{})}\\
                \max_{(Z'_1,\cdots,Z'_T) \in \mathcal{F}_{a^*}} \left\{ \frac{1}{M_{s,2}} \sum_{t = 1}^T \norm{\nabla u_t(\bw^{(Z_t)}_t)_{Z'_t \setminus Z_t}}^2 - M_{s,2} \sum_{t = 1}^T \norm{(\bw^{(Z_t)}_t)_{Z_t \setminus Z'_t}}^2 \right\}\\
            \end{cases}
            $
                \\\hfill(\ReplacementOMP{})\\
        and let $(Z'_1, \cdots, Z'_T)$ be a replacement achieving a maximum.
        \STATE Set $X \gets X + a^*$ and $Z_t \gets Z'_t$ for each $t \in [T]$.
    \ENDFOR
    \STATE \textbf{return} $X$.
\end{algorithmic}
\end{algorithm}

\citet{Stan2017} showed that \ReplacementGreedy{} achieves an $((1-1/\sqrt{e})/2)$-approximation when $f_t$ are monotone submodular.
We extend their analysis to our non-submodular setting.
The proof can be found in \Cref{sec:alg-appendix}.

\begin{theorem}\label{thm:replacement-greedy-1}
    Assume that $u_t$ is $m_{2s}$-strongly concave on $\Omega_{2s}$ and $M_{s,2}$-smooth on $\Omega_{s,2}$ for $t \in [T]$ and that the sparsity constraint $\mathcal{I}$ is $p$-replacement sparse.
    Let $(Z_1^*,\cdots,Z_T^*) \in \caI$ be optimal supports of an optimal dictionary $X^*$.
    Then the solution $(Z_1, \cdots, Z_T) \in \caI$ of \ReplacementGreedy{} after $k'$ steps satisfies
    \begin{equation*}
        \sum_{t = 1}^T f_t(Z_t) \ge \frac{m^2_{2s}}{M_{s,2}^2} \left( 1 - \exp \left( - \frac{k'}{p} \frac{M_{s,2}}{m_{2s}} \right) \right) \sum_{t = 1}^T f_t(Z^*_t)
    \end{equation*}
\end{theorem}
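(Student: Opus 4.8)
The plan is to prove a one–step linear recursion for the attained objective value and then unroll it. Write $v_i \coloneqq \sum_{t=1}^T f_t(Z_t)$ for the value of the supports held by \ReplacementGreedy{} after $i$ steps, so $v_0 = 0$, and abbreviate $m \coloneqq m_{2s}$, $M \coloneqq M_{s,2}$. The target is the per–step inequality
\[
v_{i+1} - v_i \ge \frac{1}{p}\left(\frac{m}{M}\OPT - \frac{M}{m}\,v_i\right),
\]
where $\OPT = \sum_{t=1}^T f_t(Z_t^*)$. Rewriting it as $\tfrac{m^2}{M^2}\OPT - v_{i+1} \le \bigl(1-\tfrac{1}{p}\tfrac{M}{m}\bigr)\bigl(\tfrac{m^2}{M^2}\OPT - v_i\bigr)$, iterating from $v_0=0$, and using $1-x\le \rme^{-x}$ gives $\tfrac{m^2}{M^2}\OPT - v_{k'} \le \rme^{-\frac{k'}{p}\frac{M}{m}}\,\tfrac{m^2}{M^2}\OPT$, which is exactly the claim. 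So the whole proof reduces to the displayed one–step bound.

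The first step uses $p$-replacement sparsity to produce the factor $1/p$. Since \ReplacementGreedy{} selects the feasible replacement of maximum true gain $\sum_t\{f_t(Z_t')-f_t(Z_t)\}$, its gain is at least the \emph{average} gain of the $p$ replacements in the witnessing sequence $(Z_1^{p'},\dots,Z_T^{p'})_{p'=1}^p \in \caF(Z_1,\dots,Z_T)$ guaranteed by the definition. I would then sum the per–replacement gains and invoke the combinatorial guarantee — each atom of $Z_t^*\setminus Z_t$ is added at least once and each atom of $Z_t\setminus Z_t^*$ is removed at most once — to collapse the sum into a single aggregate quantity indexed by $t$. The second step bounds a single replacement's gain by restricted smoothness: for the replacement that adds $a$ to and deletes $b$ from $Z_t$, I would test the point $\bw_t^{(Z_t)} + \eta\bchi_a - (\bw_t^{(Z_t)})_b\,\bchi_b$ (whose support lies in $Z_t-b+a$ and which differs from $\bw_t^{(Z_t)}$ in at most two coordinates, so the pair is in $\Omega_{s,2}$), use first–order optimality of $\bw_t^{(Z_t)}$ (its gradient vanishes on $Z_t$, killing the $b$-coordinate first–order term), apply $M$-smoothness, and optimize over $\eta$ to get a gain of at least $\tfrac{1}{2M}(\nabla u_t(\bw_t^{(Z_t)}))_a^2 - \tfrac{M}{2}((\bw_t^{(Z_t)})_b)^2$. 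Aggregating over the sequence yields
\[
v_{i+1}-v_i \ge \frac{1}{p}\sum_{t=1}^T\left(\frac{1}{2M}\norm{\nabla u_t(\bw_t^{(Z_t)})_{Z_t^*\setminus Z_t}}^2 - \frac{M}{2}\norm{(\bw_t^{(Z_t)})_{Z_t\setminus Z_t^*}}^2\right).
\]

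The third step, which I expect to be the main obstacle, converts gradients and weights into function values via restricted strong concavity. Fixing $t$ and writing $\mathbf{g} = \nabla u_t(\bw_t^{(Z_t)})_{Z_t^*\setminus Z_t}$, $\mathbf{q} = (\bw_t^{(Z_t)})_{Z_t\setminus Z_t^*}$, and $\mathbf{r} = (\bw_t^{(Z_t^*)})_{Z_t^*\setminus Z_t}$, the goal is the pointwise inequality $\tfrac{1}{2M}\norm{\mathbf{g}}^2 - \tfrac{M}{2}\norm{\mathbf{q}}^2 \ge \tfrac{m}{M}f_t(Z_t^*) - \tfrac{M}{m}f_t(Z_t)$. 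The naive route — lower–bounding the gradient term by $m$-strong concavity as $\norm{\mathbf g}^2 \ge 2m\,(f_t(Z_t^*)-f_t(Z_t))$ and the weight term separately by $\norm{\mathbf q}^2\le \tfrac{2}{m}f_t(Z_t)$ — only delivers the weaker prefactor $\tfrac{m^2}{m^2+M^2}$ and the wrong rate, so the delicate point is to couple the two terms. Concretely, I would apply $m$-strong concavity (on $\Omega_{2s}$) to the pair $(\bw_t^{(Z_t)},\bw_t^{(Z_t^*)})$ and \emph{retain} its quadratic part, obtaining $f_t(Z_t^*)-f_t(Z_t)\le \inprod{\mathbf g,\mathbf r} - \tfrac{m}{2}\norm{\mathbf r}^2 - \tfrac{m}{2}\norm{\mathbf q}^2$ (using that $\nabla u_t(\bw_t^{(Z_t)})$ vanishes on $Z_t$ to reduce the inner product to $\inprod{\mathbf g,\mathbf r}$ and $\norm{\bw_t^{(Z_t^*)}-\bw_t^{(Z_t)}}^2\ge\norm{\mathbf r}^2+\norm{\mathbf q}^2$). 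Substituting this into the target and regrouping leaves exactly $\tfrac{1}{2M}\norm{\mathbf g - m\mathbf r}^2 + \bigl(\tfrac{M}{m}-\tfrac{m}{M}\bigr)f_t(Z_t) - \tfrac{M^2-m^2}{2M}\norm{\mathbf q}^2$: the first summand is a perfect square, and the last two are nonnegative together precisely because $\norm{\mathbf q}^2\le \norm{\bw_t^{(Z_t)}}^2\le \tfrac{2}{m}f_t(Z_t)$ — itself a consequence of $m$-strong concavity between $\bfzero$ and $\bw_t^{(Z_t)}$ together with $u_t(\bfzero)\ge 0$ and optimality. Summing the pointwise inequality over $t$ turns the bound of the second step into the per–step recursion, completing the proof.
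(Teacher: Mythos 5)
Your proposal is correct and takes essentially the same route as the paper's own proof: your per-step recursion is exactly the paper's \Cref{lem:replacement-greedy-marginal} (greedy gain bounded below by the average gain over the $p$ witnessing replacements, a smoothness bound \eqref{eq:replacement-greedy-general-ineq1} for each replacement, then the combinatorial collapse), and your unrolling of the recursion is the paper's \Cref{lem:misc}. Even your ``delicate'' third step coincides with the paper's argument in disguise: the maximization over $\bz$ used to derive \eqref{eq:replacement-greedy-general-ineq2} is precisely your completion of the square (your nonnegative remainder $\tfrac{1}{2M}\norm{\mathbf{g}-m\mathbf{r}}^2$ is $\tfrac{m}{M}$ times the slack discarded in that maximization), and your bound $\norm{\mathbf{q}}^2 \le \tfrac{2}{m}f_t(Z_t)$ is the paper's \eqref{eq:replacement-greedy-general-ineq3}.
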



\subsection{Replacement OMP}\label{sec:replacement-OMP}
Now we propose our algorithm, \ReplacementOMP{}.
A down-side of \ReplacementGreedy{} is its heavy computation:
in each greedy step, we need to evaluate $\sum_{t=1}^Tf_t(Z_t')$ for each $(Z_1',\dots,Z_t') \in\caF_a(Z_1,\dots,Z_t)$, which amounts to solving linear regression problems $snT$ times if $u$ is the $\ell^2$-utility function.
To avoid heavy computation, we propose a proxy of this quantity by borrowing an idea from orthogonal matching pursuit. 
\ReplacementOMP{} selects an atom $a \in V$ that maximizes
\begin{equation}\label{eq:replacement-omp-marginal-gain-1}
    \max_{(Z'_1,\cdots,Z'_T) \in \mathcal{F}_{a}(Z_1,\cdots,Z_T)} \left\{ \frac{1}{M_{s,2}} \sum_{t = 1}^T \norm{\nabla u_t(\bw^{(Z_t)}_t)_{Z'_t \setminus Z_t}}^2 - M_{s,2} \sum_{t = 1}^T \norm{(\bw^{(Z_t)}_t)_{Z_t \setminus Z'_t}}^2 \right\}.
\end{equation}
This algorithm requires the smoothness parameter $M_{s,2}$ before the execution.
Computing $M_{s,2}$ is generally difficult, but this parameter for the squared $\ell^2$-utility function can be bounded by $\sigmamax^2(\bA, 2)$. 
This value can be computed in $O(n^2d)$ time.

\begin{theorem}\label{thm:replacement-omp-general}
    Assume that $u_t$ is $m_{2s}$-strongly concave on $\Omega_{2s}$ and $M_{s,2}$-smooth on $\Omega_{s,2}$ for $t \in [T]$ and that the sparsity constraint $\mathcal{I}$ is $p$-replacement sparse.
    Let $(Z_1^*,\cdots,Z_T^*) \in \caI$ be optimal supports of an optimal dictionary $X^*$.
    Then the solution $(Z_1, \cdots, Z_T) \in \caI$ of \ReplacementOMP{} after $k'$ steps satisfies
    \begin{equation*}
        \sum_{t = 1}^T f_t(Z_t) \ge \frac{m^2_{2s}}{M_{s,2}^2} \left( 1 - \exp \left( - \frac{k'}{p} \frac{M_{s,2}}{m_{2s}} \right) \right) \sum_{t = 1}^T f_t(Z^*_t).
    \end{equation*}
\end{theorem}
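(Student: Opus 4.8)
The plan is to prove a single-step progress inequality and then unroll it as a geometric recurrence. Write $a_i = \sum_{t=1}^T f_t(Z_t^{(i)})$ for the objective value after $i$ steps, $\OPT = \sum_{t=1}^T f_t(Z_t^*)$, abbreviate $m = m_{2s}$, $M = M_{s,2}$, and at the current step set $\bw_t \coloneqq \bw_t^{(Z_t)}$ and $\bw_t^* \coloneqq \bw_t^{(Z_t^*)}$. Since $a_0 = 0$, the theorem follows once I establish the progress lemma
\[
a_{i+1} - a_i \ge \frac 1p\left(\frac mM \OPT - \frac Mm a_i\right),
\]
because setting $a^\infty = \frac{m^2}{M^2}\OPT$ rewrites this as $a^\infty - a_{i+1} \le \left(1 - \frac{M}{mp}\right)(a^\infty - a_i)$, and iterating together with $1 - x \le e^{-x}$ yields $a_{k'} \ge a^\infty\left(1 - e^{-k'M/(pm)}\right)$, exactly the claimed bound.

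First I would show that the OMP proxy lower-bounds the true gain. Fix the atom $a^*$ and replacement $(Z'_1,\dots,Z'_T)$ chosen at a step. For each $t$, first-order optimality of $\bw_t$ over $\supp \subseteq Z_t$ gives $\nabla u_t(\bw_t)_{Z_t} = \bfzero$. Taking the feasible point $\bw' = \bw_t + \eta\bchi_{a^*} - (\bw_t)_{Z_t \setminus Z'_t}$, which is supported on $Z'_t$ and differs from $\bw_t$ in at most two coordinates, applying $M$-smoothness on $\Omega_{s,2}$, and then optimizing over $\eta$ yields $f_t(Z'_t) - f_t(Z_t) \ge \frac{1}{2M}\norm{\nabla u_t(\bw_t)_{Z'_t\setminus Z_t}}^2 - \frac M2 \norm{(\bw_t)_{Z_t\setminus Z'_t}}^2$, where the first-order cross term vanishes precisely because $\nabla u_t(\bw_t)_{Z_t}=\bfzero$. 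Summing over $t$ shows that the true gain $a_{i+1}-a_i$ is at least one half of the OMP objective \eqref{eq:replacement-omp-marginal-gain-1} of the chosen move (and is trivially nonnegative when that objective is negative).

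Next I would exploit $p$-replacement sparsity to compare against the optimum. The definition supplies a sequence of $p$ feasible replacements $(Z_1^{p'},\dots,Z_T^{p'}) \in \caF(Z_1,\dots,Z_T)$ in which every element of $Z_t^* \setminus Z_t$ is added at least once and every element of $Z_t\setminus Z_t^*$ is removed at most once. Since the algorithm maximizes the OMP objective over all of $\caF(Z_1,\dots,Z_T)$, its chosen value is at least the average over this sequence, and the coverage properties bound the averaged positive terms below and the averaged negative terms above, giving
\[
\text{(chosen OMP objective)} \ge \frac 1p\left(\frac 1M \sum_{t=1}^T \norm{\nabla u_t(\bw_t)_{Z_t^*\setminus Z_t}}^2 - M\sum_{t=1}^T \norm{(\bw_t)_{Z_t\setminus Z_t^*}}^2\right),
\]
and I will write $G$ for the parenthesized quantity.

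It remains to lower-bound $G$, which I expect to be the crux. Applying $m$-strong concavity on $\Omega_{2s}$ to the pair $(\bw_t,\bw_t^*)$, using $\nabla u_t(\bw_t)_{Z_t}=\bfzero$ to reduce the first-order term to $\inprod{\nabla u_t(\bw_t)_{Z_t^*\setminus Z_t}, (\bw_t^*)_{Z_t^*\setminus Z_t}}$, and then bounding $\inprod{g_t,s_t} - \frac m2\norm{s_t}^2 \le \frac{1}{2m}\norm{g_t}^2$, gives $\OPT - a_i \le \frac{1}{2m}\sum_t\norm{\nabla u_t(\bw_t)_{Z_t^*\setminus Z_t}}^2 - \frac m2\sum_t\norm{(\bw_t)_{Z_t\setminus Z_t^*}}^2$. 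The remaining difficulty is the negative residual term in $G$: I would control it through the auxiliary bound $\norm{\bw_t}_2^2 \le \frac 2m f_t(Z_t)$, obtained by applying strong concavity to the pair $(\bfzero,\bw_t)$ and using $u_t(\bfzero)\ge 0$, so that $\sum_t \norm{(\bw_t)_{Z_t\setminus Z_t^*}}^2 \le \frac 2m a_i$. Combining the strong-concavity inequality with this residual bound and with $m \le M$, the coefficients telescope exactly to $G \ge \frac{2m}{M}\OPT - \frac{2M}{m}a_i$. Chaining this with the earlier two steps, namely $a_{i+1}-a_i \ge \frac{1}{2p}G$, produces the progress lemma and hence the theorem. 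The main obstacles are organizing the cancellation in this last step so the constants land on $\frac{m^2}{M^2}$ exactly, and verifying that the $p$-replacement coverage properties legitimately bound the telescoped proxy from both sides.
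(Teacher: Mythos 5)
Your proof is correct and takes essentially the same approach as the paper's: your three key inequalities are precisely the paper's \eqref{eq:replacement-omp-general-ineq1}, \eqref{eq:replacement-omp-general-ineq2}, and \eqref{eq:replacement-omp-general-ineq3}, and your averaging over the $p$ feasible replacements followed by the geometric recurrence reproduces \Cref{lem:replacement-omp-marginal} combined with \Cref{lem:misc}. The differences are only presentational---you unroll the recurrence inline rather than invoking a separate lemma, and you derive the strong-concavity bound via an explicit inner-product decomposition instead of a maximization over $\bz$---so the substance of the argument is the same.
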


\subsection{Complexity}
Now we analyze the time complexity of \ReplacementGreedy{} and \ReplacementOMP.
In general, $\caF_a$ has $O(n^T)$ members, and therefore it is difficult to compute $\caF_a$.
Nevertheless, we show that \ReplacementOMP{} can run much faster for the examples presented in \Cref{sec:general}. 

In \ReplacementGreedy{}, it is difficult to find an atom with the largest gain at each step.
This is because we need to maximize a nonlinear function $\sum_{t=1}^T f_t(Z_t')$.
Conversely, in \ReplacementOMP{}, if we can calculate $\bw^{(Z_t)}_t$ and $\nabla u_t(\bw^{(Z_t)}_t)$ for all $t \in [T]$, the problem of calculating gain of each atom is reduced to maximizing a linear function.

In the following, we consider the $\ell_2$-utility function and average sparsity constraint because it is the most complex constraint.
A similar result holds for the other examples.
In fact, we show that this task reduces to maximum weighted bipartite matching.
The Hungarian method returns the maximum weight bipartite matching in $\mathrm{O}(T^3)$ time.
We can further improve the running time to $O(T\log T)$ time by utilizing the structure of this problem. 
Due to the limitation of space, we defer the details to \Cref{sec:alg-appendix}.
In summary, we obtain the following:
\begin{theorem}\label{thm:replacement-omp-runtime}
    Assume that the assumption of \Cref{thm:replacement-omp-general} holds. 
    Further assume that $u$ is the $\ell^2$-utility function and $\caI$ is the average sparsity constraint.
    Then \ReplacementOMP{} finds the solution $(Z_1, \cdots, Z_T) \in \caI$ 
    \begin{equation*}
        \sum_{t = 1}^T f_t(Z_t) \ge \left( \frac{\sigmamax^2(\bA, 2s)}{\sigmamin^2(\bA,2)} \right)^2 \left( 1 - \exp \left( - \frac{1}{3} \frac{\sigmamin^2(\bA,2)}{\sigmamax^2(\bA, 2s)}  \right) \right) \sum_{t = 1}^T f_t(Z^*_t)
    \end{equation*}
    in $O(Tk(n \log T + ds))$ time. 
\end{theorem}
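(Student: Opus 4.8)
The statement bundles two claims, and I would treat them separately. The approximation guarantee is essentially a specialization of \Cref{thm:replacement-omp-general}, so the plan is to identify the constants. Since $u$ is the $\ell^2$-utility function, $u_t(\bw) = \norm{\by_t}_2^2 - \norm{\by_t - \bA\bw}_2^2$ is a concave quadratic whose exact second-order expansion at any point is $-\norm{\bA(\by-\bx)}_2^2$; hence its restricted strong concavity parameter on $\Omega_{2s}$ and its restricted smoothness parameter on $\Omega_{s,2}$ are governed (up to the constant in the definition) by the squared restricted singular values $\sigmamin^2(\bA,\cdot)$ and $\sigmamax^2(\bA,\cdot)$. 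I would substitute these for $m_{2s}$ and $M_{s,2}$ in the general bound. Finally I use that average sparsity is $(3k-1)$-replacement sparse (the proposition of \Cref{sec:general}) to take $p = 3k-1$ and $k' = k$, and bound $\tfrac{k}{3k-1} \ge \tfrac13$; since $1-\exp(-\cdot)$ is increasing, this only weakens the bound and yields the displayed inequality.

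The substance is the running time, which I would establish by an amortized per-iteration analysis over the $k$ greedy steps. The crucial difference from \ReplacementGreedy{} is that the \ReplacementOMP{} selection rule \eqref{eq:replacement-omp-marginal-gain-1} is \emph{linear} in precomputable quantities. For each $t$ I maintain the current support $Z_t$, its optimal weights $\bw^{(Z_t)}_t$, the residual $\res_t = \by_t - \bA_{Z_t}\bw^{(Z_t)}_t$, and the relevant gradient entries $\nabla u_t(\bw^{(Z_t)}_t)_a = 2\inprod{\ba, \res_t}$, refreshing them incrementally after each atom insertion by an OMP-style (QR / rank-one) update at cost $O(ds)$ per affected data point. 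Given these, for a fixed candidate atom $a$ every member of $\caF_a$ touches each $Z_t$ in only one of three beneficial ways — leave it, add $a$, or add $a$ and evict $\argmin_{b\in Z_t}\abs{(\bw^{(Z_t)}_t)_b}$ — so the inner maximization decouples across $t$ except for the global budget $\sum_t \abs{Z_t}\le s'$ of average sparsity.

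The core step is to show this inner maximization reduces to a budgeted selection (equivalently a weighted bipartite matching) solvable in $O(T\log T)$ per atom. The plan is to write, for each $t$, the gain of ``add without eviction'' (feasible only when $\abs{Z_t}<s_t$, and the only option that consumes one unit of the global budget) and of ``add with eviction'' (budget-neutral), observe that the eviction option dominates ``skip'' exactly when its gain is nonnegative, and reduce the problem to distributing the $s' - \sum_t\abs{Z_t}$ remaining budget units among the data points with the largest marginal upgrade. A single sort (or linear-time selection) of the per-$t$ upgrade values then solves it, which is where the $\log T$ factor originates. Summing $O(T\log T)$ over the $n$ atoms together with the $O(ds)$ incremental updates over the (at most $T$) affected supports gives $O(nT\log T + Tds)$ per iteration, hence $O(Tk(n\log T + ds))$ after $k$ iterations.

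The main obstacle I anticipate is twofold. First, proving the reduction is \emph{optimal}: the greedy budget allocation must be shown to exactly maximize the coupled per-$t$ discrete choices, which is an exchange argument over the matroid intersection (individual caps $s_t$ against the uniform budget $s'$) underlying average sparsity. Second, and more delicate, is verifying that the bookkeeping of weights, residuals, and especially the gradient inner products across \emph{all} candidate atoms can genuinely be refreshed within the claimed $O(ds)$ per data point, rather than the naive $O(nd)$ of recomputing $\bA^\top\res_t$ from scratch each time a support changes. Both are routine in spirit but exacting in the accounting, and they are exactly the details deferred to \Cref{sec:alg-appendix}.
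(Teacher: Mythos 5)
Your proposal takes essentially the same route as the paper's proof: the approximation ratio is obtained by specializing \Cref{thm:replacement-omp-general} with the $(3k-1)$-replacement sparsity of the average sparsity constraint (so $k'/p \ge 1/3$) and the restricted singular values of $\bA$ as the concavity/smoothness constants, while the running time comes from reducing the per-atom gain maximization to a budgeted selection over the $T$ data points solved greedily with priority queues in $O(T\log T)$ time per atom (the paper's \Cref{alg:greedy-selection-average-sparsity}, whose optimality is proved by exactly the exchange/induction argument you anticipate), combined with $O(s^2+ds)=O(ds)$ rank-two QR updates of the coefficients $\bw_t^{(Z_t')}$ per data point. The two obstacles you flag are precisely the pieces the paper defers to its appendix and handles as you predict, so your plan is correct and matches the paper's argument.
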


\begin{remark}
    If finding an atom with the largest gain is computationally intractable, we can add an atom whose gain is no less than $\tau$ times the largest gain. In this case, we can bound the approximation ratio with replacing $k'$ with $\tau k'$ in \Cref{thm:replacement-greedy-1} and \ref{thm:replacement-omp-general}.
\end{remark}

\section{Extensions to the online setting}\label{sec:extension}
Our algorithms can be extended to the following online setting.
The problem is formalized as a two-player game between a player and an adversary.
At each round $t = 1, \dots, T$, the player must select (possibly in a randomized manner) a dictionary $X_t \subseteq V$ with $\abs{X_t} \leq k$.
Then, the adversary reveals a data point $\by_t \in \R^d$ and the player gains 
$f_t(X_t) = \max_{\bw \in \R^k: \norm{\bw}_0 \leq s}u(\by_t, \bA_X\bw)$. 
The performance measure of a player's strategy is the \emph{expected $\alpha$-regret}:
\begin{align*}
    \regret_\alpha(T) = \alpha\max_{X^*: \abs{X^*} \leq k}\sum_{t=1}^T f_t(X^*) - \E\left[ \sum_{t=1}^T f_t(X_t)\right],
\end{align*}
where $\alpha > 0$ is a constant independent from $T$ corresponding to the offline approximation ratio, and the expectation is taken over the randomness in the player.

For this online setting, we present an extension of \ReplacementGreedy{} and \ReplacementOMP{} with sublinear $\alpha$-regret, where $\alpha$ is the corresponding offline approximation ratio.
The details are provided in \Cref{sec:online}.

\section{Experiments}\label{sec:exp}

\begin{figure*}
\centering
\subfigure[synthetic, $T = 100$, time]{
	\includegraphics[width=0.31\textwidth]{./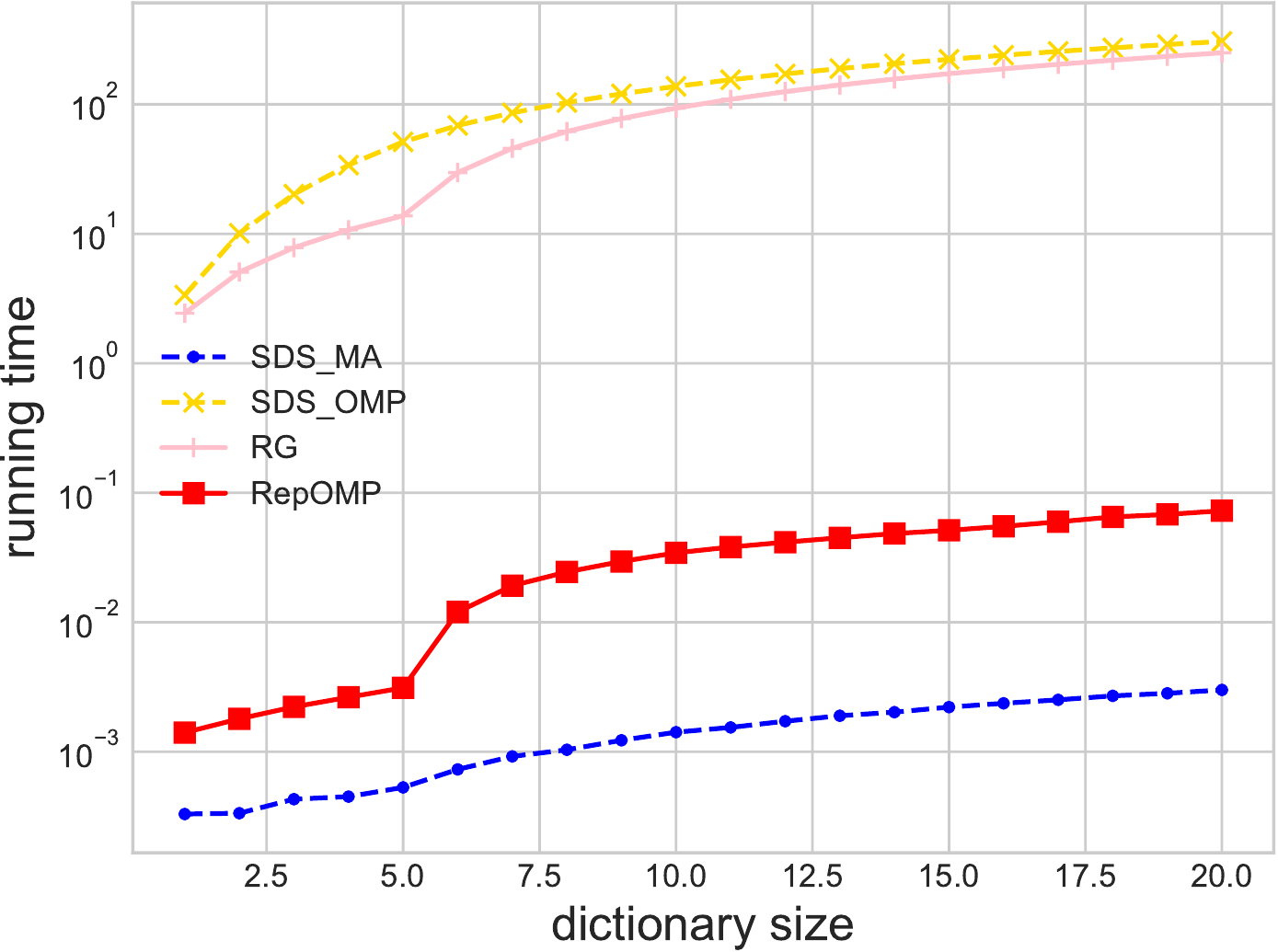}\label{fig:offline_synthetic_T100_time}
}
\subfigure[synthetic, $T = 100$, residual]{
	\includegraphics[width=0.31\textwidth]{./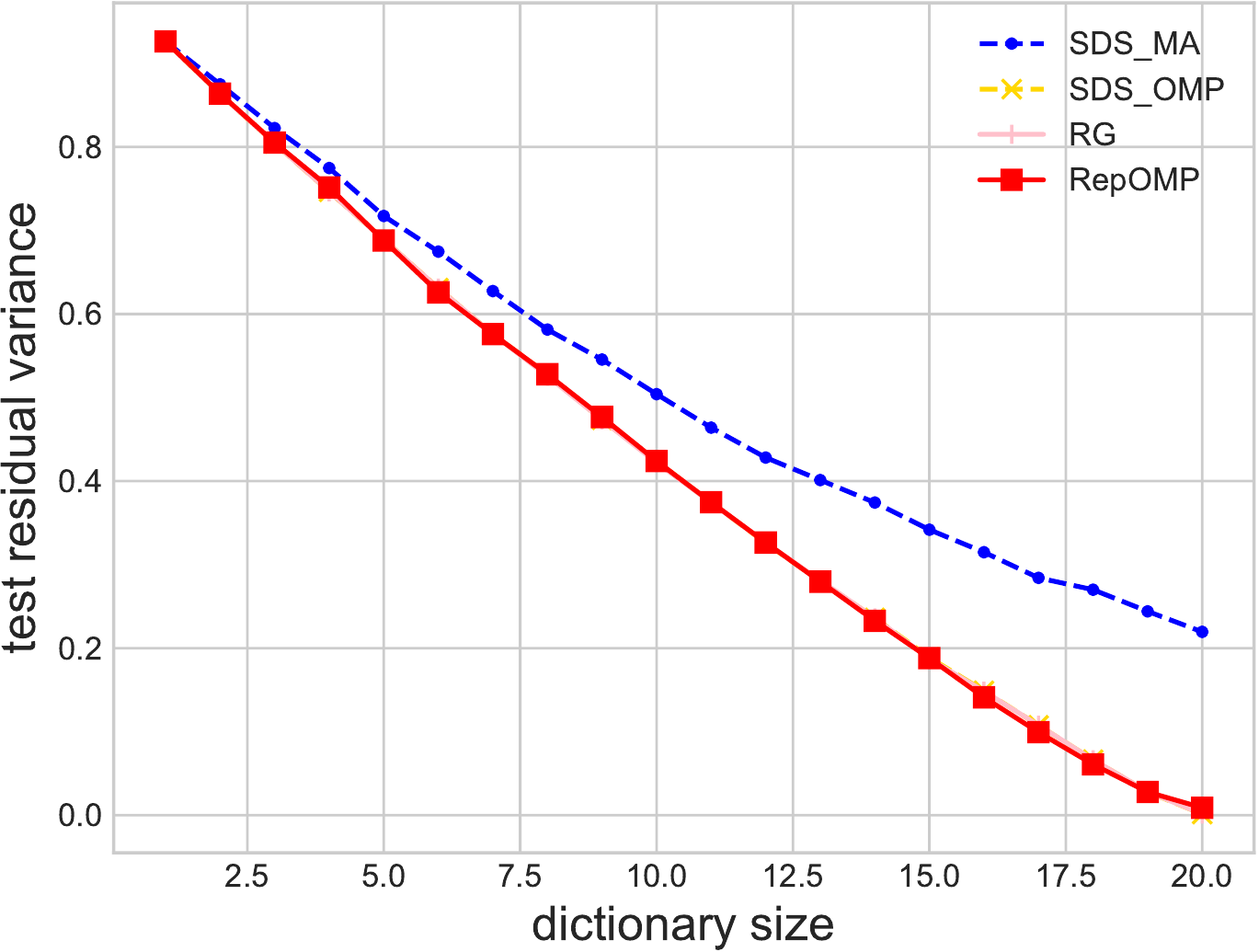}\label{fig:offline_synthetic_T100_error}
}
\subfigure[voc, $T = 100$, residual]{
	\includegraphics[width=0.31\textwidth]{./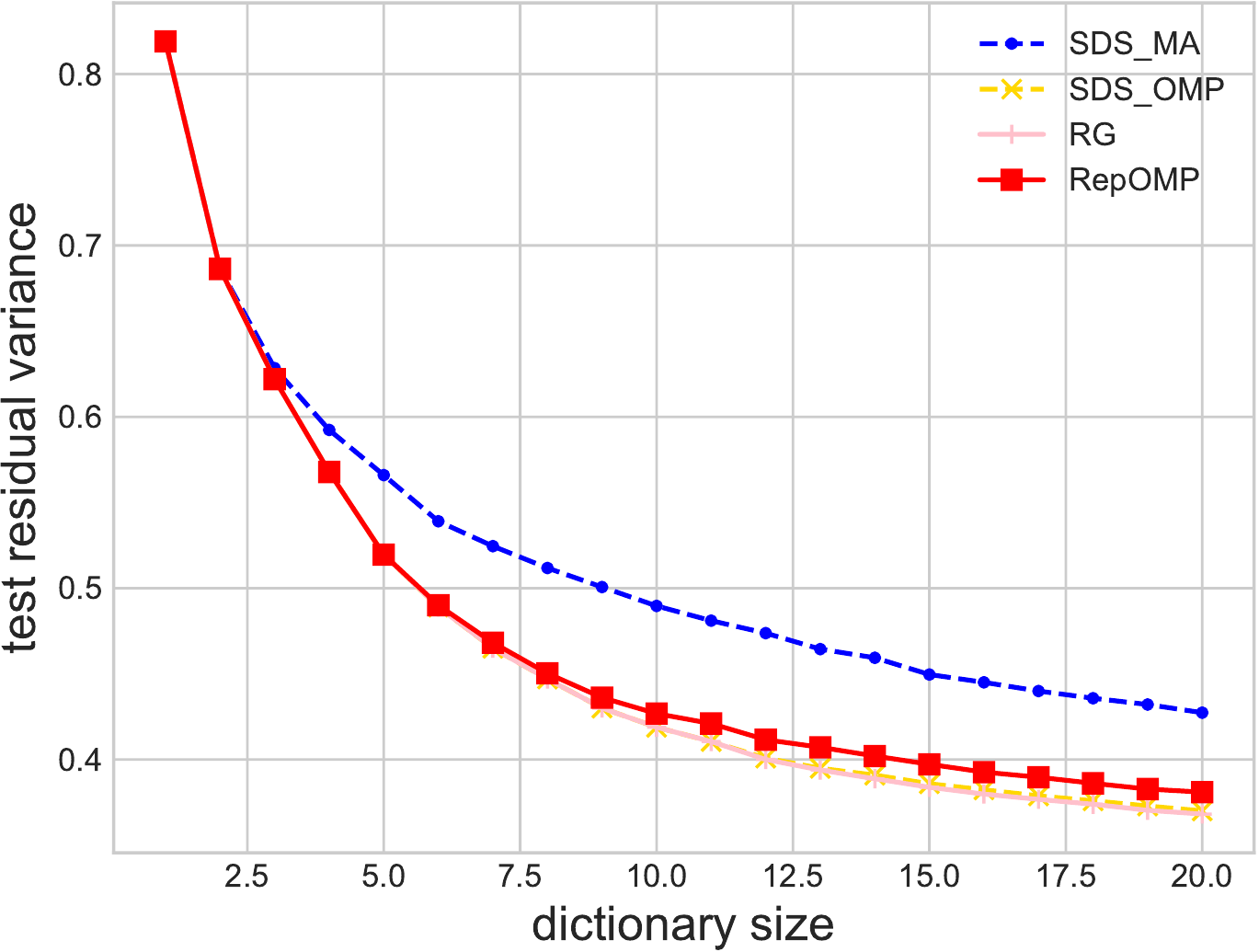}\label{fig:offline_voc_T100_error}
}

\subfigure[synthetic, $T = 1000$, time]{
	\includegraphics[width=0.31\textwidth]{./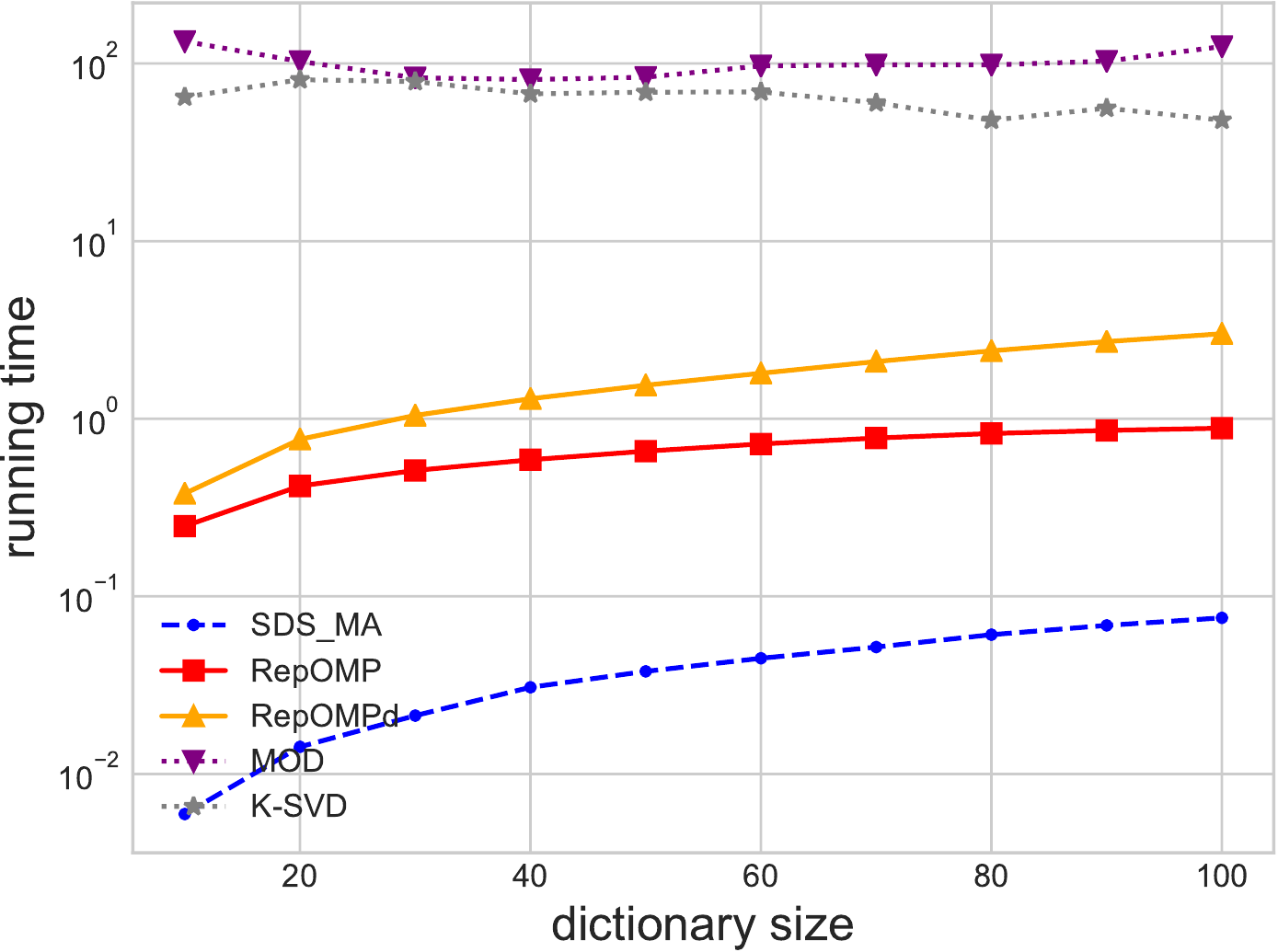}\label{fig:offline_synthetic_T1000_time}
}
\subfigure[synthetic, $T = 1000$, residual]{
	\includegraphics[width=0.31\textwidth]{./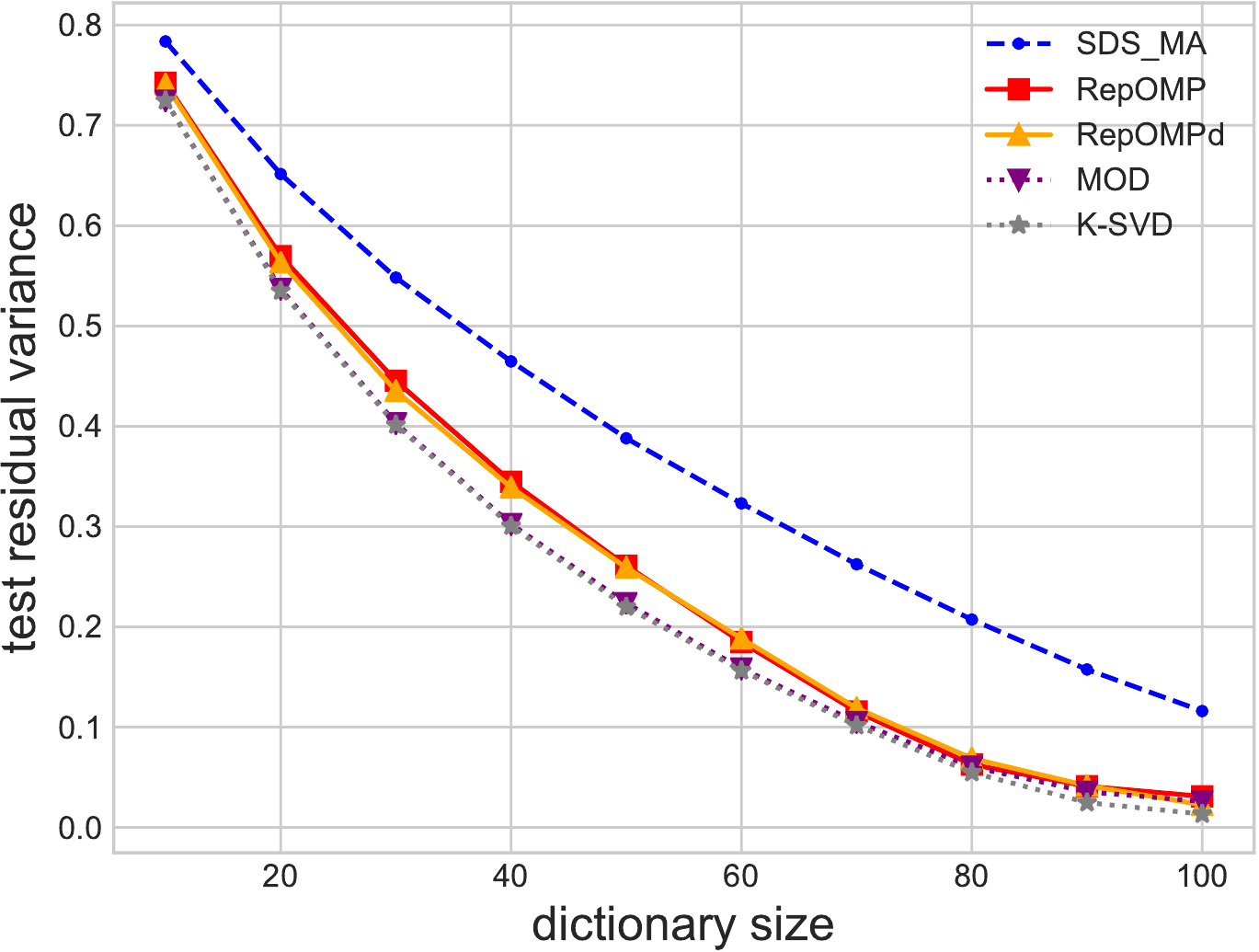}\label{fig:offline_synthetic_T1000_error}
}
\subfigure[voc, $T = 1000$, residual]{
	\includegraphics[width=0.31\textwidth]{./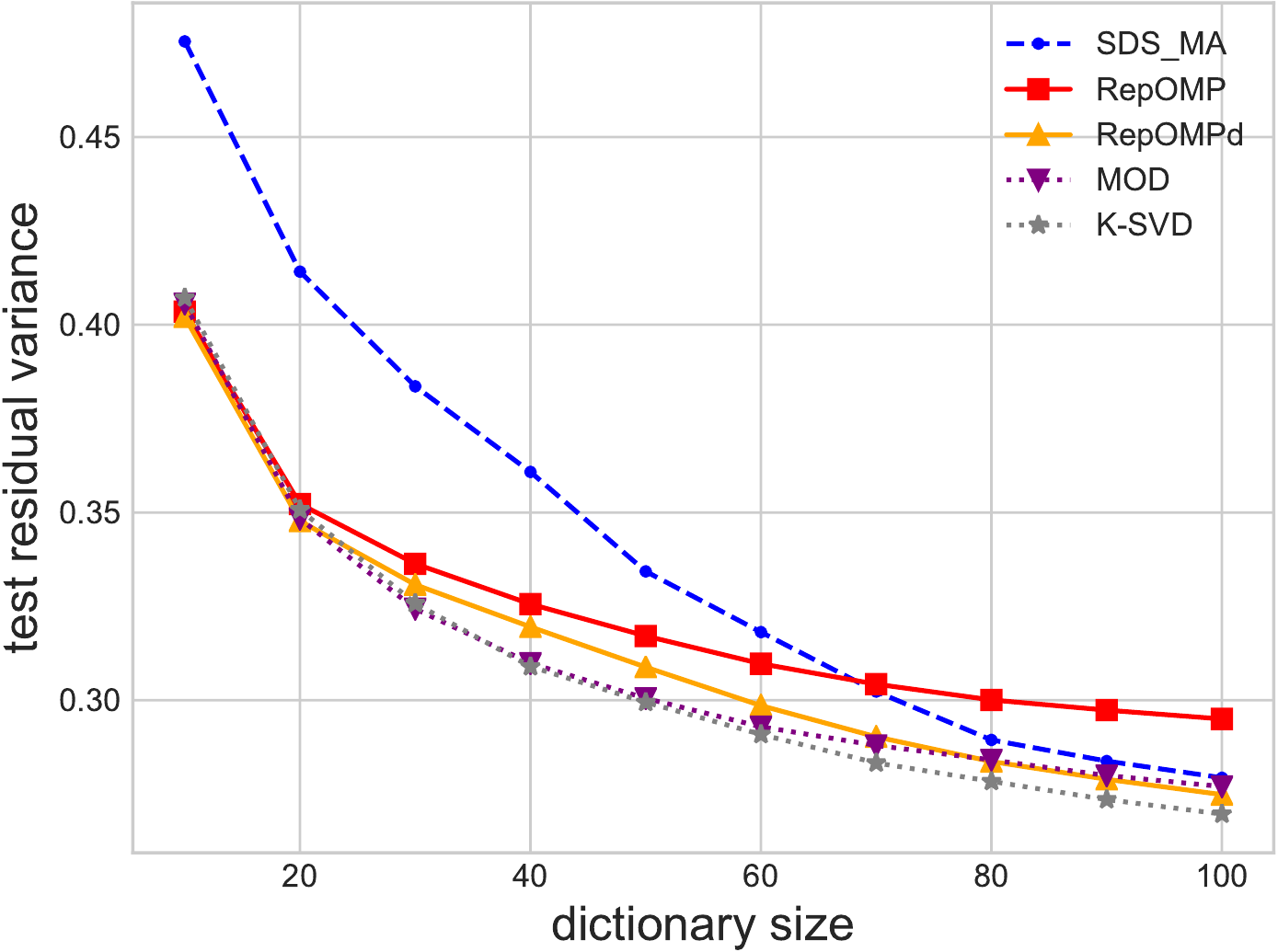}\label{fig:offline_voc_T1000_error}
}

\subfigure[synthetic, $T = 1000$, time]{
	\includegraphics[width=0.31\textwidth]{./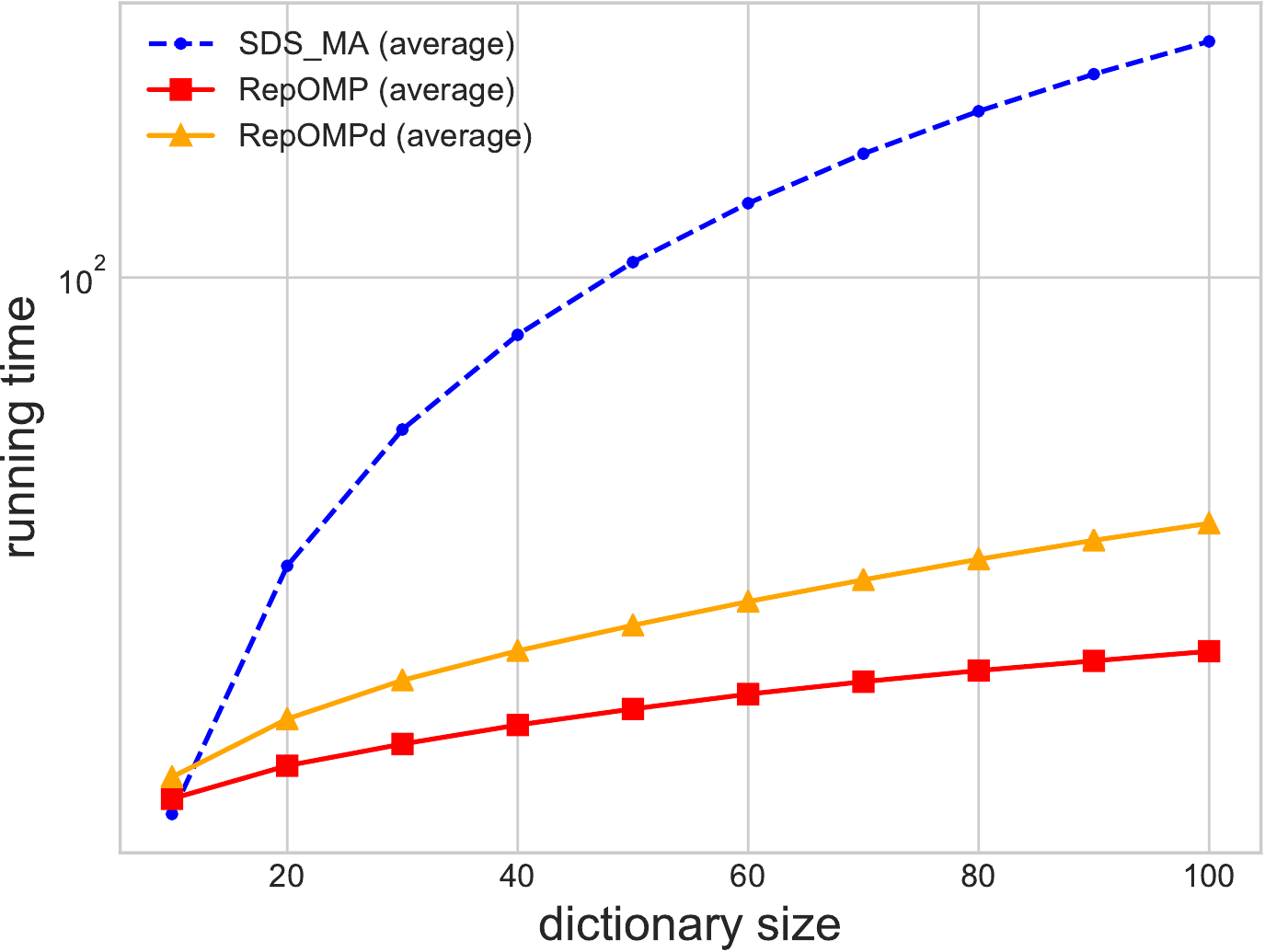}\label{fig:offline_synthetic_average_T1000_time}
}
\subfigure[synthetic, $T = 1000$, residual]{
	\includegraphics[width=0.31\textwidth]{./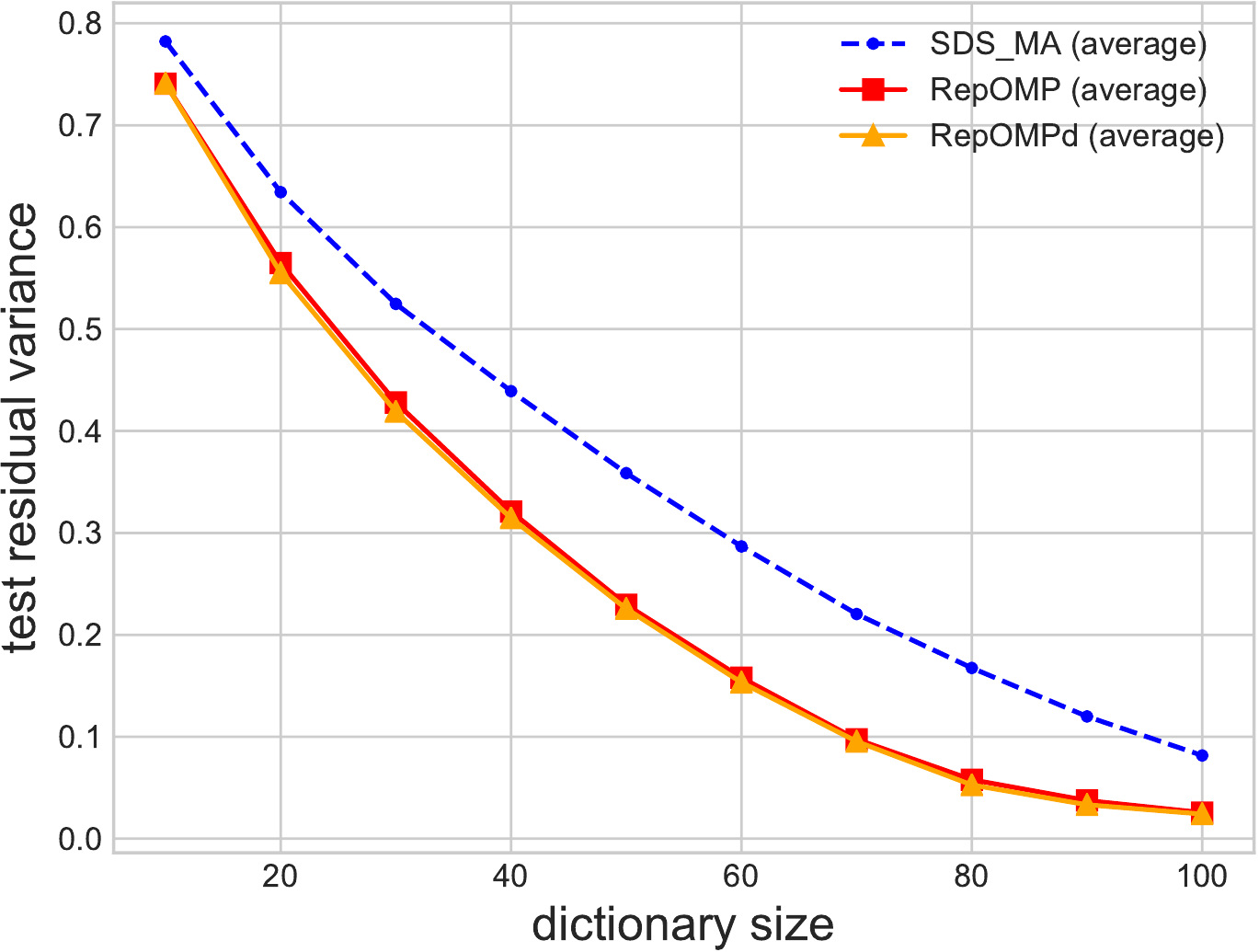}\label{fig:offline_synthetic_average_T1000_error}
}
\subfigure[voc, $T = 1000$, residual]{
	\includegraphics[width=0.31\textwidth]{./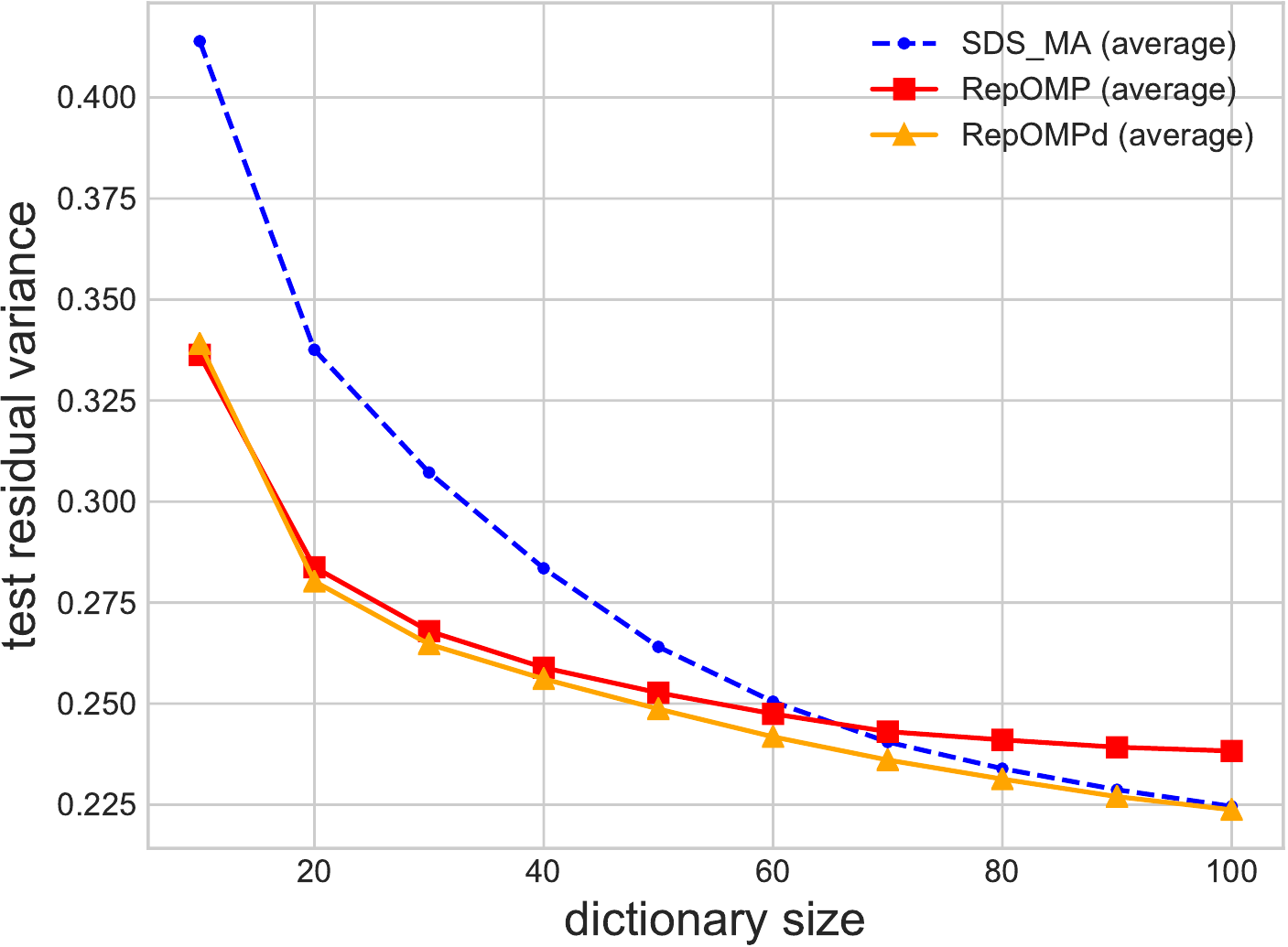}\label{fig:offline_voc_average_T1000_error}
}
\caption{The experimental results for the offline setting. In all figures, the horizontal axis indicates the size of the output dictionary. 
    \subref{fig:offline_synthetic_T100_time}, \subref{fig:offline_synthetic_T100_error}, and \subref{fig:offline_voc_T100_error} are the results for $T = 100$. \subref{fig:offline_synthetic_T1000_time}, \subref{fig:offline_synthetic_T1000_error}, and \subref{fig:offline_voc_T1000_error} are the results for $T = 1000$. \subref{fig:offline_synthetic_average_T1000_time}, \subref{fig:offline_synthetic_average_T1000_error}, and \subref{fig:offline_voc_average_T1000_error} are the results for $T = 1000$ with an average sparsity constraint. For each setting, we provide the plot of the running time for the synthetic dataset, test residual variance for the synthetic dataset, and test residual variance for VOC2006 image dataset.}
\end{figure*}

In this section, we empirically evaluate our proposed algorithms on several dictionary selection problems with synthetic and real-world datasets.
We use the squared $\ell^2$-utility function 
for all of the experiments. 
Since evaluating the value of the objective function is NP-hard, we plot the approximated residual variance obtained by orthogonal matching pursuit.

\paragraph{Ground set}
We use the ground set consisting of several orthonormal bases that are standard choices in signal and image processing, such as 2D discrete cosine transform and several 2D discrete wavelet transforms (Haar, Daubechies 4, and coiflet). 
In all of the experiments, the dimension is set to $d = 64$, which corresponds to images of size $8 \times 8$ pixels.
The size of the ground set is $n = 256$.

\paragraph{Machine}
All the algorithms are implemented in Python 3.6.
We conduct the experiments in a machine with Intel Xeon E3-1225 V2 (3.20 GHz and 4 cores) and 16 GB RAM.

\paragraph{Datasets}
We conduct experiments on two types of datasets. 
The first one is a synthetic dataset. 
In each trial, we randomly pick a dictionary with size $k$ out of the ground set, and generate sparse linear combinations of the columns of this dictionary. 
The weights of the linear combinations are generated from the standard normal distribution. 
The second one is a dataset of real-world images extracted from PASCAL VOC2006 image datasets \citep{pascal-voc-2006}. 
In each trial, we randomly select an image out of 2618 images and divide it into patches of $8 \times 8$ pixels, then select $T$ patches uniformly at random. 
All the patches are normalized to zero mean and unit variance.
We make datasets for training and test in the same way, and use the training dataset for obtaining a dictionary and the test dataset for measuring the quality of the output dictionary.

\subsection{Experiments on the offline setting}
We implement our proposed methods, \ReplacementGreedy{} (\textsf{RG}) and \ReplacementOMP{} (\textsf{ROMP}), as well as the existing methods for dictionary selection, \ModularApproximation{} and \OMPEvaluation{}.
We also implement a heuristically modified version of \textsf{ROMP}, which we call \textsf{ROMPd}.  
In \textsf{ROMPd}, we replace $M_{s,2}$ with some parameter that decreases as the size of the current dictionary grows, which prevents the gains of all the atoms from being zero. Here we use $M_{s,2} / \sqrt{i}$ as the decreasing parameter where $i$ is the number of iterations so far.
In addition, we compare these methods with standard methods for dictionary learning, \textsf{MOD} \citep{Engan1999} and \textsf{KSVD} \citep{Aharon2006}, which is set to stop when the change of the objective value becomes no more than $10^{-6}$ or 200 iterations are finished. Orthogonal matching pursuit is used as a subroutine in both methods.

First, we compare the methods for dictionary selection with small datasets of $T = 100$.
The parameter of sparsity constraints is set to $s = 5$.
The results averaged over 20 trials are shown in \Cref{fig:offline_synthetic_T100_time}, \subref{fig:offline_synthetic_T100_error}, and \subref{fig:offline_voc_T100_error}. 
The plot of the running time for VOC2006 datasets is omitted as it is much similar to that for synthetic datasets.
In terms of running time, \ModularApproximation{} is the fastest, but the quality of the output dictionary is poor. \textsf{ROMP} is several magnitudes faster than \OMPEvaluation{} and \textsf{RG}, but its quality is almost the same with \OMPEvaluation{} and \textsf{RG}. In \Cref{fig:offline_synthetic_T100_error}, test residual variance of \OMPEvaluation{}, \textsf{RG}, and \textsf{ROMP} are overlapped, and in \Cref{fig:offline_voc_T100_error}, test residual variance of \textsf{ROMP} is slightly worse than that of \OMPEvaluation{} and \textsf{RG}. From these results, we can conclude that \textsf{ROMP} is by far the most practical method for dictionary selection.

Next we compare the dictionary selection methods with the dictionary learning methods with larger datasets of $T = 1000$. \OMPEvaluation{} and \textsf{RG} are omitted because they are too slow to be applied to datasets of this size. The results averaged over 20 trials are shown in \Cref{fig:offline_synthetic_T1000_time}, \subref{fig:offline_synthetic_T1000_error}, and \subref{fig:offline_voc_T1000_error}. In terms of running time, \textsf{ROMP} and \textsf{ROMPd} are much faster than \textsf{MOD} and \textsf{KSVD}, but their performances are competitive with \textsf{MOD} and \textsf{KSVD}.

Finally, we conduct experiments with the average sparsity constraints. We compare \textsf{ROMP} and \textsf{ROMPd} with \Cref{alg:greedy-selection-average-sparsity} in \Cref{sec:alg-appendix} with a variant of \ModularApproximation{} proposed for average sparsity in \citet{Cevher2011}. The parameters of constraints are set to $s_t = 8$ for all $t \in [T]$ and $s' = 5T$. The results averaged over 20 trials are shown in \Cref{fig:offline_synthetic_average_T1000_time}, \subref{fig:offline_synthetic_average_T1000_error}, and \subref{fig:offline_voc_average_T1000_error}. \textsf{ROMP} and \textsf{ROMPd} outperform \ModularApproximation{} both in running time and quality of the output.

In \Cref{sec:further}, We provide further experimental results. There we provide examples of image restoration, in which the average sparsity works better than the standard dictionary selection.

\subsection{Experiments on the online setting}
Here we give the experimental results on the online setting. We implement the online version of \ModularApproximation{}, \textsf{RG} and \textsf{ROMP}, as well as an online dictionary learning algorithm proposed by \citet{Mairal2010}. For all the online dictionary selection methods, the hedge algorithm is used as the subroutines.
The parameters are set to $k = 20$ and $s = 5$.
The results averaged over 50 trials are shown in \Cref{fig:online_synthetic_k20}, \subref{fig:online_voc_k20}. 
For both datasets, Online \textsf{ROMP} shows a better performance than Online \ModularApproximation{}, Online \textsf{RG}, and the online dictionary learning algorithm.

\begin{figure*}
\centering
\subfigure[synthetic]{
	\includegraphics[width=0.31\textwidth]{./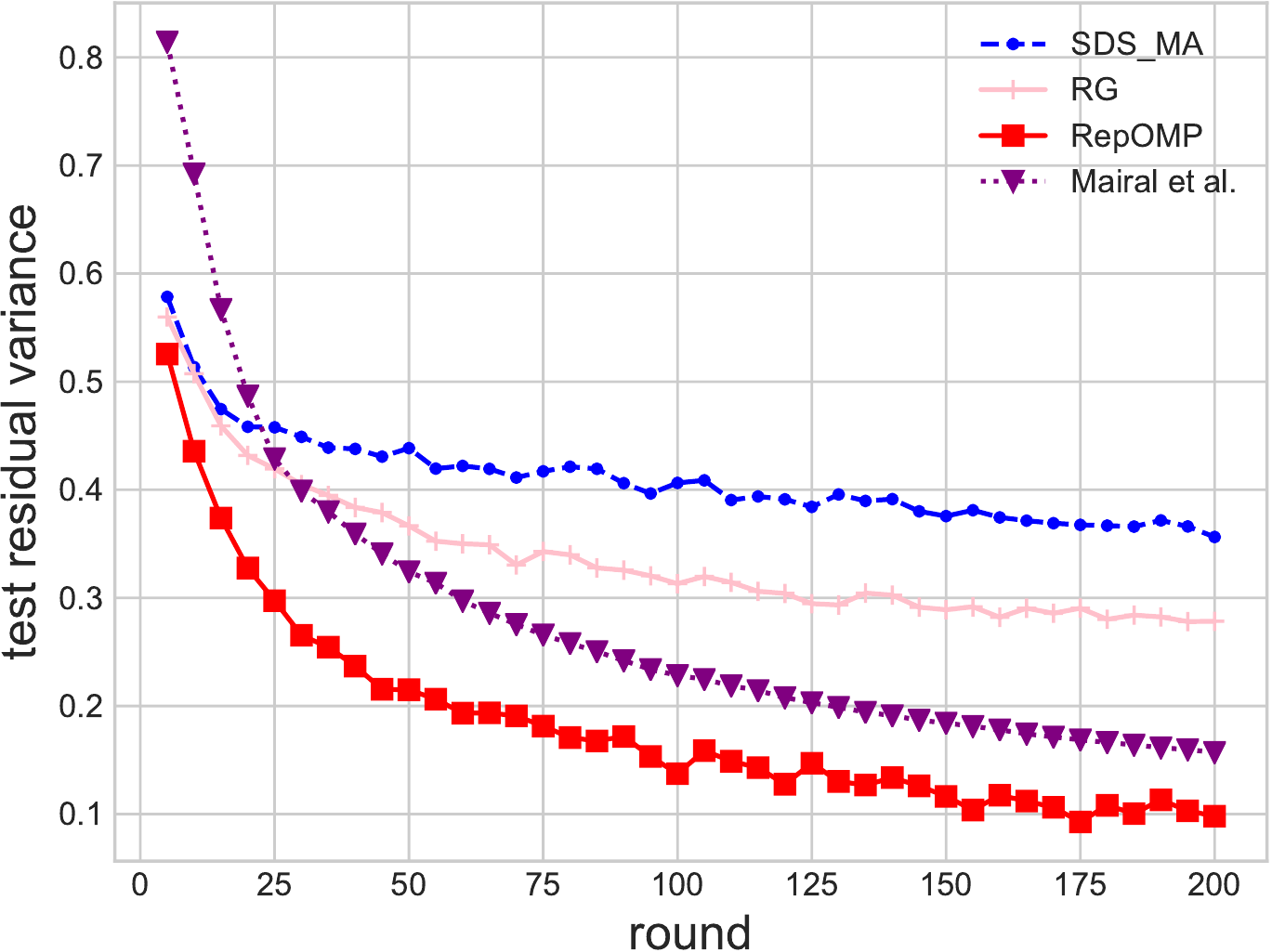}\label{fig:online_synthetic_k20}
}
\subfigure[voc]{
	\includegraphics[width=0.31\textwidth]{./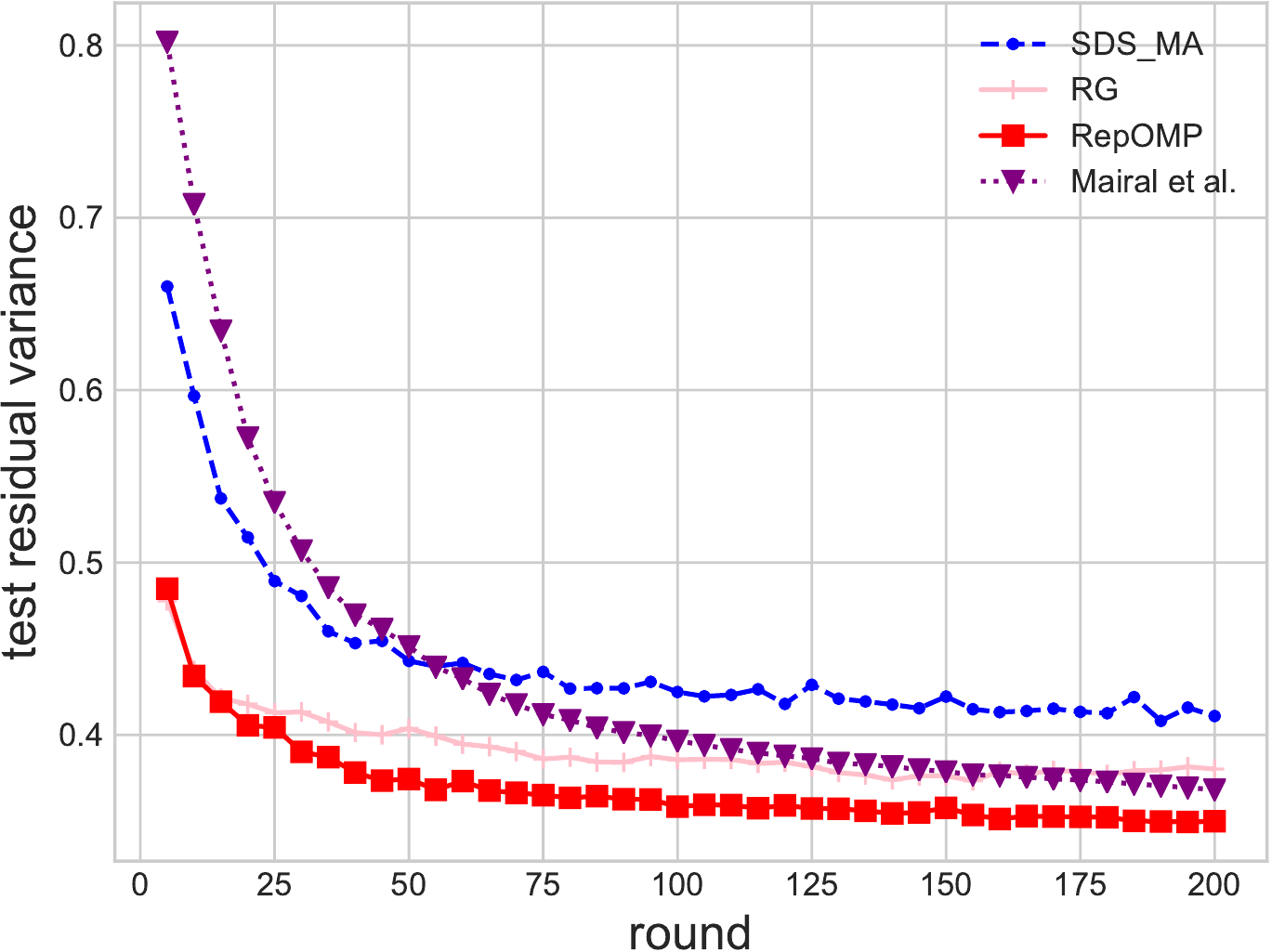}\label{fig:online_voc_k20}
}
\caption{The experimental results for the online setting. In both figures, the horizontal axis indicates the number of rounds. 
    \subref{fig:online_synthetic_k20} is the result with synthetic datasets, and \subref{fig:online_voc_k20} is the result with VOC2006 image datasets.}
\end{figure*}

\newpage

\section*{Acknowledgement}
The authors would thank Taihei Oki and Nobutaka Shimizu for their stimulating discussions.
K.F. was supported by JSPS KAKENHI Grant Number JP 18J12405. T.S. was supported by ACT-I, JST.
This work was supported by JST CREST, Grant Number JPMJCR14D2, Japan. 

\bibliographystyle{abbrvnat}
\bibliography{main}

\newpage
\appendix
\onecolumn

\section{Miscellaneous fact}
The following folklore result is often useful for proving an approximate ratio.

\begin{lemma}\label{lem:misc}
    Suppose that $\Delta_i, r_i \geq 0$ ($i = 1, 2, \dots$) satisfies
    \begin{align}
        \Delta_i \geq C \left(\OPT - \sum_{j = 1}^{i-1} \Delta_j \right) - r_i,
    \end{align}
    for $i = 1, 2, \dots$, for some constants $C \in [0,1]$ and $\OPT \geq 0$.
    Then
    \begin{align}
        \sum_{i=1}^l \Delta_i \geq \left[1 - (1 - C)^l \right] \OPT - \sum_{i=1}^l r_i \geq (1- \exp(-Cl)) \OPT - \sum_{i=1}^l r_i.
    \end{align}
    for any nonnegative integer $l$.
\end{lemma}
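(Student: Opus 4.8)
The plan is to reduce the one-sided recurrence to a clean geometric contraction on the \emph{gap} between $\OPT$ and the running sum. First I would introduce the partial sums $S_i = \sum_{j=1}^i \Delta_j$ with $S_0 = 0$, so that $\Delta_i = S_i - S_{i-1}$ and the hypothesis reads $S_i - S_{i-1} \geq C(\OPT - S_{i-1}) - r_i$, equivalently $S_i \geq C\,\OPT + (1-C)S_{i-1} - r_i$.

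The key step is the substitution $G_i := \OPT - S_i$ (so that $G_0 = \OPT$). Plugging $S_i = \OPT - G_i$ into the rearranged recurrence, the $\OPT$ terms cancel and one is left with the linear contraction
\[
    G_i \leq (1-C)G_{i-1} + r_i,
\]
valid for all $i \geq 1$. This is the heart of the argument: the original additive-error recurrence becomes a simple geometric recurrence in the gap variable, which is trivial to solve.

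Next I would unroll this by induction on $l$ to obtain
\[
    G_l \leq (1-C)^l G_0 + \sum_{i=1}^l (1-C)^{l-i} r_i.
\]
Because $C \in [0,1]$ we have $1-C \in [0,1]$, so every coefficient satisfies $(1-C)^{l-i} \leq 1$, and since $r_i \geq 0$ the error sum is bounded by $\sum_{i=1}^l r_i$. Substituting $G_0 = \OPT$ and translating back via $S_l = \OPT - G_l$ then yields
\[
    \sum_{i=1}^l \Delta_i = S_l \geq \left[1-(1-C)^l\right]\OPT - \sum_{i=1}^l r_i,
\]
which is the first claimed inequality.

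Finally, the second inequality follows from the elementary bound $1 - C \leq e^{-C}$, whence $(1-C)^l \leq e^{-Cl}$ and $1-(1-C)^l \geq 1-e^{-Cl}$; multiplying by $\OPT \geq 0$ preserves the direction of the inequality. I do not anticipate any real obstacle: the only points needing care are the sign bookkeeping when passing to the gap variable $G_i$, and the use of $1-C \in [0,1]$ together with $r_i \geq 0$ to discard the geometric weights on the error terms.
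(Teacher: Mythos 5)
Your proof is correct and is essentially the same argument as the paper's: the paper's induction on the quantity $\OPT - \sum_{i=1}^{l}\Delta_i$ is exactly your gap variable $G_l$, with the same contraction $G_l \leq (1-C)G_{l-1} + r_l$ and the same use of $1-C \in [0,1]$ and $r_i \geq 0$ to absorb the geometric weights on the error terms. The only cosmetic difference is that you unroll the recurrence explicitly while the paper phrases it as an induction.
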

\begin{proof}
    We show 
    \begin{align}\label{eq:misc}
        \OPT - \sum_{i=1}^l \Delta_i \leq \left(1 - C \right)^l \OPT + \sum_{i=1}^l r_i 
    \end{align}
    for $l = 0, 1, 2, \dots$ by the induction on $l$.
    For $l = 0$, \eqref{eq:misc} is trivial.
    For $l\geq 1$, we have
    \begin{align*}
        \OPT - \sum_{i=1}^l \Delta_i 
        &= \OPT - \sum_{i=1}^{l-1} \Delta_i - \Delta_l \\
        &\leq \OPT - \sum_{i=1}^{l-1} \Delta_i - C\left(\OPT - \sum_{j = 1}^{l-1} \Delta_j \right) + r_l \\
        &= \left(1 - C \right)\left(\OPT - \sum_{i=1}^{l-1} \Delta_i \right) + r_l.
    \end{align*}
    Now \eqref{eq:misc} follows from the induction and $1 - C \in [0, 1]$.
\end{proof}

\section{Missing proofs for generalized sparsity constraints}
\label{sec:general-appendix}

\subsection{Individual matroids}
\begin{proposition}\label{prop:matroid-sparsity}
    An individual matroid constraint is $k$-replacement sparse.
\end{proposition}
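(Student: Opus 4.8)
The plan is to treat each coordinate $t$ independently and then stitch the resulting exchanges together one atom at a time. The workhorse is a matroid exchange lemma that I would isolate first: for any matroid $(V,\caI_t)$ and any two independent sets $Z_t, Z_t^* \in \caI_t$, there is a map $\psi_t \colon Z_t^* \setminus Z_t \to (Z_t \setminus Z_t^*) \cup \{\bot\}$ that is injective on $\psi_t^{-1}(Z_t \setminus Z_t^*)$ and satisfies $Z_t - \psi_t(c) + c \in \caI_t$ for every $c \in Z_t^* \setminus Z_t$, with the convention $Z_t - \bot + c = Z_t + c$. Intuitively $\psi_t$ prescribes which element of $Z_t$ (if any) to drop in order to make room for each incoming atom $c$, while using each droppable element at most once.

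I would establish this lemma via Hall's theorem on the bipartite exchange graph. First set $\psi_t(c) = \bot$ whenever $Z_t + c \in \caI_t$. For every remaining $c$, let $C(c)$ be its fundamental circuit with respect to $Z_t$ and connect $c$ to each $b \in C(c) \cap (Z_t \setminus Z_t^*)$; this neighbourhood is nonempty because $C(c) \subseteq Z_t + c \subseteq Z_t \cup Z_t^*$ is dependent and hence cannot lie inside the independent set $Z_t^*$. To check Hall's condition for a subset $S'$ of these atoms, with neighbourhood $N(S') \subseteq Z_t \setminus Z_t^*$, I would set $W = (Z_t \cap Z_t^*) \cup N(S')$ and note that every $c \in S'$ lies in $\mspan(W)$, since $C(c) \setminus \{c\} \subseteq W$. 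Because $S' \cup (Z_t \cap Z_t^*) \subseteq Z_t^*$ is independent and contained in $\mspan(W)$, comparing ranks gives $|S'| + |Z_t \cap Z_t^*| \le \rank(W) \le |W| = |Z_t \cap Z_t^*| + |N(S')|$, so $|N(S')| \ge |S'|$. This rank computation is the one genuinely nontrivial step; the rest of the argument is bookkeeping.

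Given the maps $\psi_t$, I would index the sequence of replacements by the distinct atoms occurring in $\bigcup_t (Z_t^* \setminus Z_t)$. For such an atom $c$ I take $a = c$ and set $Z_t' = Z_t - \psi_t(c) + c$ for every $t$ with $c \in Z_t^* \setminus Z_t$, and $Z_t' = Z_t$ otherwise. The lemma guarantees $Z_t' \in \caI_t$ and $|Z_t \setminus Z_t'| \le 1$, so $(Z_1',\dots,Z_T') \in \caF_c(Z_1,\dots,Z_T)$ is a legitimate feasible replacement. Each $c \in Z_t^* \setminus Z_t$ is added at its own step, hence appears in $Z_t' \setminus Z_t$; and each $b \in Z_t \setminus Z_t^*$ is dropped only at the single step $c$ with $\psi_t(c) = b$, by injectivity, so it is removed at most once. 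Finally, the number of replacements equals the number of distinct atoms in $\bigcup_t (Z_t^* \setminus Z_t)$, which is at most $\lvert \bigcup_t Z_t^* \rvert \le \lvert X^* \rvert \le k$ since the target supports lie in a dictionary of size at most $k$; this yields the claimed $k$-replacement sparsity.
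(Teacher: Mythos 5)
Your proof is correct, and it arrives at the same outer construction as the paper---one feasible replacement per atom of the optimal supports, with the per-$t$ exchanges prescribed by a matroid exchange map---but the way you obtain that map is genuinely different. The paper first extends every $Z_t$ and $Z_t^*$ to bases, invokes the bijective basis-exchange property of matroids to get bijections $\pi_t \colon Z_t^* \to Z_t$ with $Z_t - \pi_t(a^*) + a^*$ independent, and then truncates the dummy atoms from the resulting replacements; you instead work directly with independent sets and prove the required exchange structure from scratch, via fundamental circuits, Hall's theorem, and the rank count $|S'| + |Z_t \cap Z_t^*| \le \rank(W) \le |W|$. Each route buys something. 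The paper's argument is shorter because it delegates the combinatorial core to a known (but nontrivial) theorem, at the price of the extension-and-truncation bookkeeping, which it leaves informal. Yours is self-contained, avoids that step entirely, and has a further advantage: your $\psi_t$ maps $Z_t^* \setminus Z_t$ into $(Z_t \setminus Z_t^*) \cup \{\bot\}$, so no element of $Z_t \cap Z_t^*$ is ever removed. That property is what the downstream analysis (\Cref{lem:replacement-greedy-marginal}) actually uses when it charges the removal terms against $\|(\bw_t^{(Z_t)})_{Z_t \setminus Z_t^*}\|^2$; under the paper's argument it requires choosing $\pi_t$ to fix $Z_t \cap Z_t^*$ pointwise, which the exchange theorem permits but the paper does not state. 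One caveat applies equally to both proofs: the final count rests on $\bigl|\bigcup_t Z_t^*\bigr| \le |X^*| \le k$, i.e., on the target supports lying in a dictionary of size $k$, which is not implied by the formal definition of the constraint family $\caI$ alone (with rank-one uniform matroids and pairwise-distinct singleton supports, up to $T$ replacements are unavoidable); you at least state this assumption explicitly, where the paper leaves it implicit.
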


\begin{proof}
    Let $(Z_1,\cdots,Z_T), (Z^*_1,\cdots,Z^*_T) \in \mathcal{I}$ be arbitrary sparse subsets. First we consider the case where $Z_t$ and $Z^*_t$ are both bases\footnote{For any matroid $(V, \mathcal{I})$, a set $X \in \mathcal{I}$ is called \textit{a base} if it is maximal in $\mathcal{I}$.} of the matroid for all $t \in [T]$. For such $Z_t$ and $Z^*_t$, we can make $k$ replacements as follows: For each $t \in [T]$, there exists a bijection $\pi_t \colon Z^*_t \to Z_t$ by the exchange property of matroids. For each atom $a^* \in \bigcup_{t=1}^T Z^*_t$, we make a replacement that adds $a^*$ to and removes $\pi_t(a^*)$ from $Z_t$ for all $t \in [T]$ such that $a^* \in Z^*_t$.

	If $Z_t$ or $Z^*_t$ is not a base of the matroid, we can add arbitrary atoms to $Z_t$ and $Z^*_t$ until they are both bases, and make $k$ replacements for them in the same way as described above. Removing the atoms that do not exist in $Z_t$ and $Z^*_t$ from these $k$ replacements, we obtain replacements for original $Z_t$ and $Z^*_t$.
\end{proof}

\subsection{Block sparsity}
\begin{proposition}
    A block sparsity constraint is $k$-replacement sparse.
\end{proposition}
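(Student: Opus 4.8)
The plan is to follow the template of the individual-matroid argument in \Cref{prop:matroid-sparsity}: given arbitrary $(Z_1,\dots,Z_T), (Z^*_1,\dots,Z^*_T) \in \caI$ (where, as in that proof, $Z^*_t \subseteq X^*$ for a dictionary $X^*$ with $\abs{X^*}\le k$), I will build one feasible replacement per distinct ``new'' atom $a^* \in \bigcup_{t} (Z^*_t \setminus Z_t)$. Since every such atom lies in $\bigcup_t Z^*_t \subseteq X^*$ with $\abs{X^*} \le k$, this yields at most $k$ non-trivial replacements, which I pad to length exactly $k$ with the trivial replacement $Z'_t = Z_t$ (this lies in $\caF_a$ for any $a$ and changes nothing). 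The replacement $R_{a^*}$ associated with $a^*$ will add $a^*$ to $Z_t$ for every $t$ with $a^* \in Z^*_t \setminus Z_t$, so each element of $Z^*_t \setminus Z_t$ is introduced at least once, settling the first defining condition of $p$-replacement sparsity.

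The substance is to show each $R_{a^*}$ can be made a member of $\caF_{a^*}$ by removing at most one atom from each $Z_t$ while keeping every block union of size at most $s_{b'}$. For a block $b'$ set $U_{b'} = \bigcup_{t \in B_{b'}} Z_t$ and $U^*_{b'} = \bigcup_{t \in B_{b'}} Z^*_t$, both of size $\le s_{b'}$ by feasibility. Because a single replacement only inserts the one atom $a^*$ relative to the original supports, the block union in $R_{a^*}$ is contained in $U_{b'} + a^*$; hence it already has size $\le s_{b'}$ whenever $a^* \in U_{b'}$ or $\abs{U_{b'}} < s_{b'}$, and the only case requiring a removal is $\abs{U_{b'}} = s_{b'}$ with $a^* \notin U_{b'}$.

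The key step is then the removal assignment. Writing $A_{b'} = U^*_{b'} \setminus U_{b'}$ and $R_{b'} = U_{b'} \setminus U^*_{b'}$, I have $\abs{A_{b'}} - \abs{R_{b'}} = \abs{U^*_{b'}} - \abs{U_{b'}}$, so in the tight case $\abs{U_{b'}} = s_{b'}$ this difference equals $\abs{U^*_{b'}} - s_{b'} \le 0$ and there is an injection $A_{b'} \hookrightarrow R_{b'}$. In $R_{a^*}$ I drop the image of $a^*$ under this injection from every $Z_t$ ($t \in B_{b'}$) containing it; this deletes one atom from the union, restoring size $\le s_{b'}$, and removes at most one element from each such $Z_t$. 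The dropped atoms lie in $R_{b'} \subseteq Z_t \setminus Z^*_t$ and are distinct across the atoms handled in block $b'$ by injectivity, so for each fixed $t$ every element of $Z_t \setminus Z^*_t$ is removed at most once, which is the second defining condition.

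The main obstacle I anticipate is controlling the interaction between blocks inside a single replacement: the same $a^*$ may be new in several blocks at once, and one candidate drop atom may belong to the removal pools of several blocks. I expect to dispose of this using the disjointness of the blocks $B_{b'}$: a fixed $t$ lies in at most one block, so $R_{a^*}$ removes at most one atom from each $Z_t$, and no $Z_t$ ever sees the same atom removed twice even if that atom is reused as a drop target in a different block. The remaining checks — that each $R_{a^*} \in \caF_{a^*}(Z_1,\dots,Z_T)$ and that the number of distinct new atoms is at most $k$ — are then routine.
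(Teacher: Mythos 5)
Your proof is correct and follows essentially the same route as the paper's: one feasible replacement per atom of the optimal dictionary (at most $k$ of them), with a per-block matching between new atoms and removable atoms dictating the at-most-one removal in each $Z_t$, and disjointness of blocks guaranteeing consistency. The only cosmetic difference is that the paper pads each block union to size exactly $s_{b'}$ and takes a bijection $Z^*_{b'} \to Z_{b'}$, whereas you avoid padding by a case analysis on tightness and use an injection $U^*_{b'} \setminus U_{b'} \hookrightarrow U_{b'} \setminus U^*_{b'}$; the two devices are interchangeable, and your version makes explicit the details (removed atoms lie in $Z_t \setminus Z^*_t$, injectivity gives the at-most-once condition) that the paper leaves implicit.
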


\begin{proof}
	Let $(Z_1,\cdots,Z_T), (Z^*_1,\cdots,Z^*_T) \in \mathcal{I}$ be arbitrary sparse subsets. We can make $k$ replacements as follows: Let $Z_{b'} = \bigcup_{t \in B_{b'}} Z_t$ and $Z^*_{b'} = \bigcup_{t \in B_{b'}} Z^*_t$. If $|Z_{b'}| < s_{b'}$ or $|Z_{b'}| < s_{b'}$, we can add arbitrary atoms until these inequalities are tight. For each block $b' \in [b]$, we can make a bijection $\pi_t \colon Z^*_b \to Z_b$. For each atom $a^* \in \bigcup_{t=1}^T Z^*_t$, we make one replacement that adds $a^*$ for all $t \in [T]$ such that $a^* \in Z^*_t$ and removes $\pi_t(a^*)$ from all blocks such that $a^* \in \bigcup_{t \in B_{b'}} Z^*_t$.
\end{proof}
We can show the common generalization of an individual matroid sparsity and block sparsity is also $k$-replacement sparse by combining the proofs.

\subsection{Average sparsity without individual sparsity}
%
First we consider an easier case with only a total number constraint, that is, $\mathcal{I} = \{(Z_1,\cdots,Z_T) \mid \sum_{t = 1}^T |Z_t| \le s' \}$. We call it an average sparsity constraint without individual sparsity.
\begin{proposition}\label{prop:average-sparsity-replacement}
	An average sparsity constraint without individual sparsity is $(2k-1)$-replacement sparse.
\end{proposition}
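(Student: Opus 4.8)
The plan is to prove that an average sparsity constraint without individual sparsity, $\mathcal{I} = \{(Z_1,\cdots,Z_T) \mid \sum_{t=1}^T |Z_t| \le s'\}$, is $(2k-1)$-replacement sparse. Fix two members $(Z_1,\dots,Z_T)$ and $(Z_1^*,\dots,Z_T^*)$ of $\mathcal{I}$. The goal is to exhibit a sequence of at most $2k-1$ feasible replacements (each adding a single atom $a$ and deleting at most one element from each $Z_t$, while staying in $\mathcal{I}$) so that every element of $Z_t^* \setminus Z_t$ is inserted into coordinate $t$ at least once, and every element of $Z_t \setminus Z_t^*$ is deleted from coordinate $t$ at most once. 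The essential structural observation is that the recipe $\mathcal{F}_a$ allows us, in a single replacement keyed to atom $a$, to insert $a$ simultaneously into \emph{several} coordinates $t$ (any $t$ with $a \in Z_t^* \setminus Z_t$), while in each such coordinate we may delete one old element to make room. So the accounting is really about matching ``insert $a$ in coordinate $t$'' events against ``delete $b$ from coordinate $t$'' events.

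First I would set up the bound on the number of distinct atoms involved. Because $(Z_1^*,\dots,Z_T^*)\in\mathcal{I}$ the total number of insertions is $\sum_t |Z_t^*\setminus Z_t| \le \sum_t |Z_t^*| \le s'$, and since the dictionary has size $k$ we actually have the sharper bound that the number of \emph{distinct} atoms appearing across all the $Z_t^*$ is at most $k$; similarly the number of distinct atoms in the $Z_t$ is at most $k$. The $p$-replacement definition counts replacements, and each replacement is indexed by one atom $a$ that gets added. So the natural plan is: process the at most $k$ distinct atoms $a^* \in \bigcup_t (Z_t^* \setminus Z_t)$, devoting (in the cleanest case) one replacement to each, where that replacement inserts $a^*$ into all coordinates $t$ with $a^* \in Z_t^*\setminus Z_t$. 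The reason the count is $2k-1$ rather than $k$ is the bookkeeping of deletions: to insert new atoms we must delete old ones, but the definition lets us delete at most one element per coordinate \emph{per replacement}, and a coordinate $t$ may need to shed several old elements (up to $|Z_t\setminus Z_t^*|$) while receiving several new ones.

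The key step is therefore to interleave insertions and deletions so that no coordinate is ever forced to overflow $\mathcal{I}$ and no old element is deleted more than once. I would argue as follows. Order the distinct new atoms $a_1^*,\dots,a_r^*$ (with $r\le k$) and the distinct old-to-be-removed atoms $b_1,\dots,b_q$ (with $q\le k$), and build a bipartite correspondence matching each insertion event to a deletion event within the same coordinate, respecting the total-size budget $s'$. Concretely, within each coordinate $t$ I would pair up the elements of $Z_t^*\setminus Z_t$ with those of $Z_t\setminus Z_t^*$ as far as the smaller set allows; the ``excess'' insertions (where $|Z_t^*\setminus Z_t|$ exceeds $|Z_t\setminus Z_t^*|$) are affordable only because the global budget $\sum_t|Z_t^*|\le s'$ guarantees the total free capacity suffices. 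The reason a single pass of $k$ replacements can fail is that an atom $a^*$ may need to enter a coordinate $t$ whose deletion partner has not yet been freed, or a coordinate may temporarily need to hold more than its final quota; absorbing these conflicts costs up to an extra $k-1$ replacements, giving the bound $2k-1$. I would make this precise by a potential/accounting argument: each of the $k$ new atoms contributes one ``insertion'' replacement, and the at most $k-1$ additional replacements are pure ``shift'' steps that relocate a still-needed old atom into a coordinate where it is wanted, so that the one-deletion-per-coordinate-per-replacement rule is never violated and each old atom is removed from each coordinate at most once.

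The main obstacle I anticipate is exactly this interleaving/scheduling argument: showing that the deletions required across the $T$ coordinates can be scheduled against the insertions so that (i) each individual replacement stays inside $\mathcal{I}$ (the running total never exceeds $s'$), (ii) no coordinate deletes two elements in one replacement, and (iii) the whole schedule has length at most $2k-1$. Unlike the matroid and block-sparsity cases, where a per-coordinate bijection $\pi_t$ cleanly pairs each new atom with an old one and a single $k$-length schedule suffices, the average-sparsity constraint couples all coordinates through the single budget $s'$, so there is no longer a coordinatewise bijection and one must amortize deletions globally. I expect the proof to hinge on a careful choice of processing order (e.g., always performing a deletion-only or shift step whenever a coordinate is at capacity but still owes a removal, before attempting the next insertion), together with a counting lemma bounding the number of such auxiliary steps by $k-1$. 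This is the step I would spend the most care on; the rest (verifying the $p$-replacement definition's two ``appears at least/at most once'' conditions) follows mechanically once the schedule is constructed.
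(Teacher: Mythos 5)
Your setup is fine (restrict to maximal members so that $\sum_t |Z_t| = \sum_t |Z_t^*| = s'$, note $|X|, |X^*| \le k$, and balance insertions against deletions under the one-deletion-per-coordinate rule), and you correctly identify that the global budget coupling is what distinguishes this case from the matroid and block cases. But the proof has a genuine gap precisely at the step you defer to the end: the schedule and the counting lemma \emph{are} the proposition, and the structure you sketch for them cannot work. You allocate one insertion replacement per distinct new atom (at most $k$ of them) plus at most $k-1$ auxiliary ``pure shift'' steps that relocate still-needed old atoms. However, since each replacement must itself lie in $\mathcal{I}$ and, by maximality, can insert at most as many copies of $a^*$ as it deletes old elements --- with at most one deletion per coordinate, and only from coordinates that still have something left to delete --- a single new atom can be forced to spread over many replacements. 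The paper's own tightness example makes this concrete: take $Z_t = \{v_1,\dots,v_k\}$ for $t \le T/k$ and $Z_t = \emptyset$ otherwise, and $Z_t^* = \{v_{k+1}\}$ for $t \le T-k+1$, with the remaining $Z_t^*$ being singletons of $v_{k+2},\dots,v_{2k}$. The single atom $v_{k+1}$ alone requires about $k$ insertion replacements, because every replacement can delete (hence insert) at most $T/k$ elements. Moreover, in this example no old atom $v_1,\dots,v_k$ is wanted in any coordinate, so ``shift'' steps that re-insert old atoms contribute nothing toward the requirement that every element of $Z_t^*\setminus Z_t$ be inserted; the $2k-1$ replacements needed here are \emph{all} insertion replacements, $k$ of them keyed to the same atom $v_{k+1}$, which contradicts your accounting of ``$k$ insertions plus $k-1$ shifts'' even though the total happens to match.

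The missing idea is the paper's two-pointer merge, which abandons both the per-new-atom allocation and the within-coordinate pairing. Order the atoms of $X$ and of $X^*$ arbitrarily and process the two lists in parallel. At each step, with current pair $(a, a^*)$, let $\tau$ be the number of coordinates where $a \in Z_t \setminus Z_t^*$ has not yet been scheduled for removal and $\tau^*$ the number of coordinates where $a^* \in Z_t^* \setminus Z_t$ has not yet been scheduled for insertion; issue one replacement in $\mathcal{F}_{a^*}(Z_1,\dots,Z_T)$ that inserts $a^*$ into $\min(\tau,\tau^*)$ of its coordinates and deletes $a$ from $\min(\tau,\tau^*)$ of its coordinates. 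Each such replacement is balanced, hence feasible against the maximal $(Z_1,\dots,Z_T)$; it deletes each occurrence at most once; and it exhausts at least one of $a$, $a^*$, so a pointer advances. Because $\sum_t |Z_t^*\setminus Z_t| = \sum_t |Z_t\setminus Z_t^*|$ under maximality, both lists run out simultaneously, so the number of replacements is at most $|X| + |X^*| - 1 \le 2k-1$. This merge argument is the counting lemma your proposal needs and does not supply.
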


\begin{proof}
    Let $(Z_1,\cdots,Z_T), (Z^*_1,\cdots,Z^*_T) \in \mathcal{I}$ be arbitrary feasible sparse subsets. We assume $(Z_1,\cdots,Z_T)$ and $(Z^*_1,\cdots,Z^*_T)$ are maximal in $\mathcal{I}$, but we can deal with non-maximal ones by filling them with dummy elements in the same way as the proof of \Cref{prop:matroid-sparsity}. Here we show it is possible to greedily make a sequence of $2k - 1$ feasible replacements $(Z^{r'}_1,\cdots,Z^{r'}_T)_{r'=1}^{2k-1}$ such that each atom in $Z^*_t \setminus Z_t$ appears at least once in the sequence $(Z^{r'}_t \setminus Z_t)_{r'=1}^{2k-1}$ and each atom in $Z_t \setminus Z_t^*$ appears at most once in the sequence $(Z_t \setminus Z^{r'}_t)_{r'=1}^{2k-1}$.

    Let $X$ and $X^*$ be the sets of atoms appearing in $(Z_1, \cdots, Z_T)$ and $(Z^*_1, \cdots, Z^*_T)$, respectively.
    We arrange the atoms in each of $X$ and $X^*$ in an arbitrary order and consider them one by one in parallel. Let us suppose we currently consider $a \in X$ and $a^* \in X^*$. We make a replacement that adds $a^*$ for several $t \in [T]$ and removes $a$ for the other several $t \in [T]$ in the following way. Let $\tau$ be the number of $Z_t \setminus Z^*_t$ that contains $a$, i.e., $\tau = |\{t \in [T] \mid a \in Z_t \setminus Z^*_t\}|$ and $\tau^*$ the number of $Z^*_t \setminus Z_t$ that contains $a^*$, i.e., $\tau = |\{t \in [T] \mid a^* \in Z^*_t \setminus Z_t\}|$. If $\tau > \tau^*$, we can let this replacement add $a^*$ for all $t \in [T]$ such that $a^* \in Z^*_t \setminus Z_t$ and remove $a$ for any subset of $\{t \in [T] \mid a \in Z_t \setminus Z^*_t\}$ with size $\tau^*$. Conversely, if $\tau \le \tau^*$, we can let this replacement add $a^*$ for an arbitrary subset of $\{t \in [T] \mid a^* \in Z^*_t \setminus Z_t\}$ of size $\tau$ and remove $a$ for all $t \in [T]$ such that $a \in Z_t \setminus Z^*_t$. We proceed to a next replacement after removing $a^*$ from $Z^*_t$ for all $t \in [T]$ such that $a^*$ is added in this replacement, and $a$ from $Z_t$ for all $t \in [T]$ such that $a$ is removed in this replacement. If $a \not\in Z_t \setminus Z^*_t$ for all $t \in [T]$, we move the focus from $a$ to the next atom. Similarly, if $a^* \not\in Z^*_t \setminus Z_t$ for all $t \in [T]$, we move the focus from $a$ to the next atom.

    This procedure ends after at most $2k - 1$ iterations. This is because at each iteration we move the focus from $a$ to the next atom in $X$ or from $a^*$ to the next atom in $X^*$, and we obtain $|X| \le k$ and $|X^*| \le k$.
\end{proof}

Here we show this bound is tight for an average sparsity constraint without individual sparsity by giving an example.
\begin{example}
    Assume $T \ge k^2$. For simplicity, we further assume $T$ is a multiple of $k$. Let us consider the case of $s' = T$, i.e., $\mathcal{I} = \{(Z_1,\cdots,Z_T) \mid \sum_{t = 1}^T |Z_t| \le T\}$. Let $V = \{v_1,\cdots,v_{2k}\}$ be the ground set. Here we show the replacement sparsity parameter of this sparsity constraint is at least $2k - 1$ by giving $(Z_1,\cdots,Z_T)$ and $(Z^*_1,\cdots,Z^*_T)$ that require $2k - 1$ replacements. Suppose $Z_t = \{v_1,\cdots,v_k\}$ for $1 \le t \le T / k$ and $Z_t = \emptyset$ for other $t$. Let $Z^*_t = \{v_{k+1}\}$ for $1 \le t \le T - k + 1$ and $Z^*_{T - k + i} = \{v_{k+i}\}$ for each $i = 2, \cdots, k$.  .
	
    It can be seen that we must use $k - 1$ different replacements for $Z^*_{T-k+2}, \cdots, Z^*_{T}$. In each replacement, an added element is restricted to a single atom, but $Z^*_{T-k+2}, \cdots, Z^*_{T}$ are all singleton sets of different atoms. Then elements in $Z^*_{T-k+2}, \cdots, Z^*_{T}$ must be dealt with by different replacements, and $k-1$ replacements are needed.
	
    In addition, we must use $k$ other replacements for $Z^*_1,\cdots,Z^*_{T-k+1}$. Since $(Z_1,\cdots,Z_T)$ is maximal in $\mathcal{I}$, the total number of added atoms of each replacement must be at most the total number of removed atoms of this replacement. However, in each replacement, the number of atoms removed from each $Z_t$ is at most one, and only $Z_1,\cdots,Z_{T/k}$ are non-empty, hence at most $T / k$ elements can be removed in each replacement. Therefore, we must use $k$ different replacements for $Z^*_1,\cdots,Z^*_{T-k+1}$ because there are $T - k + 1$ singleton sets $Z^*_1,\cdots,Z^*_{T-k+1}$ and $T \ge k^2$.

	In conclusion, the replacement sparsity parameter of this sparsity constraint is at least $2k - 1$.
\end{example}

\subsection{Average sparsity}
We bound the replacement sparsity parameter of an average sparsity constraint based on the analysis on average sparsity without individual sparsity.
\begin{proposition}
	An average sparsity constraint is $(3k-1)$-replacement sparse.
\end{proposition}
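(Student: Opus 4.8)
The plan is to reduce the average sparsity constraint to the average sparsity constraint \emph{without} individual sparsity, for which \Cref{prop:average-sparsity-replacement} already gives a $(2k-1)$-replacement bound, and then pay an extra $k$ replacements to repair the individual cardinality constraints $\abs{Z_t}\le s_t$. Recall that the average sparsity constraint is $\caI = \{(Z_1,\dots,Z_T)\mid \abs{Z_t}\le s_t,\ \sum_{t=1}^T\abs{Z_t}\le s'\}$, which is the intersection of the total-budget constraint $\caI' = \{(Z_1,\dots,Z_T)\mid\sum_t\abs{Z_t}\le s'\}$ with the family of individual constraints $\abs{Z_t}\le s_t$. The difficulty that the individual constraints add is that a single feasible replacement (which adds one atom $a$ and removes at most one element from each $Z_t$) may temporarily violate some $\abs{Z_t}\le s_t$ if we try to use the replacements built for $\caI'$ directly, since those were allowed to grow the $Z_t$ without regard to the per-$t$ cap.

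First I would take arbitrary $(Z_1,\dots,Z_T),(Z_1^*,\dots,Z_T^*)\in\caI$ and, as in the earlier proofs, reduce to the maximal case by padding with dummy elements so that $\abs{Z_t}=s_t$ for all $t$ (the padding argument from \Cref{prop:matroid-sparsity} carries over verbatim: we build the replacements for the padded instance and then delete the steps touching dummy atoms). Since both tuples already satisfy the individual caps, I would first run the $(2k-1)$-replacement sequence guaranteed by \Cref{prop:average-sparsity-replacement} for the weaker constraint $\caI'$, which covers every atom in $Z_t^*\setminus Z_t$ at least once and removes every atom in $Z_t\setminus Z_t^*$ at most once. The point is that each individual replacement in that sequence adds a single atom $a^*$ to several coordinates while removing $a$ from at most one coordinate each, so after the whole sequence the symmetric differences are exactly handled; the only issue is intermediate feasibility with respect to the per-$t$ caps.

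The key step is to interleave an extra batch of at most $k$ replacements that pre-empt the cap violations. Concretely, whenever one of the $(2k-1)$ replacements would push some $\abs{Z_t}$ above $s_t$, I would schedule a removal of an element of $Z_t\setminus Z_t^*$ from that coordinate ahead of time; since the total number of distinct atoms appearing in $X=\bigcup_t Z_t$ is at most $k$, and each such atom needs to be removed from the relevant coordinates at most once across the whole process, these preparatory removals can be bundled into at most $k$ additional feasible replacements (each removing one atom from all the coordinates where it is slated for deletion, and adding at most one atom). Combining the $2k-1$ replacements for the total-budget part with these $k$ repair replacements yields a sequence of at most $3k-1$ feasible replacements with the required covering and disjointness properties, proving that the average sparsity constraint is $(3k-1)$-replacement sparse.

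The main obstacle I anticipate is the bookkeeping that shows the repair replacements can genuinely be realized as \emph{feasible} replacements in the sense of \eqref{eq:feasible-replacement} — that is, that each one removes at most one element from each $Z_t$, adds at most one new atom, and keeps the running tuple inside $\caI$ (both the individual caps and the global budget) at every intermediate step. The delicate accounting is to verify that the $k$ repair steps and the $2k-1$ budget steps can be ordered so that no $\abs{Z_t}$ ever exceeds $s_t$ and $\sum_t\abs{Z_t}$ never exceeds $s'$, while still ensuring each target atom appears at least once and each removed atom at most once; this is exactly the kind of greedy-ordering argument used in \Cref{prop:average-sparsity-replacement}, so I would mirror that construction and track the two counters $\tau,\tau^*$ separately for the budget phase and the cap-repair phase.
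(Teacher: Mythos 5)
There is a genuine gap here, and it stems from a misreading of what a ``sequence of feasible replacements'' means. In the definition of $p$-replacement sparsity, every member of the sequence must individually belong to $\caF(Z_1,\dots,Z_T)$ as defined in \eqref{eq:feasible-replacement}: each replacement is a modification of the \emph{original} tuple $(Z_1,\dots,Z_T)$ that must itself lie in $\caI$. The replacements are never applied cumulatively; there is no running tuple, no ordering, and no notion of intermediate feasibility. (This is exactly how the property is consumed in \Cref{lem:replacement-greedy-marginal}: the greedy gain dominates the gain of each $(Z_1^{p'},\dots,Z_T^{p'})$ precisely because each one lies in $\caF$ of the current solution.) Consequently, your plan of ``scheduling a removal ahead of time'' in a separate repair replacement cannot work: if one of the $(2k-1)$ budget replacements adds $a^*$ to a coordinate $t$ with $\abs{Z_t}=s_t$ and removes nothing from that same $Z_t$ \emph{within that same replacement}, then that replacement violates $\abs{Z_t}\le s_t$ and is infeasible, regardless of what any other replacement in the sequence does. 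Your closing paragraph---ordering the steps ``so that no $\abs{Z_t}$ ever exceeds $s_t$,'' keeping ``the running tuple inside $\caI$ at every intermediate step''---describes a sequential process that the definition does not permit. A secondary issue: padding so that $\abs{Z_t}=s_t$ for all $t$ is in general impossible, since $\sum_t s_t$ may exceed $s'$; maximality in $\caI$ does not mean all individual caps are tight.

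The repair---and this is what the paper's proof does---is to pair each addition at a cap-tight coordinate with a removal \emph{inside the same replacement}. Let $S=\{t \in [T]: \abs{Z_t}=s_t\}$. For each $a^*\in X^*$, build one replacement that adds $a^*$ to every $t\in S$ with $a^*\in Z_t^*\setminus Z_t$ and simultaneously removes an atom of $Z_t\setminus Z_t^*$ from each such coordinate, choosing the removed atoms so that they do not overlap across the $k$ replacements; this is possible because $\abs{Z_t^*}\le s_t=\abs{Z_t}$ forces $\abs{Z_t^*\setminus Z_t}\le\abs{Z_t\setminus Z_t^*}$, and it keeps both the caps and the global budget intact, so each of these $k$ replacements is feasible on its own. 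The non-tight coordinates are then handled by the $(2k-1)$ construction of \Cref{prop:average-sparsity-replacement}, where adding a single atom can never violate an individual cap, giving $k+(2k-1)=3k-1$ in total. So while your high-level decomposition ($2k-1$ replacements for the budget plus $k$ for the caps) matches the paper, the deferred-repair mechanism you propose for the cap phase is not realizable under the definition and must be replaced by this per-replacement pairing.
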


\begin{proof}
	Here we give a sequence of $3k - 1$ replacements that satisfies the conditions for replacement sparsity.
    
    First we use $k$ replacements for dealing with the individual sparsity constraints. Let $S \subseteq [T]$ be the set of indices such that $|Z_t| = s_t$. For each $a^* \in X^*$, we make a replacement that adds $a^*$ for all $t \in S$ such that $a^* \in Z^*_t \setminus Z_t$ and possibly removes an atom in $Z_t \setminus Z^*_t$ for all $t \in S$. By selecting the removed atoms so that they do not overlap, we can define these $k$ replacements such that, for all $t \in S$, each atom in $Z^*_t \setminus Z_t$ is added once and each atom in $Z_t \setminus Z^*_t$ is removed once.

    For the rest of the elements, we need not consider the individual sparsity constraints, therefore the rest elements can be dealt with $2k-1$ replacements in the same way as the proof of \Cref{prop:average-sparsity-replacement}.
\end{proof}

\section{Proofs for \ReplacementGreedy{} and \ReplacementOMP{}}\label{sec:alg-appendix}

\subsection{Proof for \ReplacementGreedy{}}
\begin{lemma}\label{lem:replacement-greedy-marginal}
    Assume $\mathcal{I}$ is $p$-replacement sparse.
    Suppose that at some step, the solution is updated from $(Z_1,\cdots,Z_T)$ to $(Z'_1,\cdots,Z'_T)$ by \ReplacementGreedy{}.
    Let $(Z^*_1,\cdots,Z^*_T) \in \argmax_{(Z_1,\cdots,Z_T) \in \mathcal{I} \colon Z_t \subseteq X^*} f_t(Z)$ where $X^*$ is an optimal solution for dictionary selection.
    Then, the marginal gain of \ReplacementGreedy{} is bounded from below as follows:
    \begin{equation*}
        \sum_{t = 1}^T f_t(Z'_t) - \sum_{t = 1}^T f_t(Z_t) \ge \frac{1}{p} \left\{ \frac{m_{2s}}{M_{s,2}} \sum_{t = 1}^T f_t(Z^*_t) - \frac{M_{s,2}}{m_{2s}} \sum_{t = 1}^T f_t(Z_t) \right\}
    \end{equation*}
    where $s = \max_{(Z_t)_{t=1}^T \in \mathcal{I}} \max_{t \in [T]} |Z_t|$.
\end{lemma}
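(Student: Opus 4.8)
The plan is to prove this one-step marginal bound by comparing the greedy move against the $p$ feasible replacements guaranteed by $p$-replacement sparsity, and then pushing the comparison down to the continuous optimization underlying each $f_t$ via restricted smoothness and strong concavity. Write $\bw_t := \bw^{(Z_t)}_t$ and $\bw^*_t := \bw^{(Z^*_t)}_t$, and recall that by first-order optimality $\nabla u_t(\bw_t)$ vanishes on $Z_t$. Since $\caI$ is $p$-replacement sparse, applied to $(Z_1,\dots,Z_T)$ and $(Z^*_1,\dots,Z^*_T)$ there exist $p$ members of $\caF(Z_1,\dots,Z_T)$ covering every element of $Z^*_t\setminus Z_t$ at least once as an addition and every element of $Z_t\setminus Z^*_t$ at most once as a deletion. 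Because \ReplacementGreedy{} selects the single best atom and replacement, its gain $\sum_t\{f_t(Z'_t)-f_t(Z_t)\}$ is at least the \emph{average} gain of these $p$ replacements, which is where the factor $1/p$ comes from; so it suffices to lower-bound the total gain summed over the $p$ replacements.

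First I would estimate each replacement's gain. For a replacement that adds $a$ to $Z_t$ and deletes at most one $b$, I test the perturbation $\bv = \bw_t + \eta\,\bchi_a - (\bw_t)_b\,\bchi_b$, which is supported on the new support and differs from $\bw_t$ in at most two coordinates; restricted smoothness on $\Omega_{s,2}$ then gives $f_t(Z'_t)-f_t(Z_t)\ge \eta\,\nabla u_t(\bw_t)_a - \tfrac{M_{s,2}}{2}\eta^2 - \tfrac{M_{s,2}}{2}(\bw_t)_b^2$, using $\nabla u_t(\bw_t)_b=0$. The crucial, non-obvious choice is the step size: rather than the locally optimal $\eta=\nabla u_t(\bw_t)_a/M_{s,2}$, I would set $\eta = \tfrac{m_{2s}}{M_{s,2}}(\bw^*_t)_a$ for $a\in Z^*_t\setminus Z_t$ (and $\eta=0$ on any other added atom, which only weakens the bound legitimately). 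Summing over the $p$ replacements and using the covering property — each $a\in Z^*_t\setminus Z_t$ charged through one designated appearance, each deleted $b\in Z_t\setminus Z^*_t$ counted at most once — collapses the additive terms into $\tfrac{m_{2s}}{M_{s,2}}\inprod{\nabla u_t(\bw_t), \bw^*_t-\bw_t} - \tfrac{m^2_{2s}}{2M_{s,2}}\norm{(\bw^*_t)_{Z^*_t\setminus Z_t}}^2 - \tfrac{M_{s,2}}{2}\norm{(\bw_t)_{Z_t\setminus Z^*_t}}^2$, where I rewrote $\sum_{a}\nabla u_t(\bw_t)_a(\bw^*_t)_a = \inprod{\nabla u_t(\bw_t), \bw^*_t-\bw_t}$ because the gradient vanishes on $Z_t$.

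Next I would bring in restricted strong concavity on $\Omega_{2s}$ at the pair $(\bw_t,\bw^*_t)$, giving $\inprod{\nabla u_t(\bw_t), \bw^*_t-\bw_t}\ge f_t(Z^*_t)-f_t(Z_t) + \tfrac{m_{2s}}{2}\norm{\bw^*_t-\bw_t}^2$. Substituting and applying the coordinatewise split $\norm{\bw^*_t-\bw_t}^2\ge \norm{(\bw^*_t)_{Z^*_t\setminus Z_t}}^2 + \norm{(\bw_t)_{Z_t\setminus Z^*_t}}^2$, the positive $\tfrac{m^2_{2s}}{2M_{s,2}}\norm{\bw^*_t-\bw_t}^2$ term exactly absorbs the $(\bw^*_t)_{Z^*_t\setminus Z_t}$ penalty and converts the remainder into a favorable $(\bw_t)_{Z_t\setminus Z^*_t}$ term, so the per-$t$ contribution becomes $\tfrac{m_{2s}}{M_{s,2}}\bigl(f_t(Z^*_t)-f_t(Z_t)\bigr) - \tfrac{M^2_{s,2}-m^2_{2s}}{2M_{s,2}}\norm{(\bw_t)_{Z_t\setminus Z^*_t}}^2$. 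Finally, strong concavity at $(\bfzero,\bw_t)$ together with $u_t(\bfzero)=u(\by_t,\bfzero)\ge 0$ gives $\norm{(\bw_t)_{Z_t\setminus Z^*_t}}^2\le\norm{\bw_t}^2\le \tfrac{2}{m_{2s}}f_t(Z_t)$; summing over $t$ and simplifying — the identity $\tfrac{m_{2s}}{M_{s,2}}+\tfrac{M^2_{s,2}-m^2_{2s}}{M_{s,2}m_{2s}}=\tfrac{M_{s,2}}{m_{2s}}$ is what makes everything telescope — yields exactly $\tfrac{m_{2s}}{M_{s,2}}\sum_t f_t(Z^*_t)-\tfrac{M_{s,2}}{m_{2s}}\sum_t f_t(Z_t)$, and dividing by $p$ closes the argument.

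I expect the main obstacle to be precisely the choice $\eta = \tfrac{m_{2s}}{M_{s,2}}(\bw^*_t)_a$. The naive locally optimal step produces a bound of the form $\tfrac{m_{2s}}{M_{s,2}}\sum_t f_t(Z^*_t)-\bigl(\tfrac{m_{2s}}{M_{s,2}}+\tfrac{M_{s,2}}{m_{2s}}\bigr)\sum_t f_t(Z_t)$, which is too weak by the extra $\tfrac{m_{2s}}{M_{s,2}}\sum_t f_t(Z_t)$ term and would degrade the resulting approximation ratio from $m^2_{2s}/M^2_{s,2}$ to $m^2_{2s}/(m^2_{2s}+M^2_{s,2})$. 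Getting the constants to cancel requires scaling the test step toward the optimal weight and spending the concavity-generated slack $\tfrac{m_{2s}}{2}\norm{\bw^*_t-\bw_t}^2$; the secondary bookkeeping obstacle is justifying the covering sum — zeroing the step on repeated additions and using, as in the constructions of \Cref{sec:general}, that deletions only touch $Z_t\setminus Z^*_t$.
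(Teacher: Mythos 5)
Your proof is correct, and its skeleton is exactly that of the paper's: bound the greedy gain by the average gain over the $p$ feasible replacements, estimate each replacement's gain by a two-coordinate smoothness perturbation (using $\nabla u_t(\bw^{(Z_t)}_t)_{Z_t} = \bfzero$), sum via the covering property, invoke restricted strong concavity against the optimal supports, and finish with the comparison against $\bfzero$ to absorb $\norm{(\bw^{(Z_t)}_t)_{Z_t\setminus Z^*_t}}^2$ into $f_t(Z_t)$. The only deviation is local: the paper takes the \emph{locally optimal} perturbation in the smoothness step, yielding the gradient-norm form \eqref{eq:replacement-greedy-general-ineq1}, and then converts $\norm{\nabla u_t(\bw^{(Z_t)}_t)_{Z^*_t\setminus Z_t}}^2$ into function values via the \emph{maximized} concavity bound \eqref{eq:replacement-greedy-general-ineq2}, whereas you plug in the specific step $\eta=\tfrac{m_{2s}}{M_{s,2}}(\bw^*_t)_a$ and use concavity directly at the pair $(\bw^{(Z_t)}_t,\bw^{(Z^*_t)}_t)$. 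These two routes are equivalent and land on identical constants.

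However, your closing justification for that step-size choice is mathematically wrong, and it is worth correcting. Since the locally optimal step maximizes $\eta\nabla u_t(\bw^{(Z_t)}_t)_a - \tfrac{M_{s,2}}{2}\eta^2$ over all $\eta$, it dominates your specific choice pointwise, so it cannot possibly lead to a weaker final bound. The degraded bound $\tfrac{m_{2s}}{M_{s,2}}\sum_t f_t(Z^*_t)-\bigl(\tfrac{m_{2s}}{M_{s,2}}+\tfrac{M_{s,2}}{m_{2s}}\bigr)\sum_t f_t(Z_t)$ that you attribute to the locally optimal step actually arises from a different mistake: dropping the negative term $-\tfrac{m_{2s}}{2}\norm{(\bw^{(Z_t)}_t)_{Z_t\setminus Z^*_t}}^2$ when upper-bounding $f_t(Z^*_t)-f_t(Z_t)$ by the gradient norm. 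Keeping that term, as \eqref{eq:replacement-greedy-general-ineq2} does, supplies precisely the slack that your $\tfrac{m_{2s}}{2}\norm{\bw^{(Z^*_t)}_t-\bw^{(Z_t)}_t}^2$ term supplies in your version; the constants then cancel identically in both arguments. So the "non-obvious step size" is not where the difficulty lies --- either step works, provided the concavity inequality is used in its sharp form.
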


\begin{proof}
    Note that from the condition on feasible replacements, we have $|Z_t \triangle Z'_t| \le 2$. Since $u_t$ is $M_{s,2}$-smooth on $\Omega_{s,2}$, it holds that for any $\bz \in \R^n$ with $\supp(\bz) \subseteq Z'_t \setminus Z_t$, 
    \begin{align*}
        f_t(Z'_t) - f_t(Z_t)
        &= u_t (\bw^{(Z'_t)}) - u_t (\bw^{(Z_t)})\\
        &\ge u_t ((\bw^{(Z_t)})_{Z_t \cap Z'_t} + \bz) - u_t (\bw^{(Z_t)})\\
        &\ge \left\langle \nabla u_t (\bw^{(Z_t)}), \bz - (\bw^{(Z_t)})_{Z_t \setminus Z'_t} \right\rangle - \frac{M_{s,2}}{2} \norm{ \bz - (\bw^{(Z_t)})_{Z_t \setminus Z'_t}}^2
    \end{align*}
    Since this inequality holds for every $\bz$ with $\supp(\bz) \subseteq Z'_t \setminus Z_t$, by optimizing it for $\bz$, we obtain
    \begin{equation}\label{eq:replacement-greedy-general-ineq1}
        f_t(Z'_t) - f_t(Z_t)
        \ge \frac{1}{2M_{s,2}} \norm{\nabla u_t(\bw_t^{(Z_t)})_{Z'_t \setminus Z_t}}^2 - \frac{M_{s,2}}{2} \norm{(\bw_t^{(Z_t)})_{Z_t \setminus Z'_t} }^2.
    \end{equation}
    In addition, due to the strong concavity of $u_t$, we have
    \begin{align}\label{eq:replacement-greedy-general-ineq2}
        f_t(Z_t^*) - f_t(Z_t)
        &= u_t(\bw^{(Z_t^*)}) - u_t(\bw^{(Z_t)}) \nonumber\\
        &\le \left\langle \nabla u_t (\bw_t^{(Z_t)}), \bw_t^{(Z^*_t)} - \bw_t^{(Z_t)} \right\rangle - \frac{m_{2s}}{2} \left\| \bw_t^{(Z_t^*)} - \bw_t^{(Z_t)} \right\|^2 \nonumber \\
        &\le \max_{\bz \colon \supp(\bz) \subseteq Z_t^*} \left\{ \left\langle \nabla u_t (\bw_t^{(Z_t)}), \bz - \bw_t^{(Z_t)} \right\rangle - \frac{m_{2s}}{2} \left\| \bz - \bw_t^{(Z_t)} \right\|^2 \right\} \nonumber \\
        &= \frac{1}{2m_{2s}} \left\| (\nabla u_t (\bw^{(Z_t)}))_{Z_t^* \setminus Z_t} \right\|^2 - \frac{m_{2s}}{2} \left\| (\bw^{(Z_t)})_{Z_t \setminus Z_t^*} \right\|^2.
    \end{align}

    Similarly, due to the strong concavity of $u_t$, we have
    \begin{align}\label{eq:replacement-greedy-general-ineq3}
        - f_t(Z_t)
        &= u_t(\bfzero) - u_t(\bw_t^{(Z_t)}) \nonumber \\
        &\le \left\langle \nabla u_t (\bw_t^{(Z_t)}), - \bw_t^{(Z_t)} \right\rangle - \frac{m_{2s}}{2} \left\| \bw_t^{(Z_t)} \right\|^2 \nonumber \\
        &= - \frac{m_{2s}}{2} \left\| \bw_t^{(Z_t)} \right\|^2 \nonumber \\
        &\le - \frac{m_{2s}}{2} \left\| (\bw_t^{(Z_t)})_{Z_t \setminus Z^*_t} \right\|^2
    \end{align}
    Since $\mathcal{I}$ is $p$-replacement sparse, we can take a sequence of $p$ replacements $(Z^{p'}_1, \cdots, Z^{p'}_T)_{p'=1}^p$ such that
    \begin{itemize}
        \item $(Z^{p'}_1, \cdots, Z^{p'}_T) \in \mathcal{F}(Z_1, \cdots, Z_T)$,
        \item each element in $Z^*_t \setminus Z_t$ appears at least once in sequence $(Z^{p'}_t \setminus Z_t)_{p'=1}^p$ for each $t \in [T]$,
        \item each element in $Z_t \setminus Z_t^*$ appears at most once in sequence $(Z_t \setminus Z^{p'}_t)_{p'=1}^p$ for each $t \in [T]$.
    \end{itemize}

    Now we prove the lemma by utilizing these properties.
    \begin{align*}
        &\sum_{t = 1}^T f_t(Z'_t) - \sum_{t = 1}^T f_t(Z_t) \\
        &\ge \frac{1}{p} \sum_{p' = 1}^{p} \left\{ \sum_{t = 1}^T f_t(Z^{p'}_t) - \sum_{t = 1}^T f_t(Z_t) \right\}
        \tag{by the choice of $(Z_1', \dots, Z_T')$ and the feasibility of $(Z^{p'}_1,\cdots,Z^{p'}_T)$} 
        \\
        &\ge \frac{1}{p} \sum_{p'=1}^p \sum_{t = 1}^T \left\{ \frac{1}{2M_{s,2}} \norm*{\nabla u_t(\bw_t^{(Z_t)})_{Z^{p'}_t \setminus Z_t}}^2 - \frac{M_{s,2}}{2} \norm*{\left( \bw_t^{(Z_t)} \right)_{Z_t \setminus Z^{p'}_t}}^2 \right\}
        \tag{by \eqref{eq:replacement-greedy-general-ineq1}}
        \\
    &\ge \frac{1}{p} \sum_{t = 1}^T \left\{ \frac{1}{2M_{s,2}} \norm*{\nabla u_t(\bw_t^{(Z_t)})_{Z^*_t \setminus Z_t}}^2 - \frac{M_{s,2}}{2} \norm*{\left( \bw_t^{(Z_t)} \right)_{Z_t \setminus Z_t^*}}^2 \right\}
        \tag{by the property of $(Z_t^{p'})_{p'=1}^p$} \\
        &\ge \frac{1}{p} \sum_{t = 1}^T \left\{ \frac{m_{2s}}{M_{s,2}} \left( f_t(Z_t^*) - f_t(Z_t) \right) - \left( \frac{M_{s,2}}{m_{2s}} - \frac{m_{2s}}{M_{s,2}} \right) f_t(Z_t) \right\} 
    \tag{by \eqref{eq:replacement-greedy-general-ineq2} and \eqref{eq:replacement-greedy-general-ineq3}}
        \\
        &= \frac{1}{p} \sum_{t = 1}^T \left\{ \frac{m_{2s}}{M_{s,2}} f_t(Z_t^{*}) - \frac{M_{s,2}}{m_{2s}} f_t(Z_t) \right\}.
    \end{align*}
\end{proof}

Combined with \Cref{lem:misc}, \Cref{thm:replacement-greedy-1} is obtained.

\subsection{Proof for \ReplacementOMP{}}
\begin{lemma}\label{lem:replacement-omp-marginal}
    Assume $\mathcal{I}$ is $p$-replacement sparse.
    Suppose at some step, the solution is updated from $(Z_1,\cdots,Z_T)$ to $(Z'_1,\cdots,Z'_T)$ by \ReplacementOMP{}.
    Let $(Z^*_1,\cdots,Z^*_T) \in \argmax_{(Z_1,\cdots,Z_T) \in \mathcal{I} \colon Z_t \subseteq X^*} f_t(Z)$ where $X^*$ is an optimal solution for dictionary selection.
    Then, the marginal gain of \ReplacementOMP{} is bounded from below as follows:
    \begin{equation*}
        \sum_{t = 1}^T f_t(Z'_t) - \sum_{t = 1}^T f_t(Z_t) \ge \frac{1}{p} \left\{ \frac{m_{2s}}{M_{s,2}} \sum_{t = 1}^T f_t(Z^*_t) - \frac{M_{s,2}}{m_{2s}} \sum_{t = 1}^T f_t(Z_t) \right\}
    \end{equation*}
    where $s = \max_{(Z_t)_{t=1}^T \in \mathcal{I}} \max_{t \in [T]} |Z_t|$.
\end{lemma}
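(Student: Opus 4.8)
The plan is to follow the proof of \Cref{lem:replacement-greedy-marginal} almost verbatim, the only genuinely new point being how to cope with the fact that \ReplacementOMP{} maximizes the quadratic proxy \eqref{eq:replacement-omp-marginal-gain-1} rather than the true marginal gain $\sum_{t=1}^T\{f_t(Z'_t)-f_t(Z_t)\}$. Write $\Phi(Z'_1,\dots,Z'_T) = \frac{1}{M_{s,2}}\sum_{t=1}^T\norm{\nabla u_t(\bw_t^{(Z_t)})_{Z'_t\setminus Z_t}}^2 - M_{s,2}\sum_{t=1}^T\norm{(\bw_t^{(Z_t)})_{Z_t\setminus Z'_t}}^2$ for this proxy, so that the chosen replacement $(Z'_1,\dots,Z'_T)$ attains $\max_{(\cdot)\in\caF}\Phi$ (the max over $a$ together with the inner max is exactly a maximization over $\caF=\bigcup_{a\in V}\caF_a$).

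First I would bound the true gain of the chosen replacement below by half of its proxy value. Applying inequality \eqref{eq:replacement-greedy-general-ineq1} to each $t$ — which holds for \emph{any} feasible replacement, since its derivation used only $M_{s,2}$-smoothness and that a feasible replacement changes each $Z_t$ by adding $a$ and removing at most one element — and summing over $t$ gives $\sum_{t=1}^T\{f_t(Z'_t)-f_t(Z_t)\}\ge\frac{1}{2}\Phi(Z'_1,\dots,Z'_T)$. The coefficients $\frac{1}{2M_{s,2}}$ and $\frac{M_{s,2}}{2}$ in \eqref{eq:replacement-greedy-general-ineq1} are exactly half of those defining $\Phi$, which is the source of this factor $\frac{1}{2}$.

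Next I would exploit the optimality of the chosen replacement with respect to $\Phi$. Since $(Z'_1,\dots,Z'_T)$ maximizes $\Phi$ over all of $\caF$, its value dominates $\Phi$ evaluated at every member of the $p$-replacement sequence $(Z^{p'}_1,\dots,Z^{p'}_T)_{p'=1}^p$ supplied by $p$-replacement sparsity (each of which lies in $\caF$), hence dominates the average $\frac{1}{p}\sum_{p'=1}^p\Phi(Z^{p'}_1,\dots,Z^{p'}_T)$. Using the two combinatorial properties of this sequence exactly as in \Cref{lem:replacement-greedy-marginal} — each coordinate of $Z^*_t\setminus Z_t$ appears at least once among the added sets, so the gradient terms only grow when passing to $Z^*_t\setminus Z_t$, and each coordinate of $Z_t\setminus Z^*_t$ appears at most once among the removed sets, so the subtracted weight terms do not exceed $\norm{(\bw_t^{(Z_t)})_{Z_t\setminus Z^*_t}}^2$ — I would lower-bound this average by $\frac{1}{p}\sum_{t=1}^T\{\frac{1}{M_{s,2}}\norm{\nabla u_t(\bw_t^{(Z_t)})_{Z^*_t\setminus Z_t}}^2 - M_{s,2}\norm{(\bw_t^{(Z_t)})_{Z_t\setminus Z^*_t}}^2\}$. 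Carrying through the factor $\frac{1}{2}$ from the previous step reproduces precisely the intermediate expression appearing in the \ReplacementGreedy{} proof immediately after its ``property of $(Z^{p'}_t)_{p'=1}^p$'' step.

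From that common intermediate expression the remainder is identical to \Cref{lem:replacement-greedy-marginal}: apply the strong-concavity inequalities \eqref{eq:replacement-greedy-general-ineq2} and \eqref{eq:replacement-greedy-general-ineq3} term by term to turn the gradient and weight norms into $f_t(Z^*_t)$ and $f_t(Z_t)$, yielding $\frac{1}{p}\{\frac{m_{2s}}{M_{s,2}}\sum_{t=1}^T f_t(Z^*_t) - \frac{M_{s,2}}{m_{2s}}\sum_{t=1}^T f_t(Z_t)\}$. The main thing to get right — and the only place where this proof genuinely departs from the \ReplacementGreedy{} analysis — is the bookkeeping in the first two steps: checking that the proxy's omission of the factor $\frac{1}{2}$ is exactly compensated by \eqref{eq:replacement-greedy-general-ineq1}, and that maximality of $\Phi$ is taken over all of $\caF$ (every atom), so that the $p$-replacement sequence is admissible as a comparator. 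Once these are aligned, everything downstream is word-for-word the adaptation already carried out for \ReplacementGreedy{}.
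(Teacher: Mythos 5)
Your proposal is correct and follows essentially the same route as the paper's proof: apply the smoothness bound \eqref{eq:replacement-greedy-general-ineq1} to the chosen replacement (noting the proxy is exactly twice its right-hand side), use optimality of the proxy over $\caF$ to pass to the average over the $p$-replacement sequence, invoke the two combinatorial properties of that sequence, and finish with the strong-concavity inequalities \eqref{eq:replacement-greedy-general-ineq2} and \eqref{eq:replacement-greedy-general-ineq3}. In particular, you correctly identified the one place where the argument must deviate from the \ReplacementGreedy{} analysis — the order of the first two inequalities is swapped because \ReplacementOMP{} maximizes the proxy rather than the true gain — which is precisely how the paper handles it.
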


\begin{proof}
    We can obtain the following inequalities from the strong concavity and smoothness of $u_t$ in the same way as the above proof of \Cref{lem:replacement-greedy-marginal}.
    \begin{align}
        f_t(Z'_t) - f_t(Z_t) &\ge \frac{1}{2M_{s,2}} \norm*{\nabla u_t(\bw_t^{(Z_t)})_{Z'_t \setminus Z_t}}^2 - \frac{M_{s,2}}{2} \norm*{(\bw_t^{(Z_t)})_{Z_t \setminus Z'_t} }^2.\label{eq:replacement-omp-general-ineq1}\\
        f_t(Z_t^*) - f_t(Z_t) &\le \frac{1}{2m_{2s}} \left\| (\nabla u_t (\bw^{(Z_t)}))_{Z_t^* \setminus Z_t} \right\|^2 - \frac{m_{2s}}{2} \left\| (\bw^{(Z_t)})_{Z_t \setminus Z_t^*} \right\|^2.\label{eq:replacement-omp-general-ineq2}\\
        - f_t(Z_t)
        &\le - \frac{m_{2s}}{2} \left\| (\bw_t^{(Z_t)})_{Z_t \setminus Z^*_t} \right\|^2.\label{eq:replacement-omp-general-ineq3}
    \end{align}

    Since $\mathcal{I}$ is $p$-replacement sparse, we can take a sequence of $p$ replacements $(Z^{p'}_1, \cdots, Z^{p'}_T)_{p'=1}^p$ that satisfies the properties mentioned in the proof of \Cref{lem:replacement-greedy-marginal}. With these properties, we obtain
    \begin{align*}
        &\sum_{t = 1}^T f_t(Z'_t) - \sum_{t = 1}^T f_t(Z_t) \\
        &\ge \sum_{t = 1}^T \left\{ \frac{1}{2M_{s,2}}  \norm*{\nabla u_t(\bw_t^{(Z_t)})_{Z'_t \setminus Z_t}}^2 - \frac{M_{s,2}}{2} \norm*{(\bw_t^{(Z_t)})_{Z_t \setminus Z'_t}}^2 \right\} 
        \tag{by \eqref{eq:replacement-omp-general-ineq1}}
        \\
        &\ge \frac{1}{p} \sum_{p'=1}^p \sum_{t = 1}^T \left\{ \frac{1}{2M_{s,2}} \norm*{\nabla u_t(\bw_t^{(Z_t)})_{Z^{p'}_t \setminus Z_t}}^2 - \frac{M_{s,2}}{2} \norm*{\left( \bw_t^{(Z_t)} \right)_{Z_t \setminus Z^{p'}_t}}^2 \right\}
        \tag{by the choice of $(Z_1', \dots, Z_T')$ and the feasibility of $(Z^{p'}_1,\cdots,Z^{p'}_T)$} 
        \\
    &\ge \frac{1}{p} \sum_{t = 1}^T \left\{ \frac{1}{2M_{s,2}} \norm*{\nabla u_t(\bw_t^{(Z_t)})_{Z^*_t \setminus Z_t}}^2 - \frac{M_{s,2}}{2} \norm*{\left( \bw_t^{(Z_t)} \right)_{Z_t \setminus Z_t^*}}^2 \right\}
        \tag{by the property of $(Z_t^{p'})_{p'=1}^p$} \\
        &\ge \frac{1}{p} \sum_{t = 1}^T \left\{ \frac{m_{2s}}{M_{s,2}} \left( f_t(Z_t^*) - f_t(Z_t) \right) - \left( \frac{M_{s,2}}{m_{2s}} - \frac{m_{2s}}{M_{s,2}} \right) f_t(Z_t) \right\} 
    \tag{by \eqref{eq:replacement-omp-general-ineq2} and \eqref{eq:replacement-omp-general-ineq3}}
        \\
        &= \frac{1}{p} \sum_{t = 1}^T \left\{ \frac{m_{2s}}{M_{s,2}} f_t(Z_t^{*}) - \frac{M_{s,2}}{m_{2s}} f_t(Z_t) \right\}.
    \end{align*}
\end{proof}

Combined with \Cref{lem:misc}, we obtain \Cref{thm:replacement-omp-general}.

\subsubsection{About greedy selection at each step}
Next we consider how to find the atom with the largest gain at each step of \ReplacementOMP{} for the average sparsity constraints.

First we show that this task reduces to weighted bipartite matching.
Let us fix an atom $a^*$ because we can simply check all the atoms in $V$.
Let $g_t = (\nabla u_t(\bw^{(Z_t)}))^2_{a^*}$ and $c_t = \min_{a \in Z_t} (\bw^{(Z_t)})^2_{a}$ for each $t \in [T]$. Let $S = \{t \in [T] \mid |Z_t| = s_t\}$ be the set of $t \in [T]$ such that the constraint on $|Z_t|$ is tight.

For each $a^* \in V$, the problem of finding the best replacement can be formulated as follows: 
The goal is to maximize $\sum_{t \in A} g_t - \sum_{t \in B} c_t$ by selecting $A \subseteq [T]$ (the set of indices $t$ such that $a^*$ is added to $Z_t$) and $B \subseteq [T]$ (the set of indices $t$ such that an atom is removed from $Z_t$).
We have two constraints on $A$ and $B$. 
The first constraint is $|A| - |B| \le \theta$ where $\theta = s' - \sum_{t = 1}^T |Z_t|$, derived from the total number constraint $\sum_{t=1}^T |Z_t|$. 
The second constraint is $A \cap S \subseteq B$, derived from the individual constraint $|Z_t| \le s_t$. 
In summary, the formulation as an optimization problem is:
\begin{align*}
    \max_{A,B \subseteq [T]} \quad & \sum_{t \in A} g_t - \sum_{t \in B} c_t\\
    \text{subject to} \quad & |A| - |B| \le \theta\\
    & A \cap S \subseteq B.
\end{align*}
This problem can be regarded as a special case of maximum weight bipartite matching problem. 
Let $U = [T]$ and $V = [T] \cup \{d_1,\cdots,d_\theta\}$ be the set of vertices where $d_1,\cdots,d_\theta$ are dummy elements with zero cost, i.e., $c_{d_i} = 0$ for all $i \in [\theta]$. 
Let $E = \{(t, t) \mid t \in S\} \cup (U \setminus S) \times V$ be the set of edges. The weight of each edge $(\alpha, \beta) \in E$ is defined as $w((\alpha, \beta)) = g_\alpha - c_\beta$. 
Then any matching $M \subseteq E$ in this graph corresponds to a solution $A = \partial M \cap U$ and $B = \partial M \cap V \setminus \{d_1,\cdots,d_\theta\}$ in the above optimization problem. 

\begin{algorithm}
\caption{Calculation of the gain for average sparsity constraints}\label{alg:greedy-selection-average-sparsity}
    \begin{algorithmic}[1]
    \REQUIRE $S = \{t \in [T] \mid |Z_t| = s_t\}$ the set of indices $t$ such that $Z_t$ is tight, $g_t = (\nabla u_t(\bw^{(Z_t)}))^2_{a^*}$, $c_t = \min_{a \in Z_t} (\bw^{(Z_t)})^2_{a}$ for each $t \in [T]$, and $\theta = s' - \sum_{t = 1}^T |Z_t|$.
    \ENSURE $A, B \subseteq [T]$ maximizing $\sum_{t \in A} g_t - \sum_{t \in B} c_t$ subject to $A \cap S \subseteq B$ and $|A| \le |B| + \theta$.
	\STATE Initialize $A_0 \gets \emptyset$ and $B_0 \gets \emptyset$.
    \STATE Let $S = \{t \in [T] \mid |Z_t| = s_t\}$.
    \STATE Sort $t \in [T] \setminus S$ according to $g_t$ into the priority queue $Q_1$ in descending order.
    \STATE Sort $t \in [T]$ according to $c_t$ into the priority queue $Q_2$ in ascending order.
    \STATE Sort $t \in S$ according to $g_t - c_t$ into the priority queue $Q_3$ in descending order.
    \FOR{$i = 1,\cdots,T$}
        \STATE Let $\alpha$, $\beta$ and $\gamma$ be the top elements in $Q_1$, $Q_2$, and $Q_3$, respectively.
        \IF{$g_\alpha - c_\beta \bfone \{|A_{i-1}| = |B_{i-1}| + \theta\} \le 0$ and $g_\gamma - c_\gamma \le 0$}
            \STATE \textbf{return} $A_{i-1}$ and $B_{i-1}$
        \ELSE
            \IF{$g_\alpha - c_\beta \bfone \{|A_{i-1}| = |B_{i-1}| + \theta\} \ge g_\gamma - c_\gamma$}
                \STATE $A_i \gets A_{i-1} + \alpha$ and remove $\alpha$ from $Q_1$.
                \IF{$|A_{i-1}| = |B_{i-1}| + \theta$}
                \STATE $B_i \gets B_{i-1} + \beta$ and remove $\beta$ from $Q_2$.
                \IF{$\beta \in S$}
                    \STATE Remove $\beta$ from $Q_3$ and add $\beta$ to $Q_1$.
                \ENDIF
                \ENDIF
            \ELSE
                \STATE $A_i \gets A_{i-1} + \gamma$ and $B_i \gets B_{i-1} + \gamma$.
                \STATE Remove $\gamma$ from $Q_3$.
            \ENDIF
        \ENDIF
    \ENDFOR
    \STATE \textbf{return} $A_T$ and $B_T$
\end{algorithmic}
\end{algorithm}

Here we give a fast greedy method for calculating the gain of each atom. This algorithm can be executed in $\mathrm{O}(T \log T)$ time. The detailed description of this algorithm is given in \Cref{alg:greedy-selection-average-sparsity}.

\begin{proposition}
    \Cref{alg:greedy-selection-average-sparsity} returns an optimal solution in $\mathrm{O}(T \log T)$ time.
\end{proposition}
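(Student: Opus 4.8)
The plan is to prove two things separately: correctness (the greedy choices yield an optimal solution to the matching-based optimization problem) and the $\mathrm{O}(T \log T)$ running-time bound. I would begin by recalling the reduction already established in the excerpt: finding the best replacement for a fixed atom $a^*$ is equivalent to choosing $A, B \subseteq [T]$ maximizing $\sum_{t \in A} g_t - \sum_{t \in B} c_t$ subject to $A \cap S \subseteq B$ and $|A| \le |B| + \theta$, which in turn is a maximum weight bipartite matching. The algorithm maintains partial solutions $A_i, B_i$ and at each iteration greedily picks whichever of three candidate moves gives the largest positive marginal gain: adding a free index $\alpha \in [T] \setminus S$ to $A$ (paying for a removal via $\beta$ only when the cardinality constraint is tight), or adding a tight index $\gamma \in S$ simultaneously to both $A$ and $B$ with net weight $g_\gamma - c_\gamma$.

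For correctness I would argue that this greedy process is exactly the greedy algorithm for finding a maximum weight matching/independent set in the associated structure, and that the constraint structure forms a matroid (or the intersection with a suitable partition-type structure) on which greedy is optimal. More concretely, I would show by an exchange argument that the partial solution $(A_i, B_i)$ is always extendable to an optimal solution: each move increases the objective by the current best available positive marginal gain, and the termination condition (line~8) stops precisely when no move has positive gain. The key invariant to maintain is that $|A_i| \le |B_i| + \theta$ and $A_i \cap S \subseteq B_i$ hold throughout, which is enforced by the conditional on line~13 (paying a removal only when the budget is exhausted) and by adding $\gamma$ to both sets on line~20. I would verify that the priority queues correctly expose the best free-index candidate ($\alpha$ from $Q_1$), the cheapest removable index ($\beta$ from $Q_2$), and the best tight-index move ($\gamma$ from $Q_3$), and that the subtle queue-update on lines~16--17 (when a removed $\beta \in S$ frees up an index that may now be added) preserves these semantics.

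For the running time, the argument is routine: the three priority queues are built in $\mathrm{O}(T \log T)$ time, the loop runs at most $T$ iterations, and each iteration performs a constant number of queue operations (top, insert, delete), each costing $\mathrm{O}(\log T)$. This gives the claimed $\mathrm{O}(T \log T)$ bound per atom $a^*$.

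I expect the main obstacle to be the correctness proof, specifically justifying the exchange argument in the presence of the two interacting constraints and the queue-reinsertion step. The difficulty is that adding a tight-index move ($\gamma \in S$) and adding a free-index-plus-removal move are not independent: a removal at some $\beta \in S$ can subsequently convert $\beta$ into an addable free index, so the greedy choices are coupled across iterations. I would handle this by formulating the problem cleanly as optimizing over the matching polytope and showing the greedy order respects an exchange property; the cardinality slack $\theta$ and the ``pay only when tight'' rule must be shown to never cause the greedy to forgo a globally better solution. Establishing that the local greedy criterion (compare $g_\alpha - c_\beta \bfone\{\cdot\}$ against $g_\gamma - c_\gamma$) is a valid proxy for the true marginal gain under the matroid-like constraints is where the real work lies.
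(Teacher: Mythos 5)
Your running-time argument is fine and coincides with the paper's (build three heaps in $\mathrm{O}(T\log T)$, at most $T$ iterations, $\mathrm{O}(\log T)$ heap work per iteration). The genuine gap is in the correctness part, which is the substance of the proposition, and your primary route for it would fail. The feasible family $\{(A,B) : A\cap S\subseteq B,\ |A|\le|B|+\theta\}$ is not a matroid and is not even down-closed --- deleting an element of $B$ can destroy feasibility of both constraints --- and the objective is not monotone, since elements of $B$ contribute negatively. Likewise, appealing to greedy optimality for maximum-weight bipartite matching is invalid: greedy is only a $1/2$-approximation for weighted matching in general. What makes greedy \emph{exact} here is the special separable form of the edge weights, $w((\alpha,\beta)) = g_\alpha - c_\beta$, together with the near-complete bipartite structure; this must be exploited by a problem-specific exchange argument, not by citing general matroid or matching theory.

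Your fallback (an exchange argument) is the right genre, but you explicitly defer it (``where the real work lies''), so the claim is not established. For comparison, the paper's proof runs the exchange argument with a different and cleaner invariant: $(A_i,B_i)$ is \emph{optimal among all feasible solutions with $|A|=i$}, proved by induction on $i$. The induction step first observes that the algorithm's update has the largest gain among all feasible single moves from $(A_{i-1},B_{i-1})$, and then exhibits, for an optimal $(A'_i,B'_i)$ at level $i$, an element $\alpha\in A'_i\setminus A_{i-1}$ and (when needed) $\beta\in B'_i\setminus B_{i-1}$ such that $(A_{i-1}+\alpha,\,B_{i-1}+\beta)$ and $(A'_i-\alpha,\,B'_i-\beta)$ are \emph{both} feasible; a three-line chain of inequalities then closes the induction. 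The existence of this swap pair is a short case analysis on whether $\alpha\in B'_i\setminus B_{i-1}$ and whether $|A_{i-1}|=|B_{i-1}|+\theta$. Note also that your invariant (``$(A_i,B_i)$ extends to some optimal solution'') carries an extra obligation your sketch does not address: at termination you must still show that no strictly better feasible \emph{extension} exists once all single moves have non-positive gain, i.e., a local-to-global step, whereas the paper's cardinality-indexed invariant ties the returned solution directly to the optimum at each cardinality.
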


\begin{proof}
    First we show the validity of the algorithm.

    Before proving the optimality of the output, we note that the marginal gain of each step of the algorithm is largest among all the feasible updates. Let us consider the addition of $\alpha$ to $A_{i-1}$. There are three cases of updates. If $\alpha \in S \setminus B_{i-1}$ is added to $A_{i-1}$, we must also add $\alpha$ to $B_{i-1}$. If $\alpha \not\in S \setminus B_{i-1}$ and $|A_{i-1}| = |B_{i-1}| + \theta$, adding $\beta \not\in B_{i-1}$ with smallest cost $c_t$ is the best choice. If $\alpha \not\in S \setminus B_{i-1}$ and $|A_{i-1}| < |B_{i-1}| + \theta$, not changing $B_{i-1}$ is the best choice. \Cref{alg:greedy-selection-average-sparsity} selects the best one from these cases.
    
    We show $(A_i, B_i)$ be optimal among feasible solutions such that $|A| = i$ by induction on $i$. It is clear that $(A_0, B_0)$ is optimal among feasible solutions such that $|A| = 0$.
    
    Now we assume $(A_{i-1}, B_{i-1})$ is optimal among feasible solutions such that $|A| = i-1$. Let $(A'_i, B'_i)$ be an optimal solution among feasible solutions such that $|A| = i$. If there exist $\alpha \in A'_i \setminus A_{i-1}$ and $\beta \in B'_i \setminus B_{i-1}$ such that $(A_{i-1} + \alpha, B_{i-1} + \beta)$ and $(A'_i - \alpha, B'_i - \beta)$ are both feasible, we obtain
    \begin{align*}
        \sum_{t \in A_i} g_t - \sum_{t \in B_i} c_t
        &\ge \left(\sum_{t \in A_{i-1}} g_t - \sum_{t \in B_{i-1}} c_t \right) + \left( g_{\alpha_i} - c_{\beta_i} \right)\\
        &\ge \left(\sum_{t \in A'_{i-1}} g_t - \sum_{t \in B'_{i-1}} c_t \right) + \left(g_{\alpha'} - c_{\beta'} \right)\\
        &\ge \left(\sum_{t \in A'_{i}} g_t - \sum_{t \in B'_{i}} c_t \right),
    \end{align*}
    which proves the optimality of $(A_i,B_i)$. The second inequality is because the marginal gain of $\alpha_i$ (or possibly $\alpha_i$ and $\beta_i$) is largest among feasible additions. In the same way, if there exist $\alpha \in A'_i \setminus A_{i-1}$ such that $(A_{i-1} + \alpha, B_{i-1})$ and $(A'_i - \alpha, B'_i)$ are both feasible, then $(A_i,B_i)$ is optimal.

    We show the existence of such an $\alpha$ or pair $(\alpha, \beta)$. Since $|A'_i| > |A_{i-1}|$, we have $A'_i \setminus A_{i-1} \neq \emptyset$. Let $\alpha \in A'_i \setminus A_{i-1}$ be an arbitrary element. If $\alpha \in B'_i \setminus B_{i-1}$, the pair $(\alpha, \alpha)$ satisfies the condition. If $\alpha \not\in B'_i \setminus B_{i-1}$ and $|A_{i-1}| < |B_{i-1}| + \theta$, then $\alpha$ satisfies the condition. If $\alpha \not\in B'_i \setminus B_{i-1}$ and $|A_{i-1}| = |B_{i-1}| + \theta$, we have $|B'_i| \ge |A'_i| - \theta > |B_{i-1}|$, then $B'_i \setminus B_{i-1} \neq \emptyset$. Therefore a pair of $\alpha$ and an arbitrary $\beta \in B'_i \setminus B_{i-1}$ satisfies the condition.

    Finally we consider the running time of this algorithm. Sorting requires $\mathrm{O}(T \log T)$ time. Each iteration requires $\mathrm{O}(\log T)$ time. Thus, the total running time is $\mathrm{O}(T \log T)$.
\end{proof}

\subsection{Proof of \Cref{thm:replacement-omp-runtime}}
\begin{proof}
    In each iteration, we need to find an atom with the largest gain and the corresponding new supports $(Z_1', \dots, Z_t')$.
    This can be done in $\mathrm{O}(nT\log T)$ time.
    Furthermore, we need to compute a new coefficient $\bw^{(Z'_t)}_t = \bA_{{Z'_t}}^+ \by_t$
    for the new support ${Z'_t}$ ($t \in [T]$), 
    where $\bA^+$ is the pseudo inverse.
    This can be done efficiently via maintaining the QR-decomposition of $\bA_{Z_t}$ under rank-two update~\citep{Golub2012matrix} with a cost of $\mathrm{O}(s^2 + ds) = \mathrm{O}(ds)$ time for each matrix.
    Thus each iteration requires $\mathrm{O}(T(n\log T + ds))$ time, which proves the theorem.
\end{proof}

\section{Online dictionary selection}\label{sec:online}

Online dictionary selection is the problem of selecting a dictionary at each round. 
At each round $t$, the player selects a dictionary $X_t \subseteq V$ with $|X_t| \le k$, then the adversary reveals a data point $\by_t \in \R^d$. 
Then the player gains with respect to the best $s$-sparse approximation to $\by_t$ with the selected dictionary $X_t$:
\begin{align}
\max_{\bw \colon \norm{\bw}_0 \le s} u(\by_t, \bA_{X_t} \bw),
\end{align}
where $\bA_{X_t}$ is the matrix obtained by arranging all vectors contained in $X_t$.
Let $g_t(X) = \max_{Z \subseteq X \colon |Z| \le s} f_t(Z)$ be the objective function at the $t$th round, where $f_t(Z) = \max_{\bw \colon \norm{\bw}_0 \leq s} u(\by_t, A_{Z} \bw)$.
In the following, we provide the online versions of algorithms for offline dictionary selection: Online \ModularApproximation{}, Online \ReplacementGreedy{}, and Online \ReplacementOMP{}.

\subsection{Online \ModularApproximation{}}\label{subsec:modular-approx}
The first algorithm is based on \ModularApproximation{} for offline dictionary selection, which was proposed by \citet{Krause2010} and given an improved analysis by \citet{Das2011}.
At each round $t$, we consider a function $\tilde{f}_t(Z) = \sum_{a \in Z} f_t(a|\emptyset)$, which is a modular approximation of $f_t$.
Intuitively, the modular approximation $\tilde{f}_t$ ignores the interactions among the atoms.
We define the surrogate objective $\tilde{g}_t$ as 
\begin{align}
\tilde{g}_t(X) = \max_{Z \subseteq X \colon |Z| \le s} \tilde{f}_t(Z).
\end{align} 
It is easy to show that $\tilde{g}_t$ is monotone submodular.
Hence, we can apply the online greedy algorithm~\citep{Streeter2009} to these surrogate functions.



Let $u_t(\bw) \coloneqq u(\by_t, \bA\bw)$ for $t \in [T]$.
Assuming the strong concavity and smoothness of $u_t$, the original objective function $g_t$ can be bounded from lower and upper with the surrogate function $\tilde{g}_t$. A similar result is given in \citet{Elenberg2016} for offline sparse regression.

\begin{lemma}\label{lem:surrogate}
	Suppose $u_t$ is $m_1$-strongly concave and $M_1$-smooth on $\Omega_1$, and $m_s$-strongly concave and $M_s$-smooth on $\Omega_s$.
    Then,
   \[ 
    \frac{m_1}{M_s} \tilde{g}_t(X) \le g_t(X) \le \frac{M_1}{m_s} \tilde{g}_t(X).
   \]
\end{lemma}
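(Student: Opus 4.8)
The plan is to express both $g_t$ and its modular surrogate $\tilde g_t$ through a single common quantity — the gradient of $u_t$ at the origin restricted to the relevant support — and then read off the ratio directly from the concavity/smoothness constants. Throughout I use the standing normalization $f_t(\emptyset) = u_t(\bfzero) = 0$ (which holds for the $\ell^2$-utility), so that the marginal $f_t(a\mid\emptyset) = f_t(\{a\})$ and hence $\tilde f_t(Z) = \sum_{a\in Z} f_t(\{a\})$.

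First I would establish a two-sided sandwich for $f_t$ on sparse supports. Fix $Z \subseteq X$ with $\abs{Z} \le s$ and apply the defining inequalities of restricted strong concavity and smoothness at $\bx = \bfzero$; since $(\bfzero, \by) \in \Omega_s$ whenever $\supp(\by) \subseteq Z$, we obtain $\inprod{\nabla u_t(\bfzero), \by} - \frac{M_s}{2}\norm{\by}^2 \le u_t(\by) \le \inprod{\nabla u_t(\bfzero), \by} - \frac{m_s}{2}\norm{\by}^2$. Maximizing the two outer quadratics over $\by$ supported on $Z$ (an exact coordinatewise computation, optimal at $\by = \nabla u_t(\bfzero)_Z / M_s$ and $\nabla u_t(\bfzero)_Z / m_s$ respectively) yields $\frac{1}{2M_s}\norm{\nabla u_t(\bfzero)_Z}^2 \le f_t(Z) \le \frac{1}{2m_s}\norm{\nabla u_t(\bfzero)_Z}^2$. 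Specializing the same argument to a single coordinate on $\Omega_1$ gives $\frac{1}{2M_1}(\nabla u_t(\bfzero)_a)^2 \le f_t(\{a\}) \le \frac{1}{2m_1}(\nabla u_t(\bfzero)_a)^2$ for every atom $a$.

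To finish I would combine the two sandwiches, evaluated at the respective maximizers. For the upper bound, take $Z^g \in \argmax_{Z \subseteq X,\, \abs{Z}\le s} f_t(Z)$, so that $g_t(X) = f_t(Z^g)$; then $g_t(X) \le \frac{1}{2m_s}\norm{\nabla u_t(\bfzero)_{Z^g}}^2 \le \frac{M_1}{m_s}\sum_{a\in Z^g} f_t(\{a\}) = \frac{M_1}{m_s}\tilde f_t(Z^g) \le \frac{M_1}{m_s}\tilde g_t(X)$, where the middle step rearranges the single-atom lower bound as $(\nabla u_t(\bfzero)_a)^2 \le 2M_1 f_t(\{a\})$ and the last step uses that $Z^g$ is feasible for $\tilde g_t$. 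Symmetrically, take $\tilde Z \in \argmax_{Z \subseteq X,\, \abs{Z}\le s}\tilde f_t(Z)$; then $g_t(X) \ge f_t(\tilde Z) \ge \frac{1}{2M_s}\norm{\nabla u_t(\bfzero)_{\tilde Z}}^2 \ge \frac{m_1}{M_s}\sum_{a\in\tilde Z} f_t(\{a\}) = \frac{m_1}{M_s}\tilde g_t(X)$, now using the single-atom upper bound rearranged as $(\nabla u_t(\bfzero)_a)^2 \ge 2m_1 f_t(\{a\})$ together with $g_t(X) \ge f_t(\tilde Z)$ by feasibility.

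The only genuinely delicate point is the bookkeeping of maximizers: one must use the maximizer of $f_t$ for the upper bound and the maximizer of $\tilde f_t$ for the lower bound, so that in each case the chosen set is simultaneously feasible for the \emph{other} objective; pairing these the wrong way would break the inequality chain. Everything else — the one-dimensional quadratic optimizations and the membership checks $(\bfzero, \by) \in \Omega_1, \Omega_s$ — is routine. A minor but essential prerequisite is the normalization $u_t(\bfzero) = 0$; without it the $f_t$-sandwich acquires an additive $f_t(\emptyset)$ term that would spoil the purely multiplicative comparison.
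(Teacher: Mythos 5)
Your proof is correct, and its lower-bound half is essentially identical to the paper's: the paper also bounds $f_t(Z) \ge \frac{1}{2M_s}\norm{(\nabla u_t(\bfzero))_Z}^2$ by plugging $\bz = \frac{1}{M_s}(\nabla u_t(\bfzero))_Z$ into the smoothness inequality at $\bfzero$, and bounds $\tilde f_t(Z) \le \frac{1}{2m_1}\norm{(\nabla u_t(\bfzero))_Z}^2$ atom by atom via strong concavity on $\Omega_1$. The difference is in the upper bound: the paper does not prove it from scratch but invokes the submodularity-ratio bound $\gamma_{\emptyset,s} \ge m_s/M_1$ of Elenberg et al., which asserts exactly $\frac{m_s}{M_1} f_t(Z) \le \tilde f_t(Z)$ for all $\abs{Z}\le s$, whereas you re-derive this inequality directly from the same gradient-at-zero sandwich ($f_t(Z) \le \frac{1}{2m_s}\norm{(\nabla u_t(\bfzero))_Z}^2$ from concavity on $\Omega_s$, and $f_t(\{a\}) \ge \frac{1}{2M_1}(\nabla u_t(\bfzero)_a)^2$ from smoothness on $\Omega_1$). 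Your version is thus fully self-contained and pleasingly symmetric, at the cost of a few extra lines; the paper's is shorter but leans on an external result. One stylistic remark: your careful pairing of maximizers ($Z^g$ for the upper bound, $\tilde Z$ for the lower) is valid but unnecessary — both your argument and the paper's actually establish the \emph{pointwise} inequalities $\frac{m_1}{M_s}\tilde f_t(Z) \le f_t(Z) \le \frac{M_1}{m_s}\tilde f_t(Z)$ for every feasible $Z$, and once you have a pointwise comparison of two nonnegative objectives over the same feasible family, the comparison of their maxima is immediate, with no cross-feasibility bookkeeping needed. Your observation about the normalization $u_t(\bfzero)=0$ is apt; the paper handles it implicitly by defining all quantities as marginals $u_t(\bw^{(Z)}) - u_t(\bfzero)$, which amounts to the same thing.
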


\begin{proof}
	Let $Z \subseteq V$ be an arbitrary subset such that $|Z| \le s$. Since the submodularity ratio $\gamma_{\emptyset, s}$ of $f$ is no less than $m_s / M_1$~\citep{Elenberg2016}, 	\begin{equation*}
		\frac{m_s}{M_1} f_t(Z) \le \sum_{a \in Z} \tilde{f}_t(a) = \tilde{f}_t(Z).
	\end{equation*}
	As this bound holds for any $Z \subseteq V$ of size no more than $s$, we have
	\begin{equation*}
		g_t(X) = \max_{Z \subseteq X \colon |Z| \le s} f_t(Z) \le \frac{M_1}{m_s} \max_{Z \subseteq X \colon |Z| \le s} \tilde{f}_t(Z) = \frac{M_1}{m_s} \tilde{g}_t(X).
	\end{equation*}
	
	Next we prove the lower bound of $g_t(X)$. From the optimality of $\bw^{(Z)}$, for any $\bz$ such that $\supp(\bz) \subseteq Z$,
    \begin{align*}
		f_t(Z)
		&= u_t(\bw^{(Z)}) - u_t(\bfzero)\\
		&\ge u_t(\bz) - u_t(\bfzero)\\
		&\ge \langle \nabla u_t (\bfzero), \bz \rangle - \frac{M_s}{2} \| \bz \|^2
    \end{align*}
	where the last inequality is due to the strong concavity of $u_t$. Using $\bz = \frac{1}{M_s} (\nabla u_t(\bfzero) )_Z$, we obtain
    \begin{equation}\label{eq:surrogate-bound-concave}
		f_t(Z)
		\ge \frac{1}{2M_s} \| (\nabla u_t (\bfzero) )_Z \|^2.
    \end{equation}
	On the other hand, from the smoothness of $u_t$, we have for all $a \in Z$,
    \begin{align*}
		f_t(a)
		&= u_t(\bw^{(a)}) - u_t(\bfzero)\\
		&\le \langle \nabla u_t (\bfzero), \bw^{(a)} \rangle - \frac{m_1}{2} \| \bw^{(a)} \|^2\\
		&\le \max_{c \in \R} \langle \nabla u_t (\bfzero), c \bchi_a \rangle - \frac{m_1}{2} \| c \bchi_a \|^2\\
		&= \frac{1}{2m_1} (\nabla u_t(\bfzero))_a^2.
    \end{align*}
	Summing up for all $a \in Z$, we obtain
    \begin{equation}\label{eq:surrogate-bound-smooth}
		\tilde{f}_t(Z) = \sum_{a \in Z} f_t(a) \le \frac{1}{2m_1} \| (\nabla u_t(\bfzero))_Z \|^2.
	\end{equation}
	Combining \eqref{eq:surrogate-bound-concave} and \eqref{eq:surrogate-bound-smooth}, we obtain the lower bound
    \begin{equation*}
		f_t(Z) \ge \frac{m_1}{M_s} \tilde{f}_t(Z),
	\end{equation*}
	which proves the lower bound of $g_t(X)$ in the same way as the upper bound.
\end{proof}

The expected regret of this algorithm can be bounded as follows.
\begin{theorem}\label{thm:modular-approx}
	Let $\alpha = (1 - \frac{1}{\rme}) \frac{m_1 m_s}{M_1 M_s}$. 
    The expected $\alpha$-regret of the modular approximation algorithm after $T$ rounds is bounded as follows:
    \[
	\regret_\alpha(T) \le \frac{k \Delta_{\max} m_1}{M_s} \sqrt{2T \ln n}
    \]
    where $n = |V|$ and $\Delta_{\max} = \max_{a \in V} \max_{t \in [T]} f_t(a|\emptyset)$.
\end{theorem}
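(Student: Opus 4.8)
The plan is to reduce the analysis to a known online submodular maximization guarantee applied to the surrogate functions $\tilde{g}_t$, and then transfer the resulting bound to the true objective $g_t$ using \Cref{lem:surrogate}. First I would record that the algorithm is precisely the online greedy scheme of \citet{Streeter2009}: it maintains $k$ parallel Hedge instances, one per dictionary slot, where at round $t$ the $i$th instance selects an atom so as to maximize (in expectation over the learners) the marginal gain $\tilde{g}_t(S_{i-1} + a) - \tilde{g}_t(S_{i-1})$, with $S_{i-1}$ the atoms already chosen by the first $i-1$ slots. Since each $\tilde{g}_t$ is monotone submodular, the Streeter--Golovin guarantee gives
\[
  \E\left[\sum_{t=1}^T \tilde{g}_t(X_t)\right] \ge \left(1 - \frac{1}{\rme}\right)\max_{X^* \colon |X^*| \le k}\sum_{t=1}^T \tilde{g}_t(X^*) - k R_1,
\]
where $R_1$ is the regret of a single Hedge instance.

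Next I would bound $R_1$. Because $\tilde{f}_t$ is modular, the marginal gain $\tilde{g}_t(S + a) - \tilde{g}_t(S)$ of any atom $a$ never exceeds $f_t(a \mid \emptyset) \le \Delta_{\max}$, so the per-round payoffs fed to each learner lie in $[0, \Delta_{\max}]$. The standard Hedge regret bound over $n = |V|$ experts and $T$ rounds then yields $R_1 \le \Delta_{\max}\sqrt{2T\ln n}$.

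The heart of the argument is transferring this guarantee to $g_t$ via \Cref{lem:surrogate}. On the algorithm side, the lower bound $g_t(X_t) \ge \frac{m_1}{M_s}\tilde{g}_t(X_t)$ gives $\E[\sum_t g_t(X_t)] \ge \frac{m_1}{M_s}\E[\sum_t \tilde{g}_t(X_t)]$. On the benchmark side, for the optimal dictionary $X^*$ of $\sum_t g_t$ the upper bound $g_t(X^*) \le \frac{M_1}{m_s}\tilde{g}_t(X^*)$ gives
\[
  \alpha\sum_{t=1}^T g_t(X^*) \le \left(1-\frac{1}{\rme}\right)\frac{m_1}{M_s}\sum_{t=1}^T \tilde{g}_t(X^*) \le \left(1-\frac{1}{\rme}\right)\frac{m_1}{M_s}\max_{X^*\colon |X^*|\le k}\sum_{t=1}^T \tilde{g}_t(X^*),
\]
where the factor $\frac{M_1}{m_s}$ from \Cref{lem:surrogate} cancels the $\frac{m_s}{M_1}$ contained in $\alpha$. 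Combining this with the Streeter--Golovin inequality and then the algorithm-side bound yields
\[
  \alpha\sum_{t=1}^T g_t(X^*) \le \frac{m_1}{M_s}\E\left[\sum_{t=1}^T \tilde{g}_t(X_t)\right] + \frac{m_1}{M_s}k R_1 \le \E\left[\sum_{t=1}^T g_t(X_t)\right] + \frac{m_1}{M_s}k R_1,
\]
and rearranging gives $\regret_\alpha(T) \le \frac{m_1}{M_s}k R_1 = \frac{k\Delta_{\max}m_1}{M_s}\sqrt{2T\ln n}$.

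The main obstacle is not any single calculation but stating the Streeter--Golovin reduction cleanly: one must verify that the online greedy guarantee applies with the adversarially chosen submodular payoffs $\tilde{g}_t$, that each learner sees exactly the marginal-gain rewards lying in $[0,\Delta_{\max}]$, and that summing the $k$ per-slot regrets contributes the factor $k$. The other delicate point is the factor-tracking through \Cref{lem:surrogate}, ensuring that the $\frac{m_1 m_s}{M_1 M_s}$ in $\alpha$ exactly cancels the $\frac{M_1}{m_s}$ from the benchmark bound while leaving the residual $\frac{m_1}{M_s}$ multiplying the regret term.
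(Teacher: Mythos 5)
Your proposal is correct and follows essentially the same route as the paper: apply the Streeter--Golovin online submodular maximization guarantee to the surrogate functions $\tilde{g}_t$ (with per-round gains bounded by $\Delta_{\max}$), then transfer the bound to $g_t$ via \Cref{lem:surrogate}. The only difference is that you spell out the factor-tracking through $\frac{m_1}{M_s}$ and $\frac{M_1}{m_s}$ explicitly, which the paper compresses into ``From Lemma~\ref{lem:surrogate}, we obtain the bound in the statement.''
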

\begin{proof}
	Applying the regret bound for online submodular maximization \cite{Streeter2009}, we obtain
	\begin{align}
		\left( 1 - \frac{1}{\rme} \right) \sum_{t=1}^T \tilde{g}_t(X^*) - \sum_{t=1}^T \tilde{g}_t(X_t) \le k \Delta_{\max} \sqrt{2 T \ln n}.
	\end{align}
    since the gains for each subroutine are bounded by $\Delta_{\max}$.
	From Lemma~\ref{lem:surrogate}, we obtain the bound in the statement.
\end{proof}

For the squared $\ell^2$-utility function $u(\by,\bx) = \frac{1}{2}\norm{\by}_2^2 - \frac{1}{2}\norm{\by-\bx}_2^2$, $\alpha$ is equal to an approximation ratio shown in \citet{Das2011}.
\begin{corollary}
	For the squared $\ell^2$-utility function, the expected regret of the modular approximation algorithm is
    \[
		\regret_\alpha(T) \le \frac{k \Delta_{\max} }{\sigmamax^2(\bA, s)} \sqrt{2T \ln n},
    \]
    where $\alpha = (1 - \frac{1}{\rme})\frac{\sigmamin^2(\bA, s)}{\sigmamax^2(\bA, s)}$.
\end{corollary}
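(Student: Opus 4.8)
The plan is to derive this Corollary as a direct specialization of \Cref{thm:modular-approx}: it suffices to compute the four restricted concavity and smoothness constants $m_1, M_1, m_s, M_s$ of the functions $u_t(\bw) = u(\by_t, \bA\bw)$ for the squared $\ell^2$-utility function and substitute them into the general regret bound. So the whole argument reduces to an eigenvalue computation, exploiting that $u_t$ is quadratic.

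First I would compute the relevant derivatives. Writing $u_t(\bw) = \frac12\norm{\by_t}_2^2 - \frac12\norm{\by_t - \bA\bw}_2^2$, the gradient is $\nabla u_t(\bw) = \bA^\top(\by_t - \bA\bw)$ and the Hessian is the constant matrix $-\bA^\top\bA$. Consequently the second-order remainder appearing in the definition of restricted strong concavity and smoothness is \emph{exact}: for all $\bx, \by$ one has $u_t(\by) - u_t(\bx) - \inprod{\nabla u_t(\bx), \by - \bx} = -\frac12\norm{\bA(\by - \bx)}_2^2$. Matching this with the definition, the task is to bound $\norm{\bA\bv}_2^2$ from above and below by multiples of $\norm{\bv}_2^2$, where $\bv = \by - \bx$ ranges over the differences occurring in each domain $\Omega$.

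Next I would read off the constants from the sparsity pattern of $\Omega$. On $\Omega_s = \Omega_{s,s}$ the difference satisfies $\norm{\bv}_0 \le s$, so restricting $\bv$ to any support $X$ with $\abs{X} \le s$ gives $\sigmamin^2(\bA_X)\norm{\bv}_2^2 \le \norm{\bA_X\bv_X}_2^2 = \norm{\bA\bv}_2^2 \le \sigmamax^2(\bA_X)\norm{\bv}_2^2$; taking the worst-case support yields $m_s = \sigmamin^2(\bA, s)$ and $M_s = \sigmamax^2(\bA, s)$. The same computation on $\Omega_1$ gives $m_1 = \sigmamin^2(\bA, 1)$ and $M_1 = \sigmamax^2(\bA, 1)$, but here I would invoke the standing assumption that every atom is a unit vector: each single column of $\bA$ has norm exactly one, so every $1$-sparse singular value equals one, giving $m_1 = M_1 = 1$.

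Finally I would substitute into \Cref{thm:modular-approx}. The approximation factor becomes $\alpha = (1 - 1/\rme)\frac{m_1 m_s}{M_1 M_s} = (1 - 1/\rme)\frac{\sigmamin^2(\bA, s)}{\sigmamax^2(\bA, s)}$, and the bound $\frac{k\Delta_{\max} m_1}{M_s}\sqrt{2T\ln n}$ collapses to $\frac{k\Delta_{\max}}{\sigmamax^2(\bA, s)}\sqrt{2T\ln n}$, which is exactly the claimed statement. There is no genuine obstacle; the only points requiring care are reading off from the definition of $\Omega_s$ that the difference $\bx - \by$ is itself $s$-sparse (so that the $s$-sparse singular values are the correct quantities), and remembering that the unit-norm normalization of the atoms is what forces $m_1 = M_1 = 1$ and thereby removes these constants from the final bound.
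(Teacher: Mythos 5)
Your proposal is correct and follows exactly the route the paper intends: the corollary is stated as an immediate specialization of \Cref{thm:modular-approx}, obtained by noting that for the quadratic utility the restricted constants are $m_s = \sigmamin^2(\bA,s)$, $M_s = \sigmamax^2(\bA,s)$, and $m_1 = M_1 = 1$ (by the unit-norm normalization of the atoms), then substituting. Your explicit computation via the exact second-order expansion $-\frac{1}{2}\norm{\bA(\by-\bx)}_2^2$ and the sparsity of the difference vector in $\Omega_s$ and $\Omega_1$ fills in precisely the details the paper leaves implicit.
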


\subsection{Online \ReplacementGreedy{}}\label{app:online-replacement-greedy}
In the following, we provide online adoptation of \ReplacementGreedy{}.
Similarly to \citet{Streeter2009}, we use $k$ expert algorithms $\caA^1, \cdots, \caA^k$ as subroutines. 
At each round, online \ReplacementGreedy{} selects a set of $k$ elements $a_t^1, \cdots, a_t^k$ according to the expert algorithms $\caA^1,\cdots,\caA^k$, respectively. 
After the target point $\by_t$ is revealed, the algorithm decides the feedback to the subroutines by considering how $Z_t$ changes if $a_t^1,\cdots,a_t^k$ are added to $X$ sequentially. 
As in the offline version of \ReplacementGreedy{}, we start with $Z_t^0 = \emptyset$ and consider adding $a_t^i$ to $Z_t$ or not with keeping $|Z_t| \le s$ for each $i = 1, \cdots, k$.
Denoting $Z_t$ at the $i$th step by $Z_t^i$, we can write the feedback given to the subroutine $\caA^i$ as $\Delta_t(\cdot, Z_t^{i-1})$ where
\begin{equation*}
    \Delta_t(a, Z_t^{i}) =
    \begin{cases}
        \displaystyle f_t(Z_t^{i} + a) - f_t(Z_t^{i}) & (i < s)\\
        \displaystyle \max \left\{ 0, \max_{a' \in Z_t^{i}} \left\{ f_t(Z_t^{i} - a' + a) - f_t(Z_t^{i}) \right\} \right\} & (i \ge s)
    \end{cases}
\end{equation*}
is the gain obtained by adding $a$ to $Z_t^i$. If $\Delta_t(a^i_t, Z_t^{i-1}) > 0$, the algorithm updates $Z_t$ by adding $a_t^i$ and, if $i > s$, removing $a'$ that maximizing $f_t(Z_t^{i-1} - a' + a_t^i)$. For each $a \in V$, the value of gain $\Delta_t(a, Z_t^{i-1})$ is given to $\caA^i$ as the feedback about $a$. A pseudocode description of our algorithm is shown in \Cref{alg:online-replacement}.

\begin{theorem}\label{thm:online-replacement-greedy}
Assume that $u_t$ is $m_{2s}$-strongly concave on $\Omega_{2s}$ and $M_{s,2}$-smooth on $\Omega_{s,2}$ for $t \in [T]$. 
Then the online replacement greedy algorithm achieves the regret bound
$\regret_{\alpha}(T) \le \sum_{i=1}^k r_{i}$, where $r_i$ is the regret of the online greedy selection subroutine $\caA^i$ for $i \in [k]$ and
\[
\alpha = \left( \frac{m_{2s}}{M_{s,2}} \right)^2 \left(1 - \exp\left(- \frac{M_{s,2}}{m_{2s}} \right)\right).
\]
In particular, if we use the hedge algorithm as the online greedy selection subroutines, we obtain $\regret_\alpha(T) \leq k\sqrt{2T\ln{n}}$.
\end{theorem}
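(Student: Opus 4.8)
The plan is to run the argument within the nested-experts framework of \citet{Streeter2009}, treating the $k$ online subroutines $\caA^1, \dots, \caA^k$ as the $k$ greedy stages of \ReplacementGreedy{}, and to reuse the offline per-step bound (\Cref{lem:replacement-greedy-marginal}) to control the reward of the best fixed atom at each stage. Write $\OPT = \sum_{t=1}^T g_t(X^*)$ for the optimal fixed-dictionary value, let $Z_t^* = \argmax_{Z \subseteq X^* \colon |Z| \le s} f_t(Z)$ so that $(Z_1^*, \dots, Z_T^*)$ is feasible for the individual sparsity family (which is $k$-replacement sparse by \Cref{prop:matroid-sparsity}), and let $Z_t^i$ denote the support kept for round $t$ after the first $i$ experts, with $\Phi_i = \sum_t f_t(Z_t^i)$ and $\Phi_0 = 0$.

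First I would record two bookkeeping facts. Since $Z_t^k \subseteq X_t$ and $|Z_t^k| \le s$, monotonicity of $f_t$ gives $g_t(X_t) \ge f_t(Z_t^k)$, and telescoping the stagewise updates gives $f_t(Z_t^k) = \sum_{i=1}^k \Delta_t(a_t^i, Z_t^{i-1})$, because the algorithm updates exactly when the increment is positive and $\Delta_t$ equals the realized increment (zero otherwise). Summing over $t$ yields $\sum_t g_t(X_t) \ge \Phi_k = \sum_{i=1}^k \sum_t \Delta_t(a_t^i, Z_t^{i-1})$. Next, against the realized feedback sequence $\{\Delta_t(\cdot, Z_t^{i-1})\}_t$, the guarantee of $\caA^i$ gives $\E[\sum_t \Delta_t(a_t^i, Z_t^{i-1})] \ge \E[\max_{a \in V}\sum_t \Delta_t(a, Z_t^{i-1})] - r_i$. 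The crucial observation is that $\max_a \sum_t \Delta_t(a, Z_t^{i-1})$ is exactly the marginal gain of one offline \ReplacementGreedy{} step from the current supports $(Z_t^{i-1})_t$, so \Cref{lem:replacement-greedy-marginal} with $p = k$ and target $(Z_t^*)_t$ bounds it below by $\frac{1}{k}\left[\frac{m_{2s}}{M_{s,2}}\OPT - \frac{M_{s,2}}{m_{2s}}\Phi_{i-1}\right]$.

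To close the recursion I would set $\Delta_i = \Phi_i - \Phi_{i-1}$, so that $\Phi_{i-1} = \sum_{j<i}\Delta_j$, and combine the two bounds above after taking expectations to obtain $\E[\Delta_i] \ge C\left(\OPT' - \sum_{j<i}\E[\Delta_j]\right) - r_i$ with $C = \frac{1}{k}\frac{M_{s,2}}{m_{2s}}$ and $\OPT' = \frac{m_{2s}^2}{M_{s,2}^2}\OPT$. Applying \Cref{lem:misc} with $l = k$ then gives $\E[\Phi_k] \ge (1 - e^{-Ck})\OPT' - \sum_i r_i = \alpha\OPT - \sum_i r_i$, which rearranges to $\regret_\alpha(T) \le \sum_i r_i$; the final claim follows from the Hedge regret $r_i \le \sqrt{2T\ln n}$ (with gains normalized to $[0,1]$) summed over the $k$ stages.

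I expect the main obstacle to be the careful treatment of the adaptive, algorithm-dependent feedback: the reward vectors $\Delta_t(\cdot, Z_t^{i-1})$ presented to $\caA^i$ depend on the random choices of the earlier experts, so one must argue that the per-subroutine regret bound holds pathwise against the realized sequence before taking expectations, and that the pointwise offline marginal lemma may legitimately be invoked at the random states $(Z_t^{i-1})_t$. A secondary check is to reconcile the step-index branching in the definition of $\Delta_t$ with the feasibility $|Z_t^{i-1}| \le s$ needed for the replacement construction, so that $\Delta_t$ indeed realizes the offline replacement gain.
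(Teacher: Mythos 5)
Your proposal is correct and takes essentially the same route as the paper: the paper's proof likewise chains the per-subroutine regret bound ($\sum_t \Delta_t(a_t^i, Z_t^{i-1}) \ge \max_{a \in V}\sum_t \Delta_t(a, Z_t^{i-1}) - r_i$) with the offline marginal-gain bound of \Cref{lem:replacement-greedy-marginal} instantiated at $p = k$ (which it re-derives inline by averaging over $a \in X^*$ and restricting to $a \in Z_t^*$, rather than invoking the lemma as a black box as you do), and then closes the recursion with \Cref{lem:misc}. The remaining differences are presentational: your explicit telescoping $\Phi_k = \sum_i \sum_t \Delta_t(a_t^i, Z_t^{i-1})$ and monotonicity step $g_t(X_t) \ge f_t(Z_t^k)$ are left implicit in the paper, and the adaptivity of the feedback and the step-index/feasibility reconciliation that you flag as checks are glossed over by the paper as well.
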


\begin{corollary}
For the squared $\ell^2$-utility function, 
\[
\alpha \ge \left( \frac{\sigmamin^2(\bA, 2s)}{\sigmamax^2(\bA, 2)} \right)^2 \left(1 - \exp\left(- \frac{\sigmamax^2(\bA, 2)}{\sigmamin^2(\bA, 2s)} \right)\right).
\]
\end{corollary}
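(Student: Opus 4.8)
The plan is to specialize \Cref{thm:online-replacement-greedy} to the squared $\ell^2$-utility by computing the restricted strong concavity and smoothness constants of $u_t$ explicitly, and then to argue that the resulting approximation factor is monotone in the ratio $m_{2s}/M_{s,2}$, so that replacing the true constants by their singular-value surrogates only decreases it. First I would write $u_t(\bw) = \frac12\norm{\by_t}_2^2 - \frac12\norm{\by_t - \bA\bw}_2^2$, whose gradient is $\nabla u_t(\bw) = \bA^\top(\by_t - \bA\bw)$ and whose Hessian is the constant matrix $-\bA^\top\bA$. Because $u_t$ is quadratic, for any $\bx,\by$ the exact second-order identity $u_t(\by) - u_t(\bx) - \inprod{\nabla u_t(\bx), \by - \bx} = -\frac12\, \bv^\top\bA^\top\bA\,\bv$ holds with $\bv \coloneqq \by - \bx$, so the restricted constants are governed entirely by the quadratic form $\bv^\top\bA^\top\bA\bv = \norm{\bA\bv}_2^2$.

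Next I would read the constants off the sparsity pattern of $\bv$. On $\Omega_{2s} = \Omega_{2s,2s}$ the difference $\bv$ has at most $2s$ nonzeros, hence $\norm{\bA\bv}_2^2 \ge \sigmamin^2(\bA, 2s)\norm{\bv}_2^2$, so $u_t$ is strongly concave on $\Omega_{2s}$ with $m_{2s} \ge \sigmamin^2(\bA, 2s)$. On $\Omega_{s,2}$ the difference $\bv$ has at most $2$ nonzeros, hence $\norm{\bA\bv}_2^2 \le \sigmamax^2(\bA, 2)\norm{\bv}_2^2$, so $u_t$ is smooth on $\Omega_{s,2}$ with $M_{s,2} \le \sigmamax^2(\bA, 2)$. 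The only bookkeeping to watch here is that the two domains use different sparsity levels ($2s$ for concavity, $2$ for smoothness), which is exactly why the final bound mixes $\sigmamin^2(\bA, 2s)$ and $\sigmamax^2(\bA, 2)$.

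Finally, since these substitutions lower $m_{2s}$ and raise $M_{s,2}$, I cannot simply plug them in; I need the factor to be monotone in the ratio. Writing $\phi(r) \coloneqq r^2\left(1 - \exp(-1/r)\right)$ so that the factor in \Cref{thm:online-replacement-greedy} equals $\phi(m_{2s}/M_{s,2})$, I would show $\phi$ is increasing on $(0,\infty)$. A direct computation gives $\phi'(r) = 2r - (2r+1)\exp(-1/r)$, and $\phi'(r) > 0$ reduces to $\ln\!\left(1 + \tfrac{1}{2r}\right) < \tfrac1r$, which follows from $\ln(1+x) \le x$ with $x = \tfrac{1}{2r}$. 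Since $r = m_{2s}/M_{s,2} \ge \sigmamin^2(\bA, 2s)/\sigmamax^2(\bA, 2)$, monotonicity yields $\alpha = \phi(r) \ge \phi\!\left(\sigmamin^2(\bA, 2s)/\sigmamax^2(\bA, 2)\right)$, which is precisely the claimed bound. I expect the monotonicity step to be the main obstacle: the substitution improves one constant and worsens the other in the same direction of $r$, so proving that $\phi$ is increasing (rather than merely substituting) is what justifies the inequality and explains why the statement reads $\ge$ rather than $=$.
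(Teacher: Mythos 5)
Your proof is correct: the exact second-order identity for the quadratic utility, the resulting bounds $m_{2s} \ge \sigmamin^2(\bA,2s)$ and $M_{s,2} \le \sigmamax^2(\bA,2)$ (with the right sparsity levels, $2s$ versus $2$), and the derivative computation $\phi'(r) = 2r - (2r+1)\exp(-1/r) > 0$ are all accurate. The paper states this corollary without any proof, treating it as an immediate specialization of \Cref{thm:online-replacement-greedy}, so the comparison is really about what "immediate" hides. The one point worth noting is that the monotonicity lemma you identify as the main obstacle can be bypassed entirely: \Cref{thm:online-replacement-greedy} is stated under the hypothesis that $u_t$ is $m_{2s}$-strongly concave and $M_{s,2}$-smooth for \emph{some} valid pair of constants, and your own exact identity $u_t(\by)-u_t(\bx)-\inprod{\nabla u_t(\bx),\by-\bx} = -\frac12\norm{\bA(\by-\bx)}_2^2$ shows that $\sigmamin^2(\bA,2s)$ and $\sigmamax^2(\bA,2)$ are themselves legitimate (indeed the tightest) such constants, since every $2s$-sparse (resp.\ $2$-sparse) vector arises as a difference $\by-\bx$ in $\Omega_{2s}$ (resp.\ $\Omega_{s,2}$). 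Instantiating the theorem directly with this pair yields $\alpha = \left(\sigmamin^2(\bA,2s)/\sigmamax^2(\bA,2)\right)^2\left(1-\exp\left(-\sigmamax^2(\bA,2)/\sigmamin^2(\bA,2s)\right)\right)$ with no monotonicity argument needed; the corollary's ``$\ge$'' then costs nothing. Your route is needed only under the stricter reading that $\alpha$ is pinned to the canonical (tightest) constants of $u_t$, and in that sense your proof is the more careful one: it shows the substitution is safe for \emph{any} utility whose constants dominate the singular-value surrogates, not just for the quadratic case where they coincide.
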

\begin{proof}[Proof of \Cref{thm:online-replacement-greedy}]
    We provide a lower bound on the sum of the $i$th step marginal gains of the algorithm. Let $Z_t^*$ be an optimal sparse subset of $X^*$ for $f_t$, i.e., $Z_t^* \in \argmax_{Z \subseteq X^* \colon |Z| \le s} f_t(Z)$. Then we have
    \begin{align}
        \sum_{t = 1}^T \Delta_t (a_t^i, Z_{t}^{i-1})
        &\ge \max_{x \in V} \sum_{t = 1}^T \Delta_t (x, Z_{t}^{i-1}) - r_i \nonumber \\
        &\ge \frac{1}{k} \sum_{a \in X^*} \sum_{t = 1}^T \Delta_t (a, Z_{t}^{i-1}) - r_i \nonumber \\
        &\ge \frac{1}{k} \sum_{t = 1}^T \sum_{a \in Z_t^*} \Delta_t (a, Z_{t}^{i-1}) - r_i \nonumber \\
        &\ge \frac{1}{k} \sum_{t = 1}^T \left(C_1 f_t(Z_i^*) - C_2 f_t(Z_{t}^{i-1}) \right) - r_i \label{eq:online-replacement-mwu-bound}
    \end{align}
    where $C_1 = \frac{m_{2s}}{M_{s,2}}$ and $C_2 = \frac{M_{s,2}}{m_{2s}}$. The first inequality is due to the regret bound for the subroutine $\caA^i$. The last inequality is due to Lemma \ref{lem:replacement-greedy-marginal}.
    Now the theorem directly follows from \Cref{lem:misc}.
\end{proof}

\subsection{Online \ReplacementOMP{}}
In this section, we consider an online version of \ReplacementOMP{}. This algorithm is the same as Online \ReplacementGreedy{} except the gain at each step. The gain obtained when $a$ is added to $Z_t^{i}$ is 
\begin{equation*}
        \displaystyle \frac{1}{2M_{s,2}} \left( \nabla u_t(\bw_t^{(Z^i_t)})\right)^2_a
\end{equation*}
when $i < s$, and 
\begin{equation*}
        \displaystyle \max \left\{0, \frac{1}{2M_{s,2}} \left( \nabla u_t(\bw_t^{(Z^i_t)})\right)^2_a - \min_{a' \in Z^i_t} \frac{M_{s,2}}{2} \left( \bw_t^{(Z^i_t)} \right)^2_{a'} \right\}
\end{equation*}
when $i \ge s$, where $\bw_t^{(Z^i_t)} \in \argmax_{\bw: \supp(\bw) \subseteq Z^i_t} u_t(\bw)$.

\begin{theorem}\label{thm:online-replacement-omp}
Assume that $u_t$ is $m_{2s}$-strongly concave on $\Omega_{2s}$ and $M_{s,2}$-smooth on $\Omega_{s,2}$ for $t \in [T]$. 
    Then Online \ReplacementOMP{} algorithm achieves the regret bound
$\regret_{\alpha}(T) \le \sum_{i=1}^k r_{i}$, where $r_i$ is the regret of the online greedy selection subroutine $\caA^i$ for $i \in [k]$ and
\[
\alpha = \left( \frac{m_{2s}}{M_{s,2}} \right)^2 \left(1 - \exp\left(- \frac{M_{s,2}}{m_{2s}} \right)\right).
\]
In particular, if we use the hedge algorithm as the online greedy selection subroutines, we obtain $\regret_\alpha(T) \leq k\sqrt{2T\ln{n}}$.
\end{theorem}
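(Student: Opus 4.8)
The plan is to mirror the proof of \Cref{thm:online-replacement-greedy} almost verbatim, replacing the true marginal gains by the OMP proxy gains and invoking \Cref{lem:replacement-omp-marginal} (equivalently, its constituent inequalities \eqref{eq:replacement-omp-general-ineq1}--\eqref{eq:replacement-omp-general-ineq3}) in place of \Cref{lem:replacement-greedy-marginal}. Write $Z_t^i$ for the support of round $t$ after the first $i$ subroutines have acted, $v_i := \sum_{t=1}^T f_t(Z_t^i)$ for the cumulative objective through position $i$ (so $v_0 = 0$), and $\OPT := \sum_{t=1}^T f_t(Z_t^*)$, where $Z_t^*$ is an optimal $s$-sparse support inside the offline optimum $X^*$. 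Put $C_1 = m_{2s}/M_{s,2}$ and $C_2 = M_{s,2}/m_{2s}$, so that $C_1 C_2 = 1$ and $C_1/C_2 = C_1^2$. The one genuinely new ingredient relative to the greedy case is that the objective the subroutines optimize (the OMP proxy) only lower-bounds the true increment, so I must first certify that each actual update dominates its proxy.

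First I would record the ``proxy $\le$ increment'' step: whenever round $t$ updates $Z_t^{i-1}$ to $Z_t^i$ by adding $a_t^i$ and (for $i \ge s$) deleting the minimum-coefficient atom, inequality \eqref{eq:replacement-omp-general-ineq1} — the restricted-smoothness bound already used in \Cref{lem:replacement-omp-marginal}, specialized to a single swap so that $Z_t^i \setminus Z_t^{i-1} = \{a_t^i\}$ — gives $f_t(Z_t^i) - f_t(Z_t^{i-1}) \ge \Delta_t^{\mathrm{OMP}}(a_t^i, Z_t^{i-1})$, where $\Delta_t^{\mathrm{OMP}}$ is the gain defined just before the statement; the $\max\{0,\cdot\}$ truncation matches the ``do not update when the gain is nonpositive'' rule, so this holds termwise (with equality $0=0$ in the no-update case). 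Summing over $t$ yields $v_i - v_{i-1} \ge \sum_{t=1}^T \Delta_t^{\mathrm{OMP}}(a_t^i, Z_t^{i-1}) =: \Delta_i$.

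Next I would lower-bound $\Delta_i$ exactly along \eqref{eq:online-replacement-mwu-bound}. The regret guarantee of the $i$th subroutine $\caA^i$ gives $\Delta_i \ge \max_{x \in V}\sum_t \Delta_t^{\mathrm{OMP}}(x, Z_t^{i-1}) - r_i$; bounding the max by the average over the $k$ atoms of $X^*$ and then restricting each inner sum to $Z_t^* \subseteq X^*$ (legitimate since $\Delta_t^{\mathrm{OMP}} \ge 0$) gives $\Delta_i \ge \tfrac1k \sum_t \sum_{a \in Z_t^*}\Delta_t^{\mathrm{OMP}}(a, Z_t^{i-1}) - r_i$. The per-round proxy bound $\sum_{a \in Z_t^*}\Delta_t^{\mathrm{OMP}}(a, Z_t^{i-1}) \ge C_1 f_t(Z_t^*) - C_2 f_t(Z_t^{i-1})$ — obtained from \eqref{eq:replacement-omp-general-ineq1}--\eqref{eq:replacement-omp-general-ineq3} together with an injective matching of each added atom of $Z_t^*\setminus Z_t^{i-1}$ to a distinct removed atom of $Z_t^{i-1}\setminus Z_t^*$ (the single-swap, individual-sparsity instance of $p$-replacement) — then yields $\Delta_i \ge \tfrac{C_1}{k}\OPT - \tfrac{C_2}{k}v_{i-1} - r_i$.

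Finally I would assemble the recurrence. Combining the two displays gives $v_i - v_{i-1} \ge \tfrac{C_2}{k}\bigl(C_1^2\OPT - v_{i-1}\bigr) - r_i$, and since $v_{i-1} = \sum_{j<i}(v_j - v_{j-1})$ this is exactly the hypothesis of \Cref{lem:misc} with increments $v_i - v_{i-1}$, constant $C = C_2/k$, optimum $C_1^2\OPT$, and residuals $r_i$. Applying the lemma with $l = k$ gives $v_k \ge (1 - \exp(-C_2))\,C_1^2\,\OPT - \sum_{i=1}^k r_i = \alpha\,\OPT - \sum_i r_i$, since $C_2 = M_{s,2}/m_{2s}$ and $C_1^2 = (m_{2s}/M_{s,2})^2$. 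Taking expectations and using $g_t(X_t) \ge f_t(Z_t^k)$ (as $Z_t^k \subseteq X_t$ with $|Z_t^k|\le s$), hence $\E[\sum_t g_t(X_t)] \ge \E[v_k]$, bounds the $\alpha$-regret by $\sum_i r_i$; the hedge regret $r_i \le \sqrt{2T\ln n}$ then gives $k\sqrt{2T\ln n}$. The main obstacle is the per-round proxy bound: unlike the greedy analysis, where the realized swap is itself optimal, here I must show that the sum of single-atom OMP proxy gains over $Z_t^*$ controls the concavity-based lower bound on $f_t(Z_t^*) - f_t(Z_t^{i-1})$, which requires the injective matching of removals and a careful treatment of the $i<s$ versus $i\ge s$ cases and the $\max\{0,\cdot\}$ truncation. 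I would also flag the implicit requirement $C = C_2/k \le 1$ (i.e.\ $k \ge M_{s,2}/m_{2s}$) needed to invoke \Cref{lem:misc}.
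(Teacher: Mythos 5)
Your proposal is correct and follows essentially the same route as the paper's own proof: lower-bound the true increment by the OMP proxy gain via restricted smoothness, apply the subroutine regret bound and average over $X^*$, restrict to $Z_t^*$ with a bijective matching of removals (split into the $i<s$ and $i\ge s$ cases), invoke the concavity bounds to get the $C_1\,\mathrm{OPT} - C_2\, v_{i-1}$ recurrence, and finish with \Cref{lem:misc}. Your explicit flagging of the implicit requirement $M_{s,2}/(m_{2s}k)\le 1$ for \Cref{lem:misc} is a point the paper glosses over, but it does not change the argument.
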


\begin{proof}
    Since $f_t$ is $M_{s,2}$-smooth on $\Omega_{s,2}$, it holds that for any $a, a' \in V$ and $Z_t \subseteq V$ of size at most $s$,
    \begin{equation*}
        f_t(Z_t - a' + a) - f_t(Z_t) \ge \frac{1}{2M_{s,2}} \left( \nabla u_t(\bw_t^{(Z_t)}) \right)_a^2 - \frac{M_{s,2}}{2} \left( \bw_t^{(Z_t)} \right)_{a'}^2.
    \end{equation*}
    In addition, we have
    \begin{equation*}
        \frac{1}{2M_{s,2}} \| ( \nabla u_t (\bw^{(Z)}) )_{X \setminus Z} \|^2 - \frac{M_{s,2}}{2} \| (\bw^{(Z)})_{Z \setminus X} \|^2 \le \frac{m_{2s}}{M_{s,2}} f(X) - \frac{M_{s,2}}{m_{s2}} f(Z)
    \end{equation*}
    from the proof of \Cref{lem:replacement-greedy-marginal}.

    We provide a lower bound on the $i$th step marginal gain of the algorithm. Let $Z_t^*$ be an optimal sparse subset of $X^*$ for $f_t$, i.e., $Z_t^* \in \argmax_{Z \subseteq X^* \colon |Z| \le s} f_t(Z)$. If $i \le s$, then $|Z_t^{i-1}| < s$ holds for all $t$. Then we have
    \begin{align}
        \sum_{t = 1}^T \Delta_t (a_t^i, Z_{t}^{i-1})
        &= \sum_{t = 1}^T \left\{ f_t(Z_t^{i-1} + a_t^i) - f_t( Z_{t}^{i-1}) \right\}\nonumber \\
        &\ge \sum_{t = 1}^T \frac{1}{2M_{s,2}} \left( \nabla u_t (\bw_t^{(Z_t)}) \right)^2_{a_t^i} \nonumber \\
        &\ge \max_{a^i \in V} \sum_{t = 1}^T \frac{1}{2M_{s,2}} \left( \nabla u_t (\bw_t^{(Z_t)}) \right)^2_{a^i} - r_i \nonumber \\
        &\ge \frac{1}{k} \sum_{a \in X^*} \sum_{t = 1}^T \frac{1}{2M_{s,2}} \left( \nabla u_t (\bw_t^{(Z_t)}) \right)^2_{a} - r_i \nonumber \\
        &\ge \frac{1}{k} \sum_{t = 1}^T \sum_{a \in Z_t^* \setminus Z_t} \frac{1}{2M_{s,2}} \left( \nabla u_t (\bw_t^{(Z_t)}) \right)^2_{a} - r_i \nonumber \\
        &\ge \frac{1}{k} \sum_{t = 1}^T \frac{1}{2M_{s,2}} \left\| \nabla u_t (\bw_t^{(Z_t)}) \right\|^2_{Z^*_t \setminus Z_t} - r_i \nonumber \\
        &\ge \frac{1}{k} \sum_{t = 1}^T \left(\frac{m_{s,2}}{M_{s,2}} f_t(Z_i^*) - \frac{M_{s,2}}{m_{2s}} f_t(Z_{t}^{i-1}) \right) - r_i. \nonumber 
    \end{align}
    Otherwise, $|Z_t^{i-1}| = s$ holds for all $t$, therefore
    \begin{align}
        \sum_{t = 1}^T \Delta_t (a_t^i, Z_{t}^{i-1})
        &\ge \sum_{t = 1}^T \max \left\{ 0, \max_{a'_t \in Z_t} \left\{ f_t(Z_t - a'_t + a_t^i) - f_t(Z_t) \right\} \right\} \nonumber \\
        &\ge \sum_{t = 1}^T \max \left\{ 0, \frac{1}{2M_{s,2}} \left( \nabla u_t (\bw_t^{(Z_t)}) \right)^2_{a_t^i} - \min_{a'_t \in Z_t} \frac{M_{s,2}}{2} \left( \bw_t^{(Z_t)} \right)^2_{a'_t} \right\} \nonumber \\
        &\ge \max_{a^i \in V} \sum_{t = 1}^T \max \left\{ 0, \frac{1}{2M_{s,2}} \left( \nabla u_t (\bw_t^{(Z_t)}) \right)^2_{a^i} - \min_{a'_t \in Z_t} \frac{M_{s,2}}{2} \left( \bw_t^{(Z_t)} \right)^2_{a'_t} \right\} - r_i \nonumber \\
        &\ge \frac{1}{k} \sum_{a \in X^*} \sum_{t = 1}^T \max \left\{ 0, \frac{1}{2M_{s,2}} \left( \nabla u_t (\bw_t^{(Z_t)}) \right)^2_a- \min_{a'_t \in Z_t} \frac{M_{s,2}}{2} \left( \bw_t^{(Z_t)} \right)^2_{a'_t} \right\} - r_i \nonumber\\
        &\ge \frac{1}{k} \sum_{t = 1}^T \sum_{a \in Z_t^* \setminus Z_t} \left\{ \frac{1}{2M_{s,2}} \left( \nabla u_t (\bw_t^{(Z_t)}) \right)^2_a- \frac{M_{s,2}}{2} \left( \bw_t^{(Z_t)} \right)^2_{\pi_t(a)} \right\} - r_i \nonumber \\
        &\ge \frac{1}{k} \sum_{t = 1}^T \left\{ \frac{1}{2M_{s,2}} \left\| ( \nabla u_t (\bw_t^{(Z_t)}) )_{Z^*_t \setminus Z_t} \right\|^2 - \frac{M_{s,2}}{2} \left\| (\bw_t^{(Z_t)})_{Z_t \setminus Z_t^*} \right\|^2 \right\} - r_i \nonumber \\
        &\ge \frac{1}{k} \sum_{t = 1}^T \left(\frac{m_{2s}}{M_{s,2}} f_t(Z_i^*) - \frac{M_{s,2}}{m_{2s}} f_t(Z_{t}^{i-1}) \right) - r_i. \label{eq:replacement-omp-mwu-bound}
    \end{align}
    where a map $\pi_t \colon Z^*_t \setminus Z_t \to Z_t \setminus Z_t^*$ is an arbitrary bijection for each $t$.

    Combining with \Cref{lem:misc}, we obtain the theorem.
\end{proof}

\begin{algorithm}
    \caption{Online \ReplacementGreedy{} \& Online \ReplacementOMP}\label{alg:online-replacement}
\begin{algorithmic}[1]
	\STATE Initialize online greedy selection subroutines $\caA^i$ for $i = 1, \dots, k$.
    \FOR{$t= 1, \dots, T$}
        \STATE Initialize $X_t^0 \gets \emptyset$ and $Z_t^0 \gets \emptyset$ for all $t \in [T]$.
    \FOR{$i = 1, \dots, k$}
    	\STATE Pick $a_t^i \in V$ according to $\caA^i$.
        \STATE Set $X_t^i \gets X_t^{i-1} + a_t^i$.
    \ENDFOR
   	\STATE Play $X^k_t$ and observe $\by_t$.
    \FOR{$i = 1, \dots, k$}
        \STATE To the subroutine $\caA_i$, feed the gain of $a$ defined as
            \begin{itemize}
                \item $\Delta_t(a, Z_{t}^{i-1})$ \hfill (Online \ReplacementGreedy) \\
                \item $
                \begin{cases}
                    \displaystyle \frac{1}{M_{s,2}} \left(\nabla u_t\left(\bw_t^{(Z_t)}\right)\right)^2_a & \text{ if $i \leq s$, } \\
                    \displaystyle \max \left\{0, \frac{1}{M_{s,2}} \left( \nabla u_t(\bw_t^{(Z^{i-1}_t)})\right)^2_a - M_{s,2} \min_{a'_t \in Z^{i-1}_t} \left( \bw_t^{(Z^{i-1}_t)} \right)^2_{a_t'} \right\} & \text{otherwise}
                \end{cases}
                $
                \\\hfill(Online \ReplacementOMP)
            \end{itemize}
        \STATE Do the optimal replacement of $Z_t^{i-1}$ with respect to $a_t^i$ that achieves the above gain for \ReplacementGreedy{} or \ReplacementOMP{}, and obtain $Z_t^i$.
    \ENDFOR
    \ENDFOR
\end{algorithmic}
\end{algorithm}

\section{Experiments on dimensionality reduced data}\label{sec:further}
In this section, we conduct experiments on the task called \emph{image restoration}.
In this task, we are given an incomplete image, that is, a portion of its pixels are missing. 
First, we divide this incomplete image into small patches of $8 \times 8$ pixels.
Then we regard each of these patches as a data point $\by_t$, and aim to select a dictionary that yields a sparse representation of these patches.
In the procedure of the algorithms, the loss is evaluated only on the given pixels.
Finally, we restore the original image by replacing each patch with a sparse approximation using the selected dictionaries, and the loss is evaluated on the whole pixels.

First we conduct experiments with synthetic datasets to investigate the behavior of the algorithms. For each of the training and test datasets, we generate a bit mask such that each value takes $0$ or $1$ with equal probability. We give the masked training dataset to the algorithms and let them learn a dictionary. With this dictionary, we create the sparse representation of each data point in the test dataset with only unmasked elements and evaluate its residual variance with the whole elements. \Cref{fig:offline_synthetic_inpainting_T100_time} and \ref{fig:offline_synthetic_inpainting_T100_error} are the results for smaller datasets of $T = 100$, and \Cref{fig:offline_synthetic_inpainting_T1000_time} and \ref{fig:offline_synthetic_inpainting_T1000_error} are the results for larger datasets of $T = 1000$. In both experiments, we can see the relationship of the algorithms' performance is similar to the one in the non-masked settings, \Cref{fig:offline_synthetic_T100_time}, \ref{fig:offline_synthetic_T100_error}, \ref{fig:offline_synthetic_T1000_time}, and \ref{fig:offline_synthetic_T1000_error}.

\begin{figure*}
\centering
\subfigure[$T = 100$, time]{
	\includegraphics[width=0.4\textwidth]{./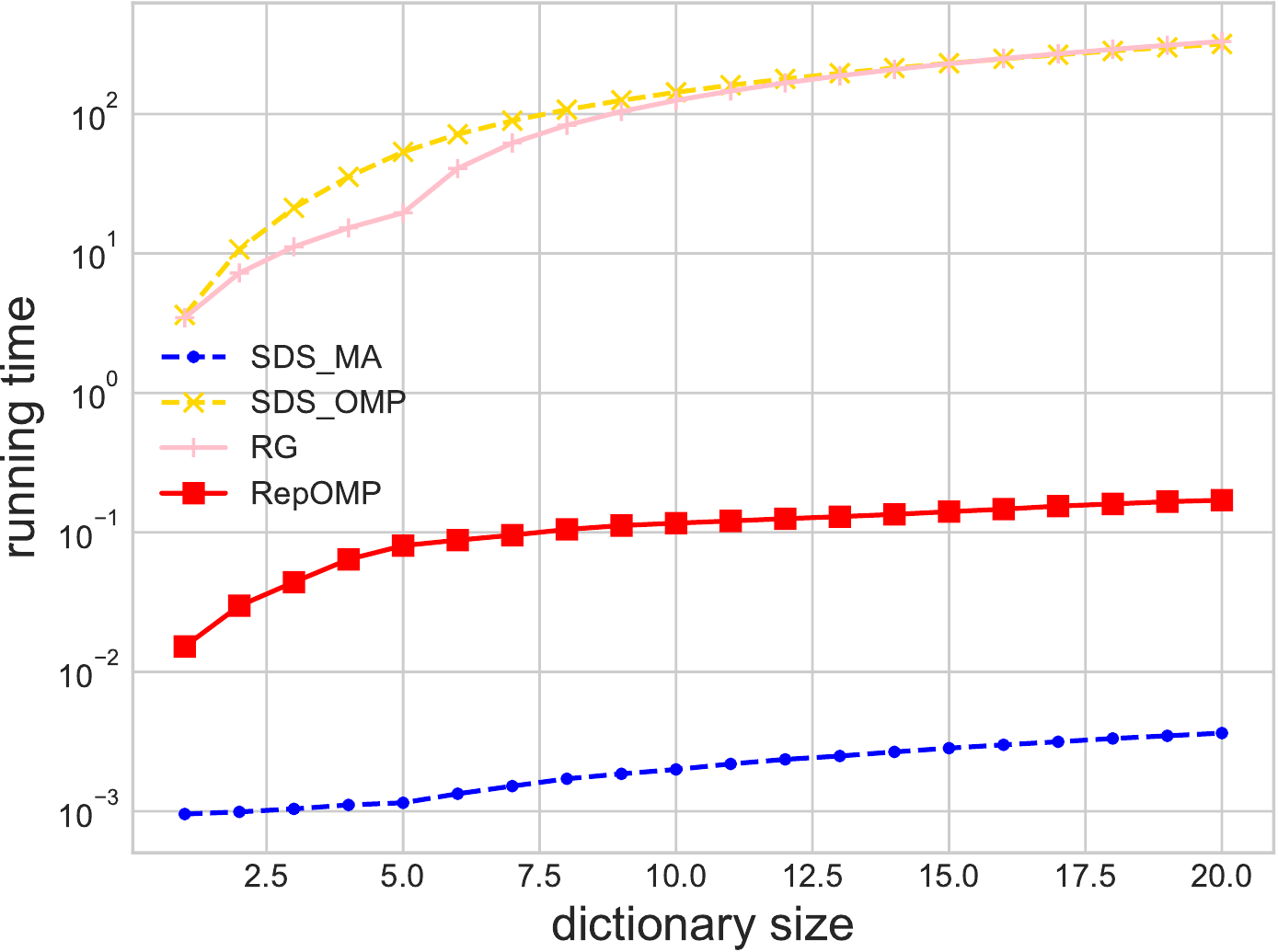}\label{fig:offline_synthetic_inpainting_T100_time}
}
\subfigure[$T = 100$, residual]{
	\includegraphics[width=0.4\textwidth]{./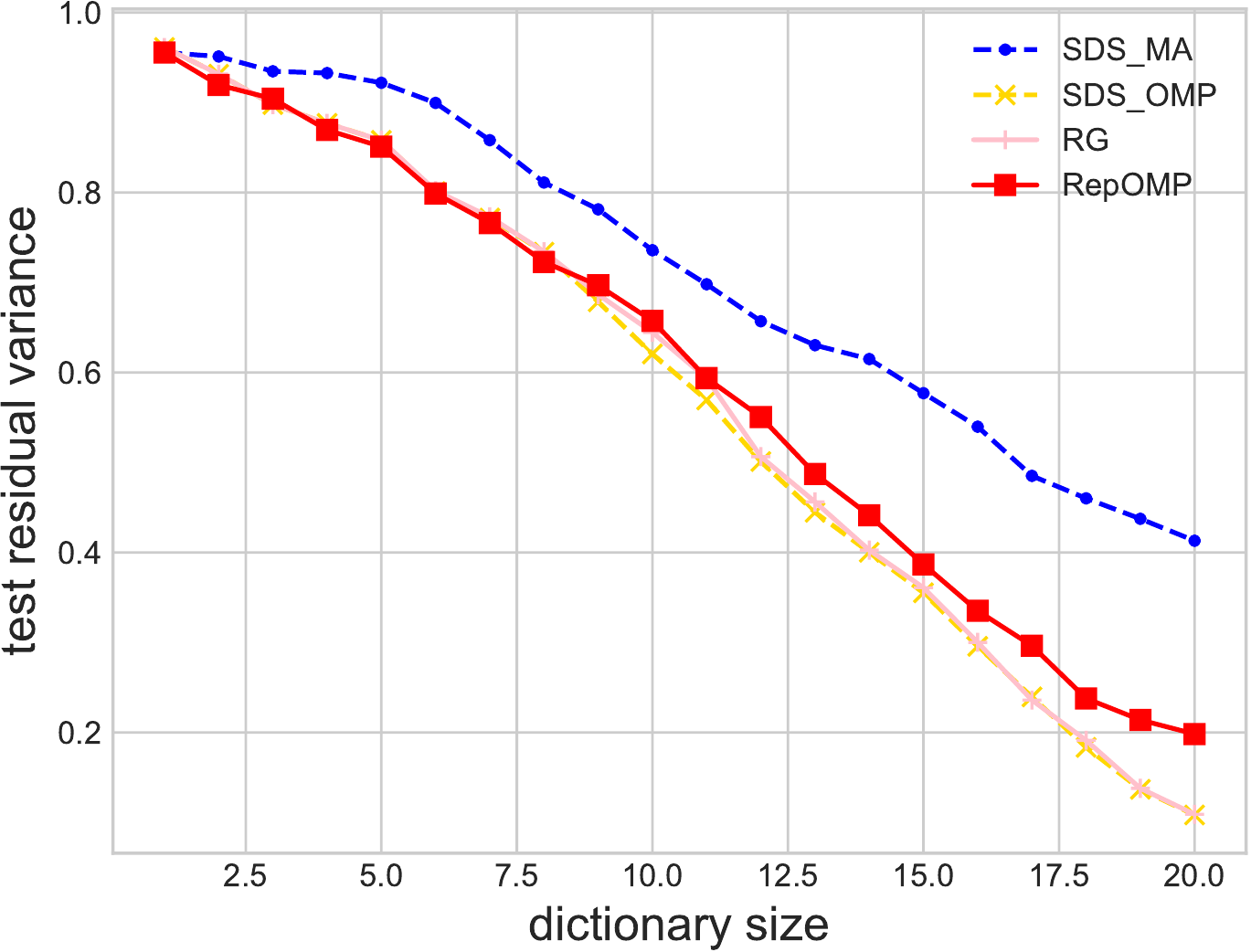}\label{fig:offline_synthetic_inpainting_T100_error}
}

\subfigure[$T = 1000$, time]{
	\includegraphics[width=0.4\textwidth]{./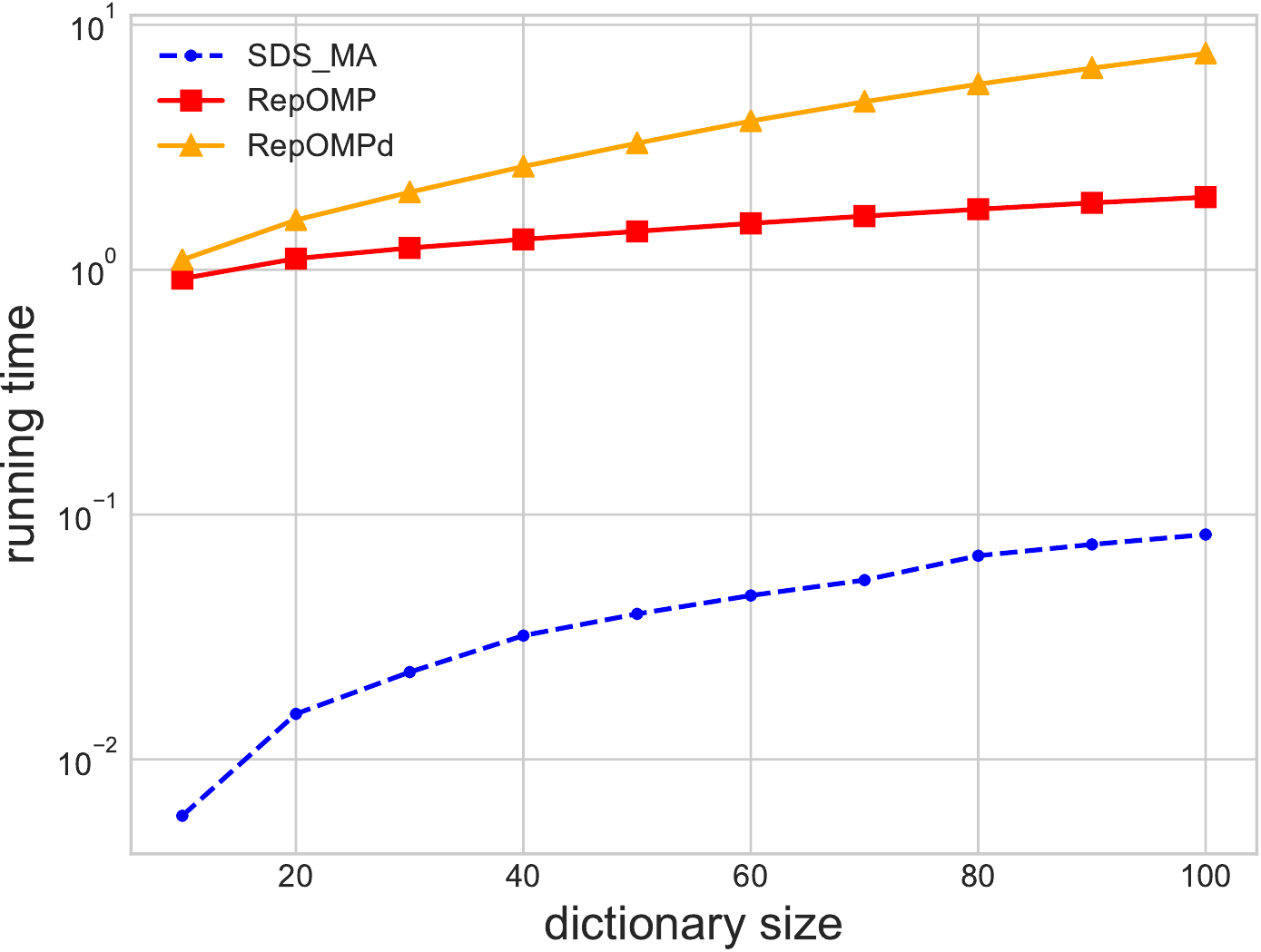}\label{fig:offline_synthetic_inpainting_T1000_time}
}
\subfigure[$T = 1000$, residual]{
	\includegraphics[width=0.4\textwidth]{./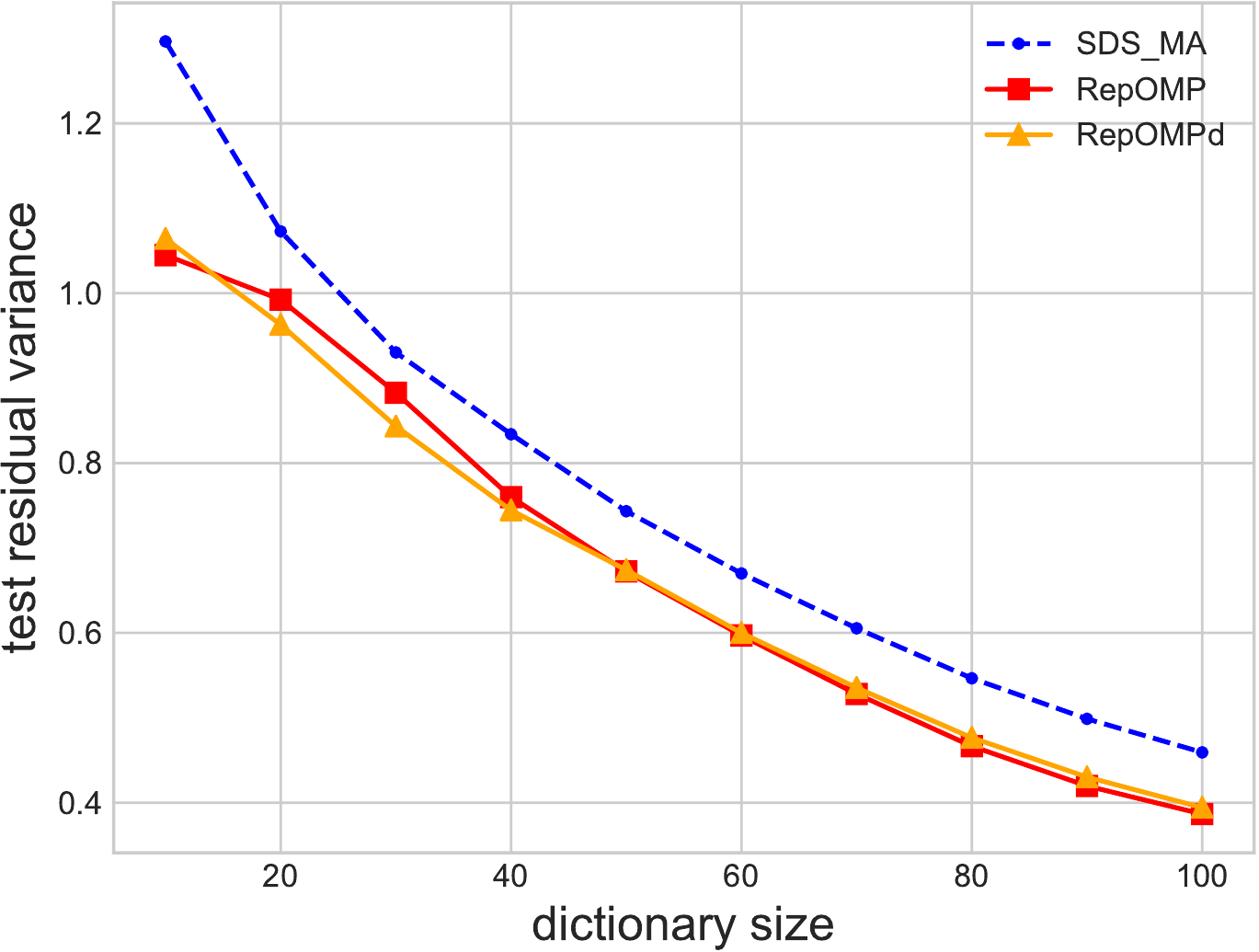}\label{fig:offline_synthetic_inpainting_T1000_error}
}
\caption{The experimental results for dimensionality reduced synthetic datasets. In all figures, the horizontal axis indicates the size of the output dictionary. 
    \subref{fig:offline_synthetic_inpainting_T100_time} and \subref{fig:offline_synthetic_inpainting_T100_error} are the results for $T = 100$. \subref{fig:offline_synthetic_inpainting_T1000_time} and \subref{fig:offline_synthetic_inpainting_T1000_error} are the results for $T = 1000$. For each setting, we give the plot of the running time and the test residual variance.}
\end{figure*}

In order to illustrate the advantage of the average sparsity to ordinary dictionary learning (the individual sparsity), we give image restoration examples with real-world images.
We use \ReplacementOMP{} for both the individual sparsity and the average sparsity.
With setting $s_t = s$ for all $t \in [T]$, the parameters $k$, $s$, and $s'$ are determined with the grid search.
We apply \ReplacementOMP{} to incomplete images and obtain a dictionary.
Then with this dictionary, we repeatedly compute the sparse representation of patches in the input image while shifting a single pixel. OMP is used for obtaining the sparse representation. When calculating the coefficients of the sparse representation of each patch, we use only the observed pixels and restore the whole pixels with these coefficients.
We take the median value of all the restored patches for each pixel.
In \Cref{fig:image}, the input image, the image restored with the individual sparsity, and the image restored with the individual sparsity are shown with PSNR ratios.
The method with the average sparsity obtains higher PSNR ratios than one with the individual sparsity for all the images.
\begin{figure*}[h]
\centering
\subfigure[Input]{
	\includegraphics[width=0.25\textwidth]{./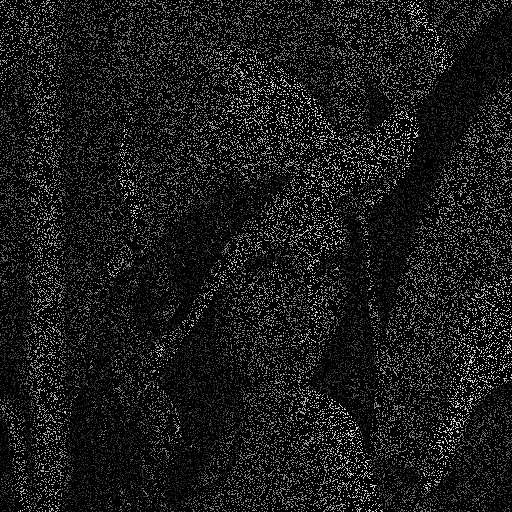}
}
\subfigure[individual, 34.42dB]{
	\includegraphics[width=0.25\textwidth]{./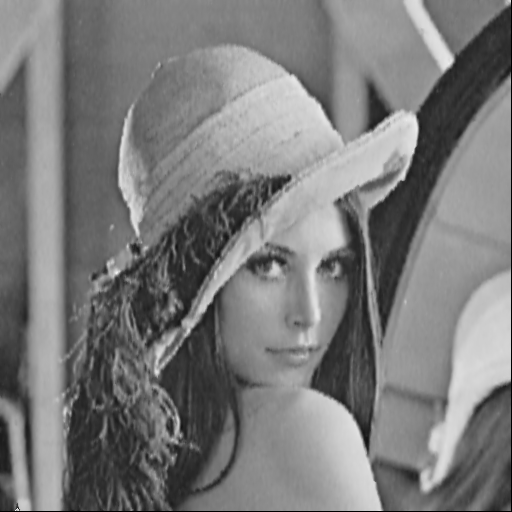}
}
\subfigure[average, 34.62dB]{
	\includegraphics[width=0.25\textwidth]{./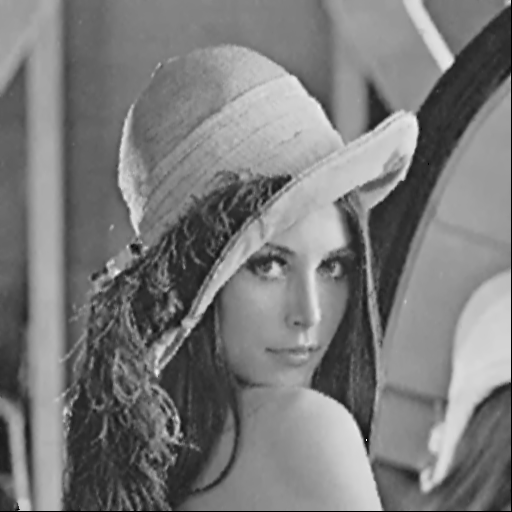}
}

\subfigure[Input]{
	\includegraphics[width=0.25\textwidth]{./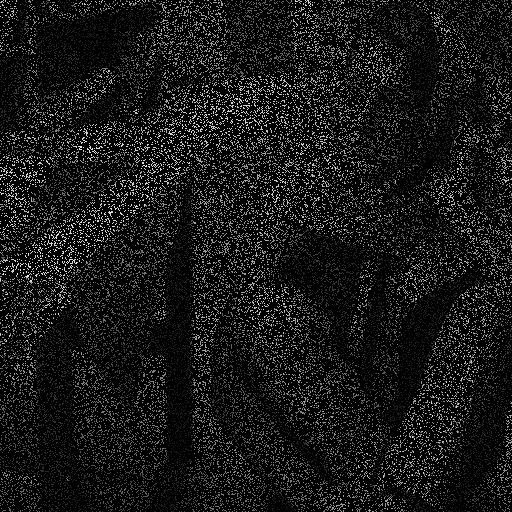}
}
\subfigure[individual, 32.18dB]{
	\includegraphics[width=0.25\textwidth]{./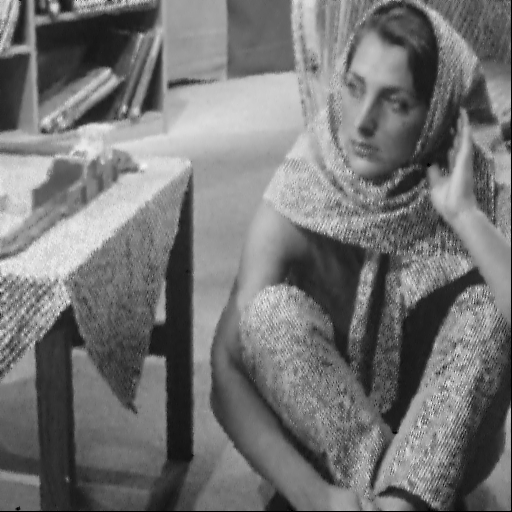}
}
\subfigure[average, 32.25dB]{
	\includegraphics[width=0.25\textwidth]{./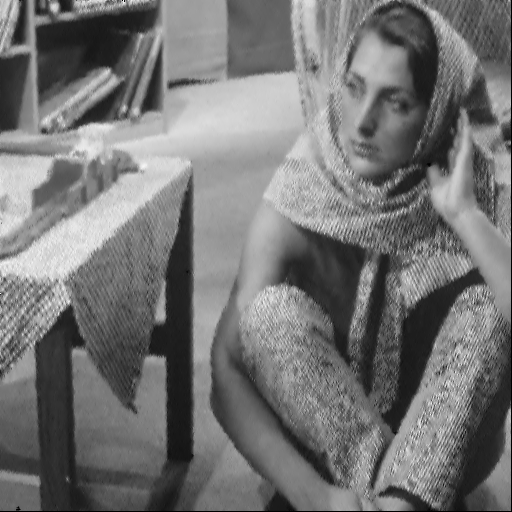}
}

\subfigure[Input]{
	\includegraphics[width=0.25\textwidth]{./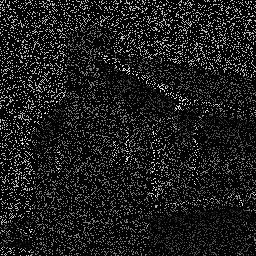}
}
\subfigure[individual, 33.94dB]{
	\includegraphics[width=0.25\textwidth]{./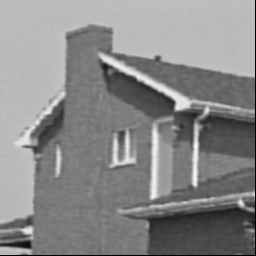}
}
\subfigure[average, 34.14dB]{
	\includegraphics[width=0.25\textwidth]{./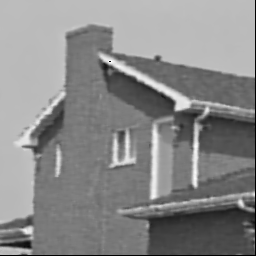}
}

\subfigure[Input]{
	\includegraphics[width=0.25\textwidth]{./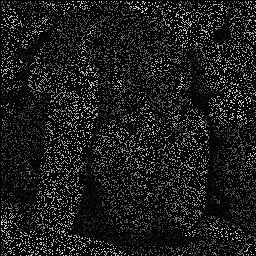}
}
\subfigure[individual, 33.16dB]{
	\includegraphics[width=0.25\textwidth]{./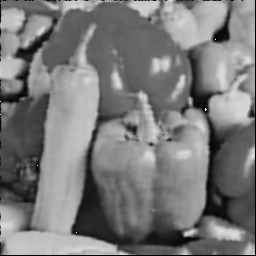}
}
\subfigure[average, 33.40dB]{
	\includegraphics[width=0.25\textwidth]{./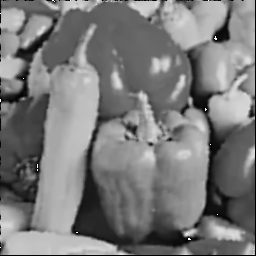}
}
\caption{The results of the image restoration experiment from images with 80\% of pixels missing.}\label{fig:image}
\end{figure*}


\end{document}